\newcommand{\beps}{\boldsymbol{\varepsilon}}
\newcommand{\eps}{{\varepsilon}}
\theoremstyle{plain}
\newtheorem{prop}{Proposition}
\newtheorem{lemma}{Lemma}
\newtheorem{obs}{Observation}
\definecolor{Gray}{gray}{0.80}
\newcolumntype{a}{>{\columncolor{Gray}}c}
\definecolor{rev_color}{rgb}{0,0,0}
\begin{document}
%
\title{Stopping Criterion Design for Recursive Bayesian Classification:\\
Analysis and Decision Geometry}
%
%
%
%

\author{Aziz~Ko\c{c}anao\u{g}ullar\i,
        Murat Akcakaya,
        Deniz Erdo\u{g}mu\c{s}
\IEEEcompsocitemizethanks{\IEEEcompsocthanksitem Aziz Ko\c{c}anao\u{g}ullar\i \ and Deniz Erdo\u{g}mu\c{s} were with Northeastern University Department of Electrical and Computer Engineering 409 Dana Research Center 360 Huntington Avenue Boston, MA 02115.

E-mail: akocanaogullari@ece.neu.edu, erdogmus@ece.neu.edu
\IEEEcompsocthanksitem Murat Akcakaya was with Pittsburg University Department of Electrical and Computer Engineering 1238 Benedum Hall Pittsburgh, PA 15261.

E-mail:akcakaya@pitt.edu}
\thanks{This work is supported by NSF (IIS-1149570, CNS-1544895, IIS-1715858, IIS-1717654, IIS-1844885, IIS-1915083), DHHS (90RE5017-02-01), and NIH (R01DC009834).}
\thanks{\scriptsize{This work is accepted in IEEE Transactions on Pattern Analysis and Machine Intelligence (DOI: 10.1109/TPAMI.2021.3075915).

\noindent
© 20XX IEEE.  Personal use of this material is permitted.  Permission from IEEE must be obtained for all other uses, in any current or future media, including reprinting/republishing this material for advertising or promotional purposes, creating new collective works, for resale or redistribution to servers or lists, or reuse of any copyrighted component of this work in other works.}}}

\IEEEtitleabstractindextext{%
\begin{abstract}
Systems that are based on recursive Bayesian updates for classification limit the cost of evidence collection through certain stopping/termination criteria and accordingly enforce decision making. Conventionally, two termination criteria based on pre-defined thresholds over (i) the maximum of the state posterior distribution; and (ii) the state posterior uncertainty are commonly used. In this paper, we propose a geometric interpretation over the state posterior progression and accordingly we provide a point-by-point analysis over the disadvantages of using such conventional termination criteria. For example, through the proposed geometric interpretation we show that confidence thresholds defined over maximum of the state posteriors suffer from stiffness that results in unnecessary evidence collection whereas uncertainty based thresholding methods are fragile to number of categories and terminate prematurely if some state candidates are already discovered to be unfavorable. Moreover, both types of termination methods neglect the evolution of posterior updates. We then propose a new stopping/termination criterion with a geometrical insight to overcome the limitations of these conventional methods and provide a comparison in terms of decision accuracy and speed. We validate our claims using simulations and using real experimental data obtained through a brain computer interfaced typing system.
\end{abstract}

\begin{IEEEkeywords}
Active Learning, Sequential Decision Making, Recursive Bayesian Classification, Optimal Stopping Criterion Design
\end{IEEEkeywords}}

\maketitle

\IEEEdisplaynontitleabstractindextext

%
\IEEEpeerreviewmaketitle


\section{Introduction}
Recursive Bayesian inference for classification (RBC) is beneficial in gradually increasing decision quality by incorporating more evidence into the decision process in scenarios where data or evidence is acquired sequentially over time. The most recent belief, represented by the latest label posterior probability distribution, is obtained by incorporating new evidence in a Bayesian manner at each update step~\cite{duda2012pattern,van2013detection,sarkka2013bayesian,alaa2016balancing}. The trade-off between decision confidence and evidence acquisition cost is controlled by a stopping criterion that controls when to terminate evidence collection and return an estimated class label. Fundamental components of RBC include:  
$(S)$ A stopping criterion based on the posterior probability to stop evidence collection; $(Q)$ a querying step to decide how to collect further evidence from relevant sources to benefit speed and accuracy objectives of RBC; $(C)$ a classification objective based on the posterior distribution and loss values attributed to each true label and decision option pair to determine the optimal decision once the stopping criterion has been satisfied. The iterative process can be summarized as follows:
    \vspace{-0.07in}
    \begin{equation*}
            \textbf{while } (S) \text{ not satisfied} \lbrace \textbf{do } (Q) \rbrace;
            \textbf{return } \text{state with } (C) 
            \vspace{-0.07in}
    \end{equation*} 
In this paper, we focus on designing stopping criteria for $(S)$ based on the latest label posterior {\color{rev_color}distribution} $p=[p_1,p_2,\cdots,p_n]$, which is a categorical probability distribution. In the remainder of this paper, in order to keep the illustrations and derivations simple, we assume that the objective in (C) is to minimize probability of error (therefore 0-1 loss is assumed), thus when a decision is made, it will be based on the maximum a posteriori (MAP) classification rule; that is the then-most-likely-label in the latest posterior will be selected as the decision. The presented approach can be generalized to the more general expected loss minimization classification setting {\color{rev_color}\cite{vapnik1992principles,lugosi1995nonparametric}}, but is outside the scope of this paper. We also do not discuss different querying strategies {\color{rev_color}\cite{higger2017recursive,tong2001support,wilson2012bayesian,tsiligkaridis2014collaborative}}, as they too are outside the scope.

{
\begin{table*}[!t]
\begin{center}
\begin{tabular}{c c c c c}
\multicolumn{4}{c}{Stopping Region: $S_R:= \lbrace p| \mathcal{S}_C(p) = true \rbrace$} \vspace{0.05cm} \\
\hline
Identifier &Root &Stopping Criterion $(\mathcal{S}_C)$ &Reference\\ 
(M1) &Confidence Level & $\max_{\sigma}p(\sigma\lvert\mathcal{H}_s)>\tau$ &\cite{jha2009bayesian}\\
(M2) &Renyi Entropy ($\alpha=2$) & $H_\alpha(p(\sigma\lvert\mathcal{H}_s)) < c_{H_2}$ &\cite{sason2017arimoto}\\ 
(M3) &Shannon's Entropy & $H(p(\sigma\lvert\mathcal{H}_s)) < c_{H}$ &\cite{geisser2017predictive}\\ 
(M4) &Renyi Entropy ($\alpha=0.2$) & $H_\alpha(p(\sigma\lvert\mathcal{H}_s)) < c_{H_{0.2}}$ &\cite{sason2017arimoto}\\ 
(M5) &Kullback Liebler Div. & $\delta_{\text{KL}}(p(\sigma\lvert\mathcal{H}_s), p(\sigma\lvert\mathcal{H}_{s-1})) <c_{\delta_{\text{KL}}}$ &\cite{golovin2010near} \vspace{0.05cm}\\ 
\hline
\end{tabular}
\end{center}
\caption{Conventional stopping criteria, here $c$s $\in\mathbb{R}$ denote the limits of the criteria. These methods are further used in the experiments section with the respective identifiers as a baseline to the proposed perspective of the stopping method. Limiting criterion $\mathcal{S}_C$ defines a set with all distributions $p$ that satisfy the condition. In recursive estimation, if the posterior probability becomes an element of the set, the system terminates to make an inference/classification.}
\label{tab:competitor_methods_table}
\vspace{-0.6cm}
\end{table*}}

The RBC procedure will terminate when the stopping criterion in (S) (namely $(\mathcal{S}_C$) is met by the current label posterior distribution ($\mathcal{S}_C(p)=\text{\emph{true}}$). The most commonly used approach is to require that a confidence threshold has been exceeded ($\mathcal{S}_C(p)=\text{\emph{true} if }\max_i p_i > \tau$). All label probability distributions $p$ that satisfy $\mathcal{S}_C$ form a set called the stopping region $\mathcal{S}_R = \lbrace p | \mathcal{S}_C(p)=\text{\emph{true}}\rbrace$. If the posterior distribution falls into this set, the system terminates and a classification-decision is made based on (C). 
{\color{rev_color}Uncertainty termination rule is another canonical stopping decision that utilizes classification uncertainty (e.g. $\mathcal{S}_C(p)=\text{\emph{true} if } \ H_\alpha(p) < c_{H_\alpha}, \text{ where } H_\alpha(.) \text{ is Renyi entropy} )$. Disfavoring candidate categories (assigning small probability masses in the latest posterior) will yield lower classification uncertainty. However, this is independent of the classification confidence level and hence can sacrifice accuracy. In this paper, we analyze $\mathcal{S}_C$ with the aid of analytical representations that arise from the geometry of the probability simplex. A probability simplex for three categories ($\{ a,b,c \}$) is illustrated in Fig.~\ref{fig:simplex_repr}. Confidence and entropy decision boundaries are marked in the illustration with (M1) and (M3) respectively. Especially, to achieve high accuracy in classification, a high confidence threshold is usually preferred. However, based on the prior information if the true class is one of the least favored, confidence on true class increases gradually and hence through RBC the system sacrifices speed. This is illustrated in Fig.~\ref{fig:simplex_repr}-(a) where the prior probability starts away from the corner of interest and gradually gets closer to $a$. Uncertainty based approaches are more flexible in termination but sensitive to number of classes. Considering drawbacks of conventional stopping criteria, it is preferred to optimize stopping in different RBC scenarios. }
The design of an optimal stopping region ($\mathcal{S}_R$, see Table~\ref{tab:competitor_methods_table}) has been referred to as "\textit{optimal stopping criterion design}"~\cite{shiryaev2007optimal}.
{\color{rev_color} Optimum $\mathcal{S}_R$ design has many practical applications. For example, in any resource constrained state estimation or recursive classification setting, stopping criteria could be preferred to be optimized to minimize time requirements of the system with a marginal expense in estimation confidence/accuracy. Some specific example applications include but not limited to, radar tracking with potential adaptive waveform design and radar subset selection or optimum radar placement  \cite{irci2010study,charlish2017cognitive,xiang2019target},  brain-computer interfaced typing systems \cite{marghi2019history}, text recognition from video stream \cite{bulatov2019optimal}.}

{\color{rev_color}{In this paper we propose a stopping criterion building on the strengths of the existing conventional criteria but improving their shortcomings. That is, the method we propose is robust to potential decrease in accuracy while promoting early stopping.} Motivated by problem geometry and RBC mechanics, we propose a stopping criteria function $S_C(p)$ that yields a bent-in stopping boundary towards the uniform distribution ($u_n$ in Fig.\ref{fig:simplex_repr}-(d)) as illustrated by the blue curves in Fig.\ref{fig:simplex_repr}-(c). Unlike conventional methods, this enables early stopping decisions especially in the absence of a strong contender. When there exists a strong contender (posterior moving near the edges of simplex) it does not sacrifice accuracy.} 

\subsection*{Problem Formulation and Related Work}
\label{subsec:problem_formulation}

Let class label $\sigma$ be an element of a finite set $\mathcal{A}$. We refer to each iteration in RBC as a \textit{sequence}, indexed by $s\in \mathbb{N}$. Each sequence may include a set of measurements acquired in response to queries $\Phi_s \triangleq \{\phi^{1}_{s}, \dots, \phi^{N}_{s}\}$, where $N$ denotes number of queries. These mesurements provide evidence (raw data or processed features) $\beps_s \triangleq \{\eps^{1}_{s}, \dots, \eps^{N}_{s}\}$ conditioned on the queries, and the true label. RBC posterior updates use this evidence. For notation simplicity, we use $\mathcal{H}_{s} \triangleq \{\beps_{1:s}, \Phi_{1:s}, \mathcal{H}_0\}$ to represent the combination of all evidence collected in sequences 1 to $s$, as well as the prior, $\mathcal{H}_0$.
%
%
\begin{wrapfigure}{r}{3cm}
\centering
\vspace{-0.51cm}
\includegraphics[width=1\linewidth]{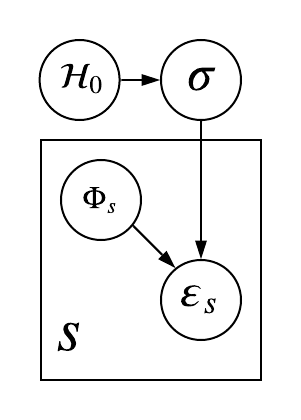}
\caption{\color{rev_color}Proposed probabilistic graphical model representing the $s^{th}$
sequence for a RBC.}
\vspace{-0.3cm}
\label{fig:graphicalModel}
\end{wrapfigure}
{\color{rev_color} The relationship between RBC variables is illustrated in Fig.~\ref{fig:graphicalModel}. Through this paper we assume evidences conditioned on state and query tuples are independent of each other $p(\beps_i,\beps_j | \sigma, \Phi_i,\Phi_j) = p(\beps_i\lvert \sigma, \Phi_i)p(\beps_j\lvert \sigma, \Phi_j) \ \forall i\neq j$. There are no limiting assumptions that are set on prior information $p(\sigma\lvert \mathcal{H}_0)$. The importance of the prior information in the RBC paradigm is that as prior probability is the initial confidence assessment of the estimation, it determines the evolution of the estimation as it will be discussed in Sec.\ref{subsec:posterior_motion}.}
RBC steps (assuming MAP classification rule) are {\color{rev_color}presented in the following}:

\begin{equation}
    \label{eq:convRBE}
    \begin{array}{ll}
    (S): & p(\sigma| \mathcal{H}_s) \in (\mathcal{S}_R:=\lbrace{p | \mathcal{S}_C(p)=\text{\emph{true}}}\rbrace) \\[.05in]
    (C): &\hat{\sigma} =  \displaystyle{\arg\max_{\sigma\in\mathcal{A}}} \  p(\sigma| \mathcal{H}_s) \\
    (Q): &\Phi_{s+1}\rightarrow \varepsilon_{s+1}~\text{with the anticipated joint}\\
    &p(\sigma\lvert\mathcal{H}_{s+1}) = p(\sigma\lvert\mathcal{H}_{s})\frac{p(\varepsilon_{s+1} \lvert \sigma, \Phi_{s+1})}{p(\varepsilon_s| \Phi_{s+1})} 
    \end{array}
\end{equation}
\vspace{-0.15in}

\begin{figure*}[t!]
      \centering
      \subfigure[]{\includegraphics[width=.48\columnwidth]{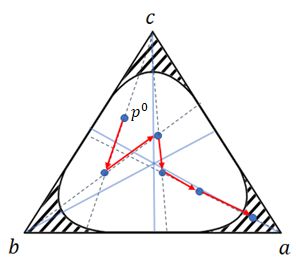}}
        \subfigure[]{\includegraphics[width=.48\columnwidth]{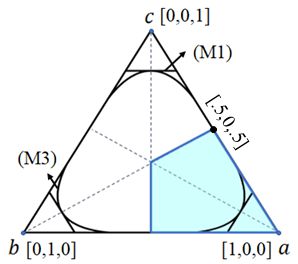}}
        \subfigure[]{\includegraphics[width=.48\columnwidth]{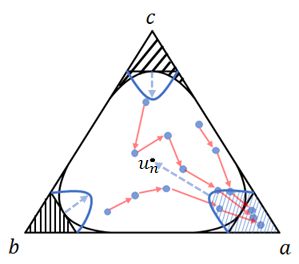}}
        \subfigure[]{\includegraphics[width=.52\columnwidth]{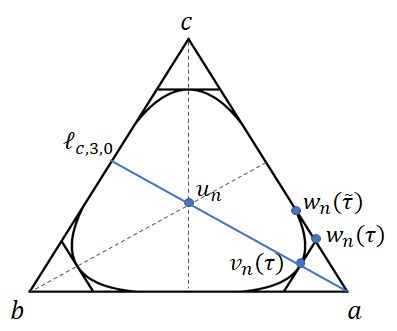}}
        \caption{Decision geometry for recursive classification where true class is $a$. (a) The initialization of the system with a prior probability ($p^0$) and $(S)$ region. The system terminates once the posterior yields within the $(S)$ region (dashed). (b) {\color{rev_color}Stopping boundaries for (M1) and (M3) are visualized. Highlighted region is the decision region for '$a$' enforced by $(C)$.} (c) Three different perspectives in $S_R$ design {\color{rev_color}(entropy:diagonal-stripes/top, confidence:vertical-stripes/left, proposed:blue-stripes/right)}, we want to tilt the {\color{rev_color} confidence} boundary {\color{rev_color} towards $u_n$ as illustrated with dashed arrows} to achieve early stopping. (d) The origin of the simplex $u_n$ and special points defined in \eqref{eq:special_probs_in_simplex}. Equi-entropy contours and corresponding posterior threshold lines are in solid black. The intersection point is at $v_n$ (See Observation~\ref{obs:intersection_entropy_conf}).}
        \label{fig:simplex_repr}
        \vspace{-0.5cm}
\end{figure*}
{In this paper we focus on $(S)$.} {\color{rev_color} Specificially, we discuss the limitations of the conventional criteria summarized in Table~\ref{tab:competitor_methods_table}, and propose a new family of stopping criteria, then illustrate theoretically, and with numerical examples and experiments, the benefits of the proposed approach.}
The most common $\mathcal{S}_C$ is thresholding the highest value in the latest posterior by $\tau$ : $\mathcal{S}_C(p)=\text{\emph{true} if }\max_i p_i > \tau$ (M1) \cite{lai1997optimal,jha2009bayesian}.
This rule directly enforces that a decision is made with a pre-specified confidence level, and does not consider the distribution of probability mass among other options.
{To minimize false alarm rate, the threshold is usually set to a large number.
{\color{rev_color}This prevents early stopping; and to achieve high accuracy, furthermore leads to}
redundant evidence collection \cite{lai2001sequential}.
{\color{rev_color}On the other hand, information theoretic objectives utilize more than a single category in termination decision. Unlike confidence based approaches, uncertainty based stopping allows early stopping if the system uncertainty is below a desired threshold where the confidence is not yet high enough for confidence stopping.}}
Uncertainty measures, including Shannon entropy~\cite{shannon1948mathematical}, can be used to have termination based on the spread of posterior probability mass. Golovin~\cite{golovin2010near} uses Shannon entropy as a stopping criterion: "$\mathcal{S}_C(p)=\text{\emph{true} if } H(p)<c_{H}$" (M3). Yingzhen~\cite{li2016renyi} and Igal~\cite{sason2017arimoto} use Renyi entropy: "$\mathcal{S}_C(p)=\text{\emph{true} if } \ H_\alpha(p) < c_{H_\alpha}$". Uncertainty based stopping criteria based on Renyi entropy $H_\alpha(p)$ family with $\alpha \geq 0$ includes, as special cases, Shannon entropy (in the limit as $\alpha \rightarrow 1$), and confidence thresholding (in the limit as $\alpha \rightarrow \infty$) methods (M2), (M4). 
{\color{rev_color}It is possible to terminate if the changes in posterior distribution between sequences become smaller over iterations}
, Weinshall~\cite{weinshall2009beyond} and Geisser~\cite{geisser2017predictive} apply a threshold to the Kullback-Leibler (KL) divergence between two consecutive posterior distributions to terminate evidence collection:  "$\mathcal{S}_C(p)=\text{\emph{true} if } D_{KL}(p_s) \lvert \lvert p_{s-1})<c_{D_{KL}}$",
{\color{rev_color} therefore terminate RBC when the distance between two consecutive classification posteriors becomes small (M5).}
Pavlichin~\cite{pavlichin2016chained} proposes a chained-KL divergence to monitor posterior progression in the probability simplex. Banerjee~\cite{banerjee2005clustering} proposes using Bregman divergences in the context of clustering. All of these stopping criteria focus on how far the posterior is from the uniform distribution that is at the center of the probability simplex, as opposed to assessing how close the posterior to a vertex of the simplex (corresponding to a one-hot distribution, see Fig.~\ref{fig:simplex_repr}-(b))~\cite{pawlowsky2001geometric}.

\subsection*{Contributions}
We introduce a geometrical representation for recursive Bayesian classification {\color{rev_color}in Sec.~\ref{sec:Problem-geo}}{. \color{rev_color}In Sec.~\ref{sec:PreviousWork}} we show that: (i) uncertainty based methods are sensitive to the number of possible classes, and (ii) conventional approaches ignore the posterior update trajectory in the recursive classification task. Using (i) and (ii) together, we  show that the stopping regions defined based on uncertainty methods diverge from the region formed by confidence threshold defined over the posterior distribution resulting in decrease in classification accuracy. 
{\color{rev_color}In Sec.~\ref{sec:Method}, we propose a stopping criterion approach which utilizes the strengths of conventional stopping criteria but robust to their drawbacks. Our approach is also inline with the progression of the posterior update (blue curves in Fig.\ref{fig:simplex_repr}-(c)) to achieve this balance between speed and accuracy compared to other approaches. Throughout Sec.~\ref{sec:Experiments} first with synthetic experiments we show that our approach allows RBC to stop earlier where there is no close competitor. This results in faster decision cycles without sacrificing accuracy. Therefore, our approach maintains a high classification accuracy where uncertainty based methods stop early with a significant accuracy loss. We also include results from a brain-computer interfaced typing system in which the user is trying to type a letter with electroencephalogram (EEG) responses to stimuli. Due to observation noise, multiple EEG acquisitions are required. Accurate Bayesian classification of the target letter relies on prior information provided by a language model and recursive evidence collection. The language model and the EEG characteristics of the user determines the path to classification in simplex. We show that with the proposed method, it is possible to commit to a letter sooner with marginal accuracy loss.}

Detailed demonstrations and proofs of the analytical results are provided in the appendix Sec.~\ref{Sec:Proofs} for neat presentation.

%

\section{Problem geometry}
\label{sec:Problem-geo}

To promote the representative power of visualization, in this paper we use information geometric representation of recursive classification problem \cite{amari2007methods}. As will be discussed, this representation allows us to represent progression of the posterior probability distribution $p(\sigma \lvert \mathcal{H}_s)$ as the evidence collection steps $s$ increases and to introduce a mean for analytical reasoning for stopping criterion $\mathcal{S}_C$ design. We use simplex as a domain for probability distributions;
\begin{equation}
    \label{eq:SimplexRepr}
    \Delta_n = \lbrace (p_1,p_2,\cdots,p_n)\in\mathbb{R}^n \lvert p_i>0 \ \forall i, \sum_i p_i = 1  \rbrace
\end{equation}
Here $\Delta_n$ represents the set that includes categorical probability distributions, where the simplex is an $n-1$ dimensional geometrical object in $\mathbb{R}^n$. We visualize an example simplex for a category with 3 elements in Fig.~\ref{fig:simplex_repr}. To preserve neat visualization, we interpret our reasoning using the triangle throughout the paper, but all the results can be generalized to n-dimensional case without loss of generality. Moreover, we use the following addition operation that allows us to represent posterior updates;

\textit{Addition:} Given $p,q\in\Delta_n$ the addition operation is defined as:
    \begin{equation}
    \label{eq:simplex_add}
    \begin{split}
        p \oplus q = \frac{[p_1q_1,p_2q_2,\cdots,p_nq_n]}{\sum_i p_iq_i}
        \end{split}
    \end{equation}
Observe that addition satisfies the Bayes' theorem: $p(\sigma \lvert \varepsilon) = p(\sigma) \oplus p(\varepsilon\lvert\sigma)$ which allows us to represent the evolution of the posterior distribution using the addition operation within the simplex. For algebra to work $p(\varepsilon\lvert \sigma)$ does not need to be unit $\ell_1$ norm and hence $\not\in\Delta_n$, but what we interpret here is the normalized vector over $\sigma$ even algebraically $\oplus$ is applicable. Rigorous definition of this simplification is shown in the appendix Sec.~\ref{sec:how_algebra_works}.The zero element in $\Delta_n$ is denoted by $u_n$ being the uniform distribution as shown in Fig.~\ref{fig:simplex_repr}-(d).

\begin{prop}[\cite{aitchison1982statistical}] \label{prop:SimplexVSpace} 
Simplex $\lbrace \Delta_n \lvert \oplus,\otimes \rbrace$ forms a vector space.
\end{prop}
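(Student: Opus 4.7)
The plan is to verify the eight vector-space axioms for $(\Delta_n, \oplus, \otimes)$, and the cleanest way to do so is to transport them from a known vector space rather than grind through the normalization denominators in \eqref{eq:simplex_add}. The natural candidate for a transport map is the centered log-ratio transformation $\phi : \Delta_n \to H$ defined by $\phi(p)_i = \log p_i - \tfrac{1}{n}\sum_{j=1}^{n} \log p_j$, whose codomain $H = \{x \in \mathbb{R}^n : \sum_i x_i = 0\}$ is an $(n-1)$-dimensional linear subspace of $\mathbb{R}^n$ and therefore is already a vector space under ordinary addition and scaling.

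First I would establish the two homomorphism identities $\phi(p \oplus q) = \phi(p) + \phi(q)$ and $\phi(\alpha \otimes p) = \alpha\,\phi(p)$, taking the implicit definition $\alpha \otimes p = [p_1^\alpha, \ldots, p_n^\alpha] / \sum_i p_i^\alpha$ that is standard in Aitchison geometry. The verification rests on a single observation: when $\log$ is applied to the defining formulas, the normalization terms $\log\bigl(\sum_j p_j q_j\bigr)$ and $\log\bigl(\sum_j p_j^\alpha\bigr)$ contribute the same constant to every coordinate, so they are annihilated by the mean-centering step in $\phi$. Once these two identities are in place, commutativity, associativity, distributivity over vector addition, distributivity over scalar addition, associativity of scaling, and the identity $1 \otimes p = p$ all pull back from the corresponding axioms in $H$.

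For the remaining structural elements, the additive identity is $u_n = [1/n, \ldots, 1/n]$: one checks either that $\phi(u_n) = 0$, or directly that $u_n \oplus p = p$ by cancelling the factor $1/n$ between numerator and denominator in \eqref{eq:simplex_add}. The additive inverse of $p$ is $\ominus p = [1/p_1, \ldots, 1/p_n]/\sum_j (1/p_j)$, for which $p \oplus (\ominus p) = u_n$ by inspection, equivalently $\phi(\ominus p) = -\phi(p)$.

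The main obstacle, and the one step worth carrying out with care, is verifying that $\phi$ is a bijection so that the axioms transport faithfully. Injectivity requires noting that two elements of $\Delta_n$ with the same image under $\phi$ must differ coordinate-wise by a common multiplicative constant, which is then forced to be $1$ by the sum-to-one constraint. Surjectivity onto $H$ follows from exhibiting the explicit inverse $\phi^{-1}(x) = [e^{x_1}, \ldots, e^{x_n}]/\sum_j e^{x_j}$ (the softmax) and checking $\phi \circ \phi^{-1} = \mathrm{id}_H$. Together, bijectivity plus the two homomorphism identities yield the stronger statement that $(\Delta_n, \oplus, \otimes) \cong (\mathbb{R}^{n-1}, +, \cdot)$ as vector spaces, of which Proposition~\ref{prop:SimplexVSpace} is an immediate corollary.
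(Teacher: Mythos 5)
Your proof is correct, but it takes a genuinely different route from the paper's. The paper verifies the axioms directly from the definitions of $\oplus$ and $\otimes$: it checks additive and scalar closure, commutativity, additive associativity, the zero element $u_n$, and the identity $1\otimes p = p$ by explicit manipulation of the normalized formulas, and then declares the remaining axioms (additive inverses, associativity of scaling, both distributivity laws) to be "easily shown" and omits them. You instead transport the entire structure through the centered log-ratio map $\phi$ onto the zero-sum hyperplane $H$, reducing everything to two homomorphism identities plus bijectivity. Your approach buys a complete proof with no omitted cases --- the four axioms the paper skips are exactly the ones that are tedious to verify directly but come for free once $\phi(p\oplus q)=\phi(p)+\phi(q)$ and $\phi(\alpha\otimes p)=\alpha\phi(p)$ are established --- and it yields the stronger conclusion $(\Delta_n,\oplus,\otimes)\cong(\mathbb{R}^{n-1},+,\cdot)$, which is in fact the content of the Aitchison reference the proposition cites and which the paper implicitly leans on elsewhere (e.g.\ in the projection arguments of Lemma~\ref{lemma:projection_to_center}). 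The paper's direct computation is more elementary and keeps the reader inside the simplex, which matches its expository goals. One small point worth making explicit in your write-up: closure (that $p\oplus q$ and $\alpha\otimes p$ actually land in $\Delta_n$) should be stated before the transport argument is invoked, since $\phi$ must be applied to these elements; it is immediate from the definitions because the numerators are positive and the denominators are their positive sums, but it is logically prior to the isomorphism.
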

In the presence of a prior distribution, recursive classification starts with a distribution probably different than uniform distribution as presented in Fig.~\ref{fig:simplex_repr}-(a). To make a correct decision, the posterior for respective element is required to be the most likely (e.g. the region for $a$ is highlighted in Fig.~\ref{fig:simplex_repr}-(b)). Through evidence collections, the posterior distribution moves within the simplex visualized in Fig.~\ref{fig:simplex_repr}-(c) that is algebraically denoted with $\oplus$ above. $(S)$ criterion forms the decision lines within the simplex and the inequality condition covers the area of evidence collection termination that are visualized by dashed areas in Fig.~\ref{fig:simplex_repr}. Once the posterior probability distribution reaches that area, the system terminates evidence collection, makes an inference outputs the classification decision.

\section{Conventional Stopping Criteria}

\label{sec:PreviousWork}

In this section we discuss the limitations of the conventional methods that are used for  stopping criteria $\mathcal{S}_{C}$ design. Consider the following 2 motivating examples assuming 10 class classification problems;

\textit{example1}: $p,q\in\Delta_{10}$ 
{\color{rev_color}(i.e. 10 dimensional probability mass functions) }
such that $p=[.6,.4-8\epsilon,\epsilon,\cdots]$ where $0<\epsilon<<1$ and $q=[.7,.0\bar{3},\cdots]$. It is apparent that $q$ is a better stopping point to decide on the $1^\text{st}$ element as the classification result. However, if Shannon's entropy is used to measure the distance from uniform distribution $u_n$, and accordingly utilized as a stopping criterion, then  $0.97\approx H(p)<H(q)\approx 1.82$. This means that the evidence collection would have been terminated when $p$ is reached. Hence, the uncertainty based $(S)$ suffers from not taking the probability mass index of interest into consideration.

{\color{rev_color}\textit{example2}: $p,q,x,y\in\Delta_{10}$ with $p=[.5,.5-8\epsilon,\epsilon,\cdots]$, $q=[.6,.4-8\epsilon,\epsilon,\cdots]$ where $0<\epsilon<<1$ and $x=[.5,0.0\bar{5},\cdots]$, $y=[.6,0.0\bar{4},\cdots]$ and we compare $p$ with $q$ then $x$ with $y$. Assume that $p$ evolved into $q$ and $x$ evolved into $y$ through Bayesian recursions. Therefore, the propagation from p to q, and x to y, correspond to different paths on the probability simplex. Specifically, from p to q, the path is close to the edges (See Fig.~\ref{fig:simplex_repr}-(b) for 3 class example with [0.5,0,0.5]), and the path from x to y is close to the center. Observe that the confidence level for the true class (the $1^\text{st}$ class) increased the same amount $(.1)$ for both of these cases. However for $p$ and $q$ the $2^\text{nd}$ best class is still a close competitor having almost all the remaining probability mass. Whereas for $x$ and $y$ there is no close competitor. Therefore, even for these two cases the confidence levels for the true class is the same, one can imply that the in case 2 ($x$, $y$) we are still confident there exists no strong competitor. This means that the location of posterior distributions in $\Delta_n$ matters and with an appropriate design of stopping criterion, the system evidence collection could be stopped when there exists no close competitor.}

In this section we analytically show what these examples mean for the stopping criterion $(\mathcal{S}_C)$ design.

\subsection{Frail Confidence}
\label{subsec:frail_confidence}
The confidence behavior of uncertainty based methods are similar. Shannon entropy on the other hand is the most commonly used and hence in this section we specifically focus on (M3), as defined in Table.1.  We refer the reader to the appendix Sec.~\ref{sec:why_entropy_is_enough} for decision boundary similarities between (M2)-(M3) and (M4). To point out the relationship between the confidence in classification and  entropy based stopping region $S_R(H(.)):= \lbrace p | H(p)<c_H \rbrace$ (M3) we define two special probability points in $\Delta_n$;
\begin{equation}
\label{eq:special_probs_in_simplex}
    \begin{split}
    v_n(\tau) = \left[\tau, \frac{1-\tau}{n-1},\cdots,\frac{1-\tau}{n-1}\right]\\
    w_n(\tau) = \left[\tau, 1-\tau,0,\cdots, 0\right]
    \end{split}
\end{equation}
Here, $v_n(\tau)$ corresponds to distributions where one class has likelihood of $\tau$ and the others share the remaining probability uniformly, $w_n(\tau)$ corresponds to distributions where only two class exist with probability values $\tau, 1-\tau$. These definitions are also visualized in Fig.~\ref{fig:simplex_repr}-(d).

In this paper, we bundle the uncertainty and confidence $\tau$ via $\tau' = H(v_n(\tau))$. We show the following weak points: $H(.)$ is sensitive to $n$ and $\tau$, and $H(.)$ conflicts with the inference step $(C)$ of Eqn. (1) for a given set of parameters. To analyze these, we introduce the relationship between confidence and entropy $\mathcal{S}_C$s with the help of \eqref{eq:special_probs_in_simplex}. We state for a confidence $\tau$ that entropy achieves its maximum value at the point $v_n(\tau)$ and for an equi-entropy contour with value of $H(v_n(\tau))$ maximum achievable confidence is $\tau$. See the following proposition. 
\begin{figure}[t]
      \centering
      \subfigure[]{\includegraphics[width=.48\columnwidth]{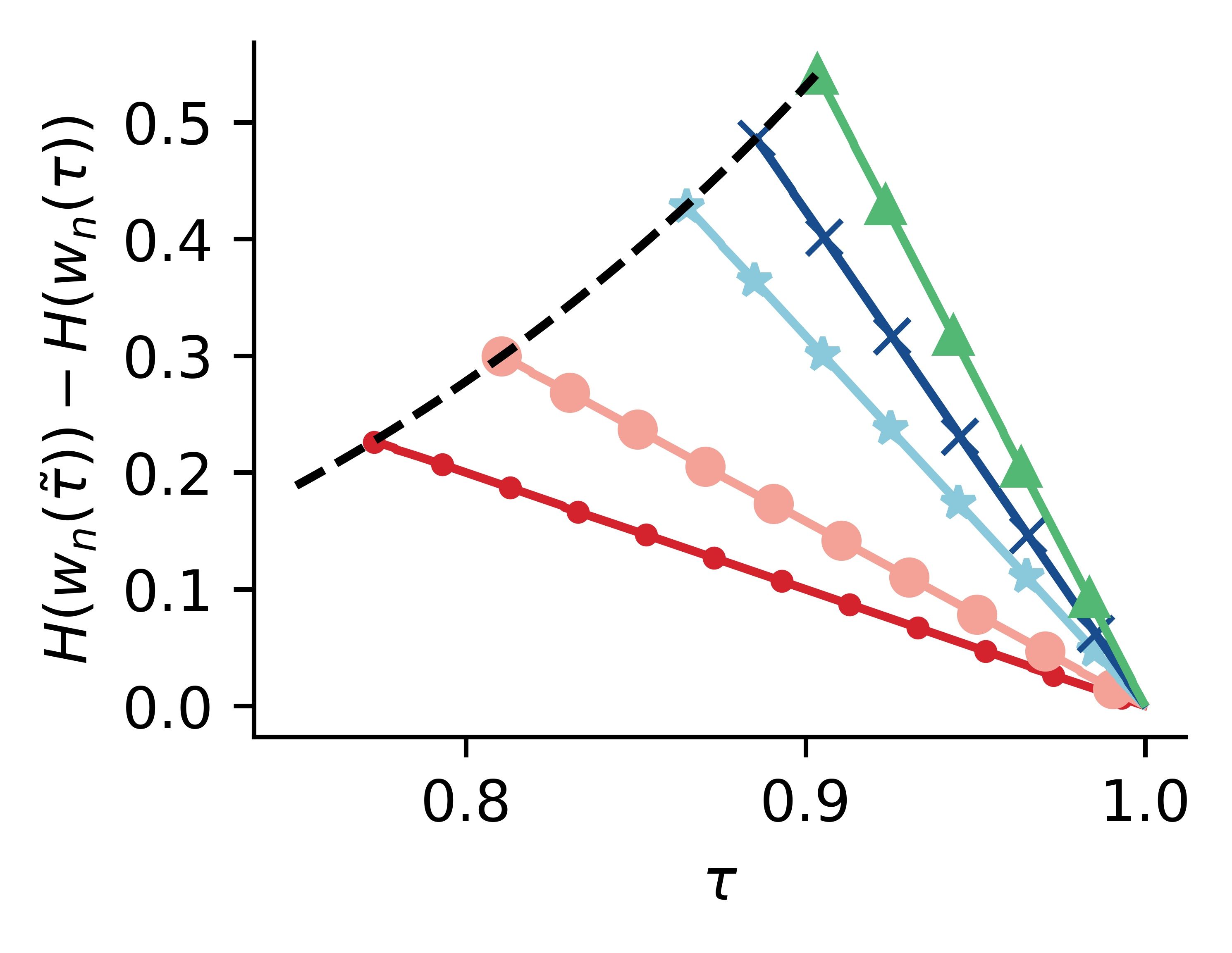}}
      \subfigure[]{\includegraphics[width=.48\columnwidth]{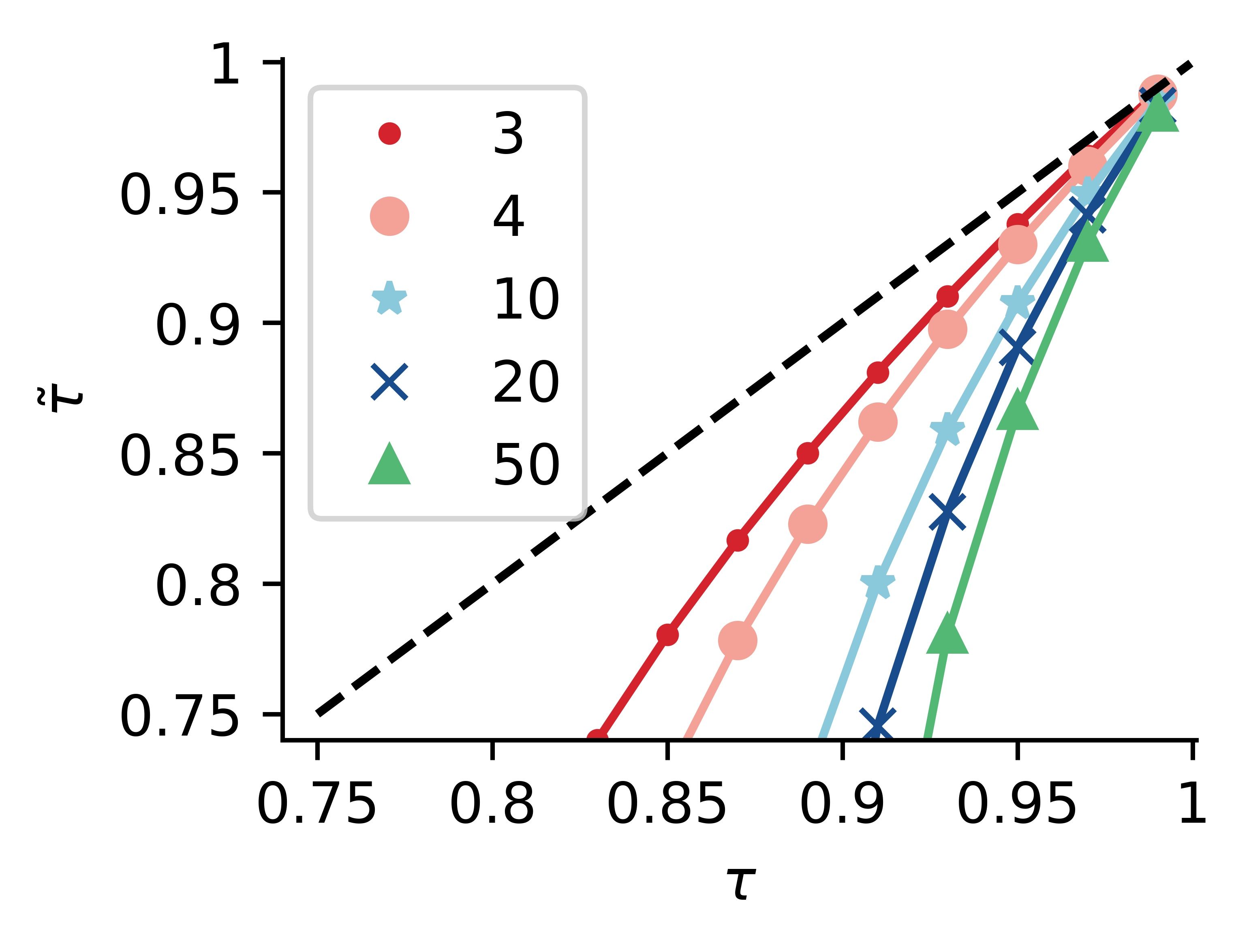}}
      \vspace{-0.4cm}
      \caption{Effect of number of possible categories (different color/line codes) on the decision boundary. (a) represents the difference between entropy values of the probabilities $v_n,w_n$ for a given confidence level $\tau$. (b) represents the $\tau$ and $\tilde{\tau}$ values where $H(w_n(\tilde{\tau})) = H(v_n(\tau))$.}
    \label{fig:effects_cardinality}
    \vspace{-0.3cm}
\end{figure}
\begin{prop}
    \label{prop:max_uncertainty}
    For a defined confidence level $\tau$;
    \begin{equation}
    \begin{split}
        &C_\tau = \lbrace p|\max_i p_i = \tau \rbrace \implies \max_{p\in C_\tau} H(p)= H(v_n(\tau))   \\
        &S_\tau = \lbrace p|H(p) = H(v_n(\tau)) \rbrace \implies \max_{ p \in S_\tau}\max_{i} p_i=\tau   \\
    \end{split}
    \end{equation}
\end{prop}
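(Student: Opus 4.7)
My plan is to prove the two implications separately, handling the first by a direct constrained maximization of Shannon entropy on the simplex and then reducing the second to the first via a monotonicity argument on the scalar function $\tau \mapsto H(v_n(\tau))$.

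For the first implication, I would fix $\tau \in [1/n, 1]$, observe that by symmetry of $H$ under coordinate permutations we may place the coordinate attaining the maximum in the first position, and then write the problem as
\begin{equation*}
\max \; -\tau \log \tau - \sum_{i=2}^{n} p_i \log p_i \quad \text{s.t.} \quad \sum_{i=2}^{n} p_i = 1-\tau, \ \ 0 \le p_i \le \tau.
\end{equation*}
The $-\tau \log \tau$ term is a constant, so the problem reduces to maximizing the entropy of an $(n-1)$-dimensional categorical distribution with total mass $1-\tau$. By strict concavity of $-x\log x$ and Lagrange multipliers (or equivalently Jensen's inequality applied to $-x\log x$), the unconstrained optimum assigns $p_i = (1-\tau)/(n-1)$ uniformly. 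The auxiliary box constraint $p_i \le \tau$ is automatically satisfied since $\tau \ge 1/n$ gives $(1-\tau)/(n-1) \le \tau$. This optimizer is exactly $v_n(\tau)$, establishing $\max_{p \in C_\tau} H(p) = H(v_n(\tau))$.

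For the second implication, let $p^\star$ attain $\max_{p \in S_\tau} \max_i p_i$ and set $\tau^\star = \max_i p^\star_i$. Then $p^\star \in C_{\tau^\star}$, so the first implication yields $H(v_n(\tau)) = H(p^\star) \le H(v_n(\tau^\star))$. The key auxiliary step is to show that $\tau \mapsto H(v_n(\tau))$ is strictly decreasing on $[1/n, 1]$, which I would do by computing
\begin{equation*}
\frac{d}{d\tau}\, H(v_n(\tau)) \;=\; \log\!\Bigl(\tfrac{1-\tau}{(n-1)\tau}\Bigr),
\end{equation*}
which is strictly negative for $\tau > 1/n$. Combined with $H(v_n(\tau)) \le H(v_n(\tau^\star))$ this forces $\tau^\star \le \tau$; conversely, since $v_n(\tau) \in S_\tau$ and $\max_i v_n(\tau)_i = \tau$, we have $\tau^\star \ge \tau$. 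Hence $\tau^\star = \tau$.

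The main obstacle I anticipate is bookkeeping around the boundary constraint $p_i \le \tau$ in the first part: one must verify that the symmetric solution does not violate it, which requires the observation that $\tau$ being the maximum forces $\tau \ge 1/n$. After that is in place, the two implications are tightly coupled and the second essentially follows for free from the first plus the monotonicity of $H(v_n(\cdot))$, so no additional technical machinery should be needed.
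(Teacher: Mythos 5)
Your proof is correct. The first implication is handled exactly as in the paper: fix the maximal coordinate at $\tau$ and apply the maximum-entropy principle (strict concavity of $-x\log x$) to spread the residual mass $1-\tau$ uniformly, with your explicit check that $(1-\tau)/(n-1)\le\tau$ whenever $\tau\ge 1/n$ being a welcome piece of bookkeeping the paper leaves implicit. For the second implication you diverge from the paper in a genuinely useful way. The paper reruns a maximum-entropy argument inside the level set: it writes $p_1\log p_1 = -H(v_n(\tau)) - \sum_{i\ge 2}p_i\log p_i$, invokes monotonicity of $x\log x$ on $(1/e,\infty)$ to trade maximizing $p_1$ for maximizing $p_1\log p_1$, concludes the optimizer has the form $v_n(p_1)$, and then reads off $p_1=\tau$ from $H(v_n(p_1))=H(v_n(\tau))$ --- a final step that silently relies on injectivity of $\tau\mapsto H(v_n(\tau))$, and an intermediate step that silently assumes $p_1>1/e$. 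Your route instead deduces the second implication from the first: $H(v_n(\tau))=H(p^\star)\le H(v_n(\tau^\star))$ plus the explicit computation $\frac{d}{d\tau}H(v_n(\tau))=\log\bigl(\frac{1-\tau}{(n-1)\tau}\bigr)<0$ for $\tau>1/n$ forces $\tau^\star\le\tau$, with the reverse inequality witnessed by $v_n(\tau)\in S_\tau$. This buys you a cleaner logical dependency (the two halves are explicitly coupled through one monotonicity lemma), makes the injectivity the paper needs an explicit and verified fact, and avoids the $p_1>1/e$ caveat entirely; the paper's version, in exchange, exhibits the structure of \emph{all} near-optimal points in the level set rather than just bounding the maximum coordinate.
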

\begin{obs}
    \label{obs:intersection_entropy_conf}
    \emph{Following descriptions in Proposition \ref{prop:max_uncertainty}, confidence line for $\tau$, $C_{\tau}$,  intersects with the equi-entropy contours, $S_\tau$, only at $v_n(\tau)$ points.}
\end{obs}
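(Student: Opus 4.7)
The plan is to derive the observation as a two-direction set equality $C_\tau \cap S_\tau = \{v_n(\tau)\text{-type points}\}$ using Proposition~\ref{prop:max_uncertainty} together with a uniqueness argument for the maximizer.

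First I would dispatch the easy containment: any probability vector of the form $v_n(\tau)$ (for any choice of which coordinate carries the mass $\tau$) has maximum coordinate equal to $\tau$ and entropy equal to $H(v_n(\tau))$ by construction, hence lies in $C_\tau \cap S_\tau$. The content of the observation is the reverse inclusion.

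For the reverse inclusion, suppose $p \in C_\tau \cap S_\tau$, so $\max_i p_i = \tau$ and $H(p) = H(v_n(\tau))$. By the first part of Proposition~\ref{prop:max_uncertainty}, $H(p) \leq H(v_n(\tau))$ on $C_\tau$, so $p$ must be a maximizer of $H$ over $C_\tau$. I would then argue uniqueness (up to relabeling of coordinates): fix the index $k$ achieving $p_k = \tau$; on the convex slice
\begin{equation*}
C_\tau^{(k)} = \{q \in \Delta_n : q_k = \tau,\ q_j \leq \tau \text{ for all } j\neq k\},
\end{equation*}
the Shannon entropy, being a strictly concave function of $(q_j)_{j\neq k}$ constrained to the affine set $\sum_{j\neq k} q_j = 1-\tau$, admits a unique maximizer. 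A Lagrangian (or symmetry) argument places that maximizer at $q_j = (1-\tau)/(n-1)$ for $j\neq k$, which is precisely the permutation of $v_n(\tau)$ with the mass $\tau$ in position $k$. Since $C_\tau$ is the union over $k$ of the slices $C_\tau^{(k)}$, the full set of global maximizers is exactly the orbit of $v_n(\tau)$ under coordinate permutations, which is what the observation calls the ``$v_n(\tau)$ points.'' Hence $p$ must be one of these.

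The main obstacle, and the only thing going beyond a direct quotation of Proposition~\ref{prop:max_uncertainty}, is this uniqueness step: Proposition~\ref{prop:max_uncertainty} gives the value of the maximum but does not a priori rule out flat ridges of maximizers. Leveraging strict concavity of $x \mapsto -x\log x$ on each convex slice closes that gap cleanly, and the feasibility condition $\tau \geq 1/n$ (implicit whenever $v_n(\tau) \in \Delta_n$) is exactly what ensures $(1-\tau)/(n-1) \leq \tau$ so that the symmetric interior optimum is actually attained inside $C_\tau^{(k)}$ rather than on its boundary.
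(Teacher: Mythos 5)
Your proposal is correct and follows the same route the paper intends: the paper's own ``proof'' of this observation is simply the remark that it is trivial given Proposition~\ref{prop:max_uncertainty}, with all details omitted. You supply exactly the missing detail --- that the strict concavity of entropy on each slice $C_\tau^{(k)}$ forces the maximizer to be unique (up to coordinate permutation), which is what upgrades Proposition~\ref{prop:max_uncertainty}'s statement about the maximal \emph{value} into the ``only at $v_n(\tau)$'' claim --- so your write-up is a faithful and complete version of the argument the authors gesture at.
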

Following this observation, we state that the set between the corners of the simplex $\Delta_n$ and $C_\tau$ is a subset of the set between the corners and $S_\tau$ as presented in the following observation:
\begin{obs}
    \label{obs:increasing_feasible_set}
    \emph{
    Define $S_1=\lbrace p \lvert \ \max_i p_i \geq \tau \rbrace$ and $S_2= \lbrace p \ \lvert \ H(p)\leq\tau' \rbrace$,
    $\tau'=H(v_n(\tau))$, $\forall \tau \in [1/n,1]$ then $S_1\subset S_2$. Therefore the $(S)$ region designed by entropy $S_1$ is larger than the region designed with the confidence threshold $S_2$.}
\end{obs}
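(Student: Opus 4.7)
The plan is to chain Proposition~\ref{prop:max_uncertainty} with a one-variable monotonicity fact about the map $\tau \mapsto H(v_n(\tau))$. First I would take an arbitrary $p \in S_1$ and set $\tau^{\star} := \max_i p_i$, so $\tau^{\star} \geq \tau$. Applying Proposition~\ref{prop:max_uncertainty} at the confidence level $\tau^{\star}$ gives $H(p) \leq H(v_n(\tau^{\star}))$. It then suffices to show $H(v_n(\tau^{\star})) \leq H(v_n(\tau))$ whenever $\tau^{\star} \geq \tau \geq 1/n$, and the containment $p \in S_2$ follows at once.

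For the monotonicity step I would expand $H(v_n(\tau)) = -\tau \log \tau - (1-\tau)\log\tfrac{1-\tau}{n-1}$ and differentiate; a short calculation yields
\[
    \frac{d}{d\tau} H(v_n(\tau)) \;=\; \log\frac{1-\tau}{(n-1)\tau},
\]
which is non-positive exactly when $\tau \geq 1/n$. Hence on $[1/n,1]$ the map is non-increasing, so $H(v_n(\tau^{\star})) \leq H(v_n(\tau)) = \tau'$, and combining with the previous inequality gives $H(p) \leq \tau'$, i.e.\ $p \in S_2$. This establishes $S_1 \subseteq S_2$.

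To argue proper containment (when $n \geq 3$), I would use Observation~\ref{obs:intersection_entropy_conf}: on the confidence boundary $C_\tau = \{p : \max_i p_i = \tau\}$ the entropy attains its maximum value $\tau'$ only at (permutations of) $v_n(\tau)$, so any other boundary point $q \in C_\tau$ has $H(q) < \tau'$. A slight perturbation of such a $q$ that lowers its largest component just below $\tau$ still satisfies $H \leq \tau'$ by continuity of $H$, and therefore lies in $S_2 \setminus S_1$.

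The main obstacle is not a single hard computation but rather bookkeeping: one must verify that the monotonicity step genuinely uses the hypothesis $\tau \geq 1/n$ (for $\tau < 1/n$ the derivative flips sign, so $H(v_n(\tau))$ would be increasing and the containment would reverse), and one must check that the perturbation argument for strict inclusion keeps $H \leq \tau'$ while actually moving out of $S_1$. The essential geometric input---that the equi-entropy contour through $v_n(\tau)$ is tangent to, and interior-dominant over, the confidence level set $C_\tau$---has already been supplied by Proposition~\ref{prop:max_uncertainty} and Observation~\ref{obs:intersection_entropy_conf}, so nothing further needs to be proved from scratch.
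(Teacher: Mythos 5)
Your proof is correct and takes essentially the same route as the paper: both bound $H(p)\leq H(v_n(\max_i p_i))$ via Proposition~\ref{prop:max_uncertainty} and then invoke monotonicity of $\tau\mapsto H(v_n(\tau))$ on $[1/n,1]$ (the paper packages this as monotonicity of $f(u,a)=-H(v_n(u))$; your explicit derivative $\log\tfrac{1-\tau}{(n-1)\tau}$ is the same fact). The only divergence is in establishing strictness, where the paper exhibits a concrete counterexample ($n=5$, $p=[.4,.2,.2,0,0]$ with $H(p)\approx 1.52<\tau'=2.16$) while you give a perturbation argument from Observation~\ref{obs:intersection_entropy_conf}; yours is more general and correctly flags that proper containment requires $n\geq 3$ (for $n=2$ the two sets coincide, a caveat the paper's unconditional claim of $S_1\subset S_2$ glosses over), but both are valid.
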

It is apparent that the stopping region defined by entropy (M3) is larger than the stopping region defined through confidence level (M1) and hence enlarging the $\mathcal{R}_S$ in \eqref{eq:convRBE}. This increase in the region, on the other hand, decreases confidence. To analyze it we need to find the minimum confidence in an equi-entropy contour. It is shown that the minimum confidence is attained at $w_n(\tilde{\tau})$ that satisfies the following;
\begin{obs}\emph{\cite{ho2010interplay}}
\label{obs:weakest_entropy_guarantee}
    \begin{equation}
        \begin{split}
        &H(v_n(\tau))< 1, S_\tau = \lbrace p|H(p) = H(v_n(\tau))\rbrace\\ 
        &\hspace{0.2cm} \text{ then } \min_{ p \in S_\tau}\max_i p_i=\tilde{\tau}\\
        &\hspace{0.2cm}\text{ s.t. } -\tilde{\tau}\log_2(\tilde{\tau}) -(1-\tilde{\tau})\log_2(1-\tilde{\tau})= H(v_n(\tau))\\
        \end{split}
    \end{equation}
\end{obs}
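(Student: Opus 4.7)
The plan is to recast the claim as the constrained optimization problem of minimizing $\max_i p_i$ subject to $H(p) = H(v_n(\tau))$ and $p \in \Delta_n$, and to identify the minimizer as the two-point distribution $w_n(\tilde{\tau})$. First I would parameterize the feasible set by $\alpha := \max_i p_i$ and, for each such $\alpha$, characterize the range of entropy values attainable over $\{p \in \Delta_n : \max_i p_i = \alpha\}$. Proposition~\ref{prop:max_uncertainty} already pins down the upper endpoint of this range as $H(v_n(\alpha))$; the new ingredient needed is the lower endpoint.

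For the lower endpoint I would invoke Schur-concavity of Shannon entropy. If $\alpha \geq 1/2$, then any $q$ with $\max_i q_i = \alpha$ is majorized by $w_n(\alpha) = [\alpha, 1-\alpha, 0, \dots, 0]$: the top components agree, the top-two partial sum of $w_n(\alpha)$ equals $1$ and therefore dominates the corresponding partial sum of $q$, and the remaining majorization inequalities are automatic since $w_n(\alpha)$ has already saturated to mass $1$ after the second coordinate. Schur-concavity of $H$ (which follows from the strict concavity and symmetry of $-x\log x$) then yields $H(q) \geq H(w_n(\alpha)) = H_2(\alpha)$, where $H_2$ denotes the binary entropy. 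Consequently the achievable entropy range with $\max_i p_i = \alpha \geq 1/2$ is exactly the interval $[H_2(\alpha),\, H(v_n(\alpha))]$.

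Combining the endpoints, feasibility of a $p \in S_\tau$ with $\max_i p_i = \alpha$ forces $H_2(\alpha) \leq H(v_n(\tau))$. Since $H_2$ is continuous and strictly decreasing on $[1/2, 1]$ from $1$ to $0$, the hypothesis $H(v_n(\tau)) < 1 = H_2(1/2)$ together with the intermediate value theorem gives a unique $\tilde{\tau} \in (1/2, 1)$ with $H_2(\tilde{\tau}) = H(v_n(\tau))$, and the bound $H_2(\alpha) \leq H(v_n(\tau))$ rearranges to $\alpha \geq \tilde{\tau}$. The bound is sharp, realized by $p = w_n(\tilde{\tau})$ itself, which has $H(w_n(\tilde{\tau})) = H_2(\tilde{\tau}) = H(v_n(\tau))$ and maximum exactly $\tilde{\tau}$, so it lies in $S_\tau$ and attains the minimum.

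The main obstacle I anticipate is disposing cleanly of the regime $\alpha \in [1/n, 1/2)$: there $w_n(\alpha)$ fails to have maximum $\alpha$ (since $1-\alpha > \alpha$), and the entropy-minimizing distribution with cap $\alpha$ instead saturates as many coordinates as possible at $\alpha$ before placing the residual mass. A short direct computation of that layout shows the resulting minimum entropy is at least $H_2(1/2) = 1$ bit, so the hypothesis $H(v_n(\tau)) < 1$ excludes this regime and keeps $\tilde{\tau}$ unambiguously in $(1/2, 1)$, as the statement requires.
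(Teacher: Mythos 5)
Your proof is correct and reaches the paper's conclusion by the same underlying mechanism: the extremal point of an equi-entropy contour in the direction of lowest confidence is the two-point distribution $w_n(\tilde\tau)$ on the simplex boundary, which exists precisely because $H(v_n(\tau)) < 1$. The difference is one of self-containedness rather than of route. The paper's own proof is essentially two sentences: it observes that $H([0.5,0.5,0,\dots,0])=1$, concludes that the contour must meet the boundary of the simplex when $H(v_n(\tau))<1$, and then cites the reference for the identity that the entropy extremizer on a confidence line $C_\alpha$ is $w_n(\alpha)$ (the paper in fact writes $\arg\max_{p\in C_\tau}H(p)=w_n(\tau)$ where it must mean $\arg\min$, since Proposition~\ref{prop:max_uncertainty} already assigns the maximum to $v_n(\tau)$). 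You supply exactly the ingredient the paper outsources: the majorization relation $w_n(\alpha)\succ q$ for every $q$ with $\max_i q_i=\alpha\geq 1/2$, combined with Schur-concavity of $H$, which pins down the lower endpoint $H_2(\alpha)$ of the attainable entropy range on $C_\alpha$; and you explicitly dispose of the regime $\alpha<1/2$, which the paper handles only implicitly via the boundary-intersection remark. What your version buys is a verifiable, citation-free argument with a clean monotonicity-plus-IVT finish; what the paper's buys is brevity. One shared loose end, attributable to the statement rather than to your argument: under the paper's open-simplex convention ($p_i>0$ in \eqref{eq:SimplexRepr}), the minimizer $w_n(\tilde\tau)$ lies outside $\Delta_n$, so the minimum is strictly an infimum attained only in the closure.
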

This observation states that the entropy based $(S)$ boundary attains the minimum required max-probability value $\tilde{\tau}$ at $w_n(\tilde{\tau})$. $\tilde{\tau}$ vs $\tau$ and entropy values difference between $w_n(\tau)$ and $w_n(\tilde{\tau})$ are presented in Fig.~\ref{fig:effects_cardinality} for changing number of categories in the classification. Observe from Fig.~\ref{fig:effects_cardinality} that entropy is fragile with respect to the number of classes and the difference between $\tilde{\tau}$ and $\tau$  increases by decreasing values of $\tau$. Also observe that $H(w_n(\tilde{\tau}))- H(w_n(\tau))$ decreases linearly with $\tau$, and  increases exponentially (dotted-line) with $n$ as presented in Fig.~\ref{fig:effects_cardinality}-(a).

By nature, uncertainty based $(S)$ is capable of returning a class label when the confidence of the class is low as shown in Observation~\ref{obs:weakest_entropy_guarantee}. If there exists no $\tilde{\tau}$ that satisfies Observation~\ref{obs:weakest_entropy_guarantee}, equi-entropy contours do not intersect with the simplex boundary. This condition might result in immediate stopping if one of the classes is already unfavorable as shown in Fig.~\ref{fig:prob_motion}-(a) with the example trajectories close to $[a, c]$ edge. 

The above Proposition 2, and Observations 1-3 show that under certain conditions, the regions defined by uncertainty criteria may significantly diverge from the region formed by the confidence level threshold that was defined over the posterior distribution. To avoid such drawbacks, the system is usually set to a high confidence level (e.g. $\tau\approx .95$), such termination already mandates system continue with redundant recursions of evidence collection to achieve high confidence. Accordingly, despite their differences, uncertainty based stopping (M3) and constant confidence threshold  (M1) yield similar performances in recursive classification tasks. To analyze these similarities in the classification performances, in the next section, we investigate the behavior of the posterior motion over the probability simplex. Through such an analysis, we then gain an insight into designing stopping criterion that will avoid high confidence levels and corresponding redundant recursions/evidence collections. 

\subsection{Posterior Motion}
\label{subsec:posterior_motion}
Given a prior point $p(\sigma)$, the posterior evolution with given pairs of queries and evidences $(\Phi,\beps)$, 
\begin{equation*}
\begin{split}
    p(\sigma\lvert \Phi_{0:s},\beps_{0:s}) = p(\sigma) \oplus p(\varepsilon_1 \lvert \sigma,\phi_1)\oplus \cdots \oplus p(\varepsilon_s \lvert \sigma,\phi_s) \\
    \end{split}
\end{equation*}
{\color{rev_color}Posterior trajectory is the list of vectors that is obtained within RBC over sequences of evidence collections: $[p(\sigma),p(\sigma\lvert  \Phi_{0},\beps_{0}),\cdots, p(\sigma\lvert\Phi_{0:s},\beps_{0:s})]$. Posterior motion refers to the changes in RBC over sequences that resulted in the posterior trajectory.}
{\color{rev_color}In this section we aim to show that the trajectory of the posterior distribution follows a central path (Proposition~\ref{prop:getting_closer_mid}) as it gets closer to the corner of interest as illustrated in Fig.~\ref{fig:prob_motion}. This will further allow us to clarify the similar early stopping behavior of (M1) and (M3) geometrically. By understanding motion we will design a stopping condition that promotes early stopping while preventing accuracy loss for a motion on edges.} 

{\color{rev_color}The motion is determined by the evidence likelihood $p(\varepsilon_s | \sigma, \phi_s)$ at each step $s$ and the evidence collected through query $\phi_s$. In this section we assume two noisy information channels for true class and incorrect class respectively. Once the true class is queried, evidence is sampled from the "positive" distribution and evidence is sampled from "negative" distribution otherwise and we assume the posterior probability of the true state/class increases on average \cite{tsiligkaridis2014collaborative}. }

\begin{figure}[t]
      \centering
      \subfigure[]{\includegraphics[width=.48\columnwidth]{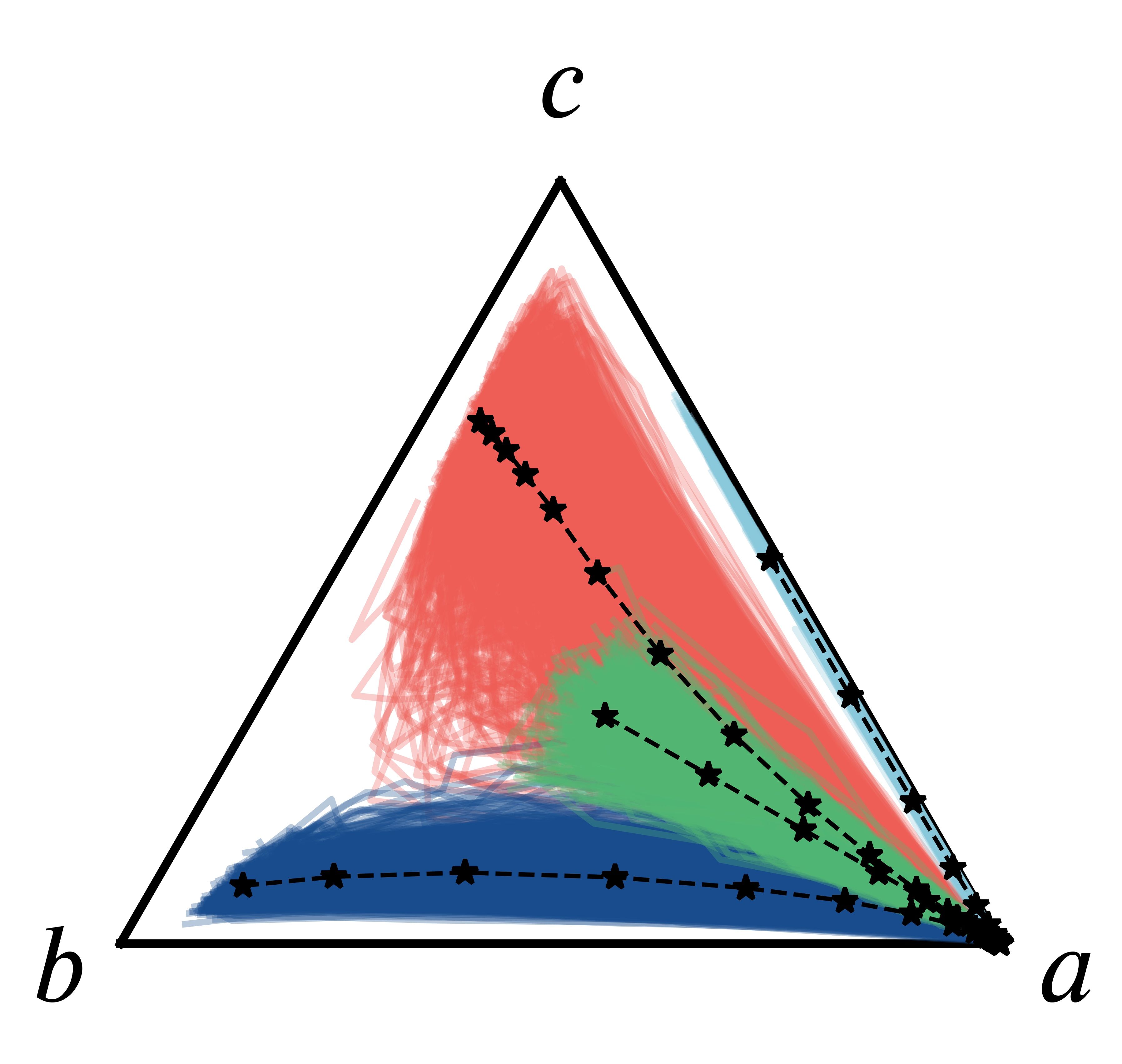}}
       \subfigure[]{\includegraphics[width=.48\columnwidth]{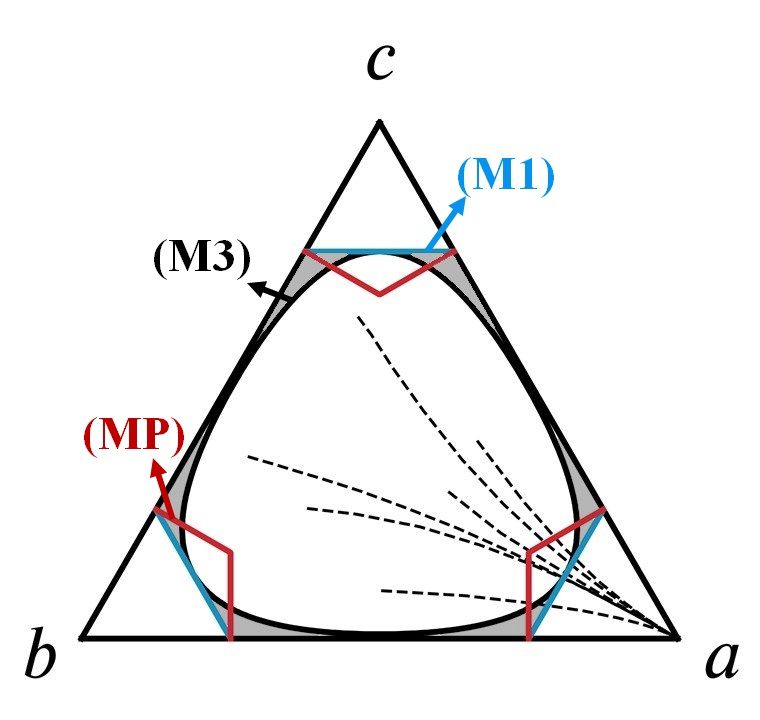}}
       \vspace{-0.4cm}
      \caption{Monte Carlo simulated trajectories for probability evolution in 3D simplex. To simulate the trajectories we sample evidence from lognormal distributions. (a) Each color represent 100 simulated examples from a different starting point and dashed lines representing the means of the trajectories. Observe the case where one of the classes is already disfavored (one between a, c), if decision boundary does not intersect with the edge, that results in immediate termination. (b) Three different perspectives for $\mathcal{S}_R$ design. We observe that the behavior of entropy (black) and confidence threshold (blue) are similar if trajectories are considered. In the method section we propose a region design using an equi-distance curves wrt. to the corners (red).}
    \label{fig:prob_motion}
    \vspace{-0.4cm}
\end{figure}

\begin{lemma}
\label{lemma:collinearity_of_post}
    Let $a\in\mathcal{A}$ and let $\phi(a)$ denote the query related with state $a$ then
    $p(\sigma)= [p_1=p(a),p_2,\cdots,p_n]$. $p(\sigma)$, $[1,0,\cdots,0]$ and $p(\sigma)\oplus p(\varepsilon|\sigma,\phi(a)) \ \forall \varepsilon,\forall p(\sigma)$ are collinear. 
\end{lemma}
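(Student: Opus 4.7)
The plan is to exploit the structure of a query $\phi(a)$ that is ``related to state $a$'', consistent with the two-channel noise model of Sec.~\ref{subsec:posterior_motion}: such a query only distinguishes $\sigma = a$ from $\sigma \neq a$, so the likelihood $p(\varepsilon \mid \sigma, \phi(a))$ takes a single value $L_1$ at $\sigma = a$ and a common value $L$ at every other state, independent of which non-$a$ state is considered. I would state this symmetry explicitly as the working hypothesis at the outset and then proceed by direct algebra in the ambient $\mathbb{R}^n$.

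With that structure in hand, substitute $p(\varepsilon \mid \sigma, \phi(a)) = [L_1, L, L, \ldots, L]$ into the definition of $\oplus$ from \eqref{eq:simplex_add} to obtain
\begin{equation*}
q \;:=\; p(\sigma) \oplus p(\varepsilon \mid \sigma, \phi(a)) \;=\; \frac{1}{Z}\bigl[\, p_1 L_1,\; p_2 L,\; \ldots,\; p_n L \,\bigr],
\end{equation*}
where $Z = p_1 L_1 + L(1 - p_1)$. The key structural observation is that $q_i/q_j = p_i/p_j$ for every pair $i, j \neq 1$, so the query-induced Bayesian update preserves all coordinate ratios among the $n-1$ states different from $a$ and rescales only the $a$-th coordinate against the rest.

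Collinearity is then verified by parametrizing the affine line through $[1, 0, \ldots, 0]$ and $p(\sigma)$ as $t\,[1, 0, \ldots, 0] + (1-t)\,p(\sigma)$ and solving for the unique $t$ that matches $q$. The last $n-1$ coordinates force $1 - t = L/Z$; substituting this into the first coordinate gives $t + (1-t)p_1 = p_1 L_1/Z = q_1$ after the short simplification $Z - p_1(L_1 - L) = L$. Hence the three points $[1, 0, \ldots, 0]$, $p(\sigma)$, and $q$ lie on a common affine line, and the quantifier ``$\forall \varepsilon, \forall p(\sigma)$'' is handled because $L_1, L$ depend on $\varepsilon$ only through their numerical values, and no particular interior $p(\sigma)$ is used in the derivation.

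The main obstacle is conceptual rather than computational. First, $[1, 0, \ldots, 0]$ is a vertex on the boundary of $\Delta_n$ rather than an interior point, so collinearity must be interpreted in the Euclidean affine sense of $\mathbb{R}^n$ (or, equivalently, as a limit along an Aitchison line built from $\oplus$ and $\otimes$ in the sense of Proposition~\ref{prop:SimplexVSpace}); I would flag this explicitly. Second, the argument stands or falls on the symmetry assumption that $p(\varepsilon \mid \sigma, \phi(a))$ is invariant across $\sigma \neq a$ — without it, the ratio-preservation step collapses. Both issues are mild under the two-channel hypothesis already in force in Sec.~\ref{subsec:posterior_motion}, but each deserves a sentence of justification in the final writeup.
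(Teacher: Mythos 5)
Your proposal is correct and follows essentially the same route as the paper's proof: both reduce the likelihood to the two-channel form (constant over all $\sigma \neq a$, which the paper encodes via the label $\ell$ and the vector $[k,1,\ldots,1]$), compute the posterior explicitly via $\oplus$, and verify affine collinearity with the vertex $[1,0,\ldots,0]$ by a direct coordinate check — the paper via the equal-ratios line equation, you via solving for the affine parameter $t$, which is the same computation. Your explicit flagging of the symmetry hypothesis and of the Euclidean (rather than Aitchison) sense of collinearity is a useful clarification the paper leaves implicit, but it does not change the argument.
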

Hence, once the system queries the environment, posterior probability for classification takes a step on the line that passes through the current position and the corner addressed by query. {\color{rev_color}Moreover}, on average, if the query addresses true state, the posterior moves towards the respective corner, if the query addresses an incorrect estimate the posterior moves away from the respective corner. To show that in general posterior gets closer to a central position, we need the projection of a posterior point to the line that passes through the center ($u_n$) and one of the corners.
\begin{lemma}
\label{lemma:projection_to_center}
    Given $p(\sigma)=p=[p_1,p_2,\cdots p_n]\in\Delta_n$ and given the line $\ell c_{n,i=1} = \lbrace [\tau,\frac{1-\tau}{n-1},\cdots ,\frac{1-\tau}{n-1}] | \forall \tau\in[0,1] \rbrace$ then $\ell_2$ norm projection of point $p(\sigma)$ onto line $\ell c_{n,i=1}$ is the following:
    \begin{equation}
    \begin{split}
        \text{\emph{proj}}_{\ell c_{n,i=1}}(p(\sigma))= \arg\min_{p\in\ell c_{n,i=1}} \| p(\sigma) - p \|_2 \\
        = \left[p_1,\frac{1-p_1}{n-1},\cdots ,\frac{1-p_1}{n-1}\right]
    \end{split}
    \end{equation}
\end{lemma}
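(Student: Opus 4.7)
The plan is to treat this as a routine $\ell_2$ projection onto a one-dimensional affine subspace of $\mathbb{R}^n$. First I would parameterize the line as $q(\tau)=[\tau,\tfrac{1-\tau}{n-1},\ldots,\tfrac{1-\tau}{n-1}]$ for $\tau\in[0,1]$ and identify its direction vector as $d=[1,-\tfrac{1}{n-1},\ldots,-\tfrac{1}{n-1}]$, obtained by differentiating $q(\tau)$ with respect to $\tau$. The projection of $p$ onto $\ell c_{n,i=1}$ is then characterized by the single orthogonality equation $\langle p-q(\tau^\ast),d\rangle=0$, and strict convexity of $\tau\mapsto\|p-q(\tau)\|_2^2$ ensures the solution is the unique global minimizer.

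Second, I would carry out the inner product calculation. Expanding $\langle p-q(\tau),d\rangle$ yields $(p_1-\tau)-\tfrac{1}{n-1}\sum_{i=2}^n\bigl(p_i-\tfrac{1-\tau}{n-1}\bigr)$. Using the simplex identity $\sum_{i=2}^n p_i=1-p_1$, the inner sum collapses to $(1-p_1)-(1-\tau)=\tau-p_1$, so the whole expression simplifies to $\tfrac{n}{n-1}(p_1-\tau)$. Setting it to zero forces $\tau^\ast=p_1$, and substitution into $q(\tau)$ returns the claimed projection $[p_1,\tfrac{1-p_1}{n-1},\ldots,\tfrac{1-p_1}{n-1}]$.

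Alternatively, I could bypass the minimization entirely and verify the formula directly: let $\bar p=[p_1,\tfrac{1-p_1}{n-1},\ldots,\tfrac{1-p_1}{n-1}]$. Then $\bar p\in\ell c_{n,i=1}$ (take $\tau=p_1$), the residual $p-\bar p$ has zero first coordinate, and its remaining coordinates $p_i-\tfrac{1-p_1}{n-1}$ sum to $(1-p_1)-(1-p_1)=0$. Hence $\langle p-\bar p,d\rangle=0$, certifying $\bar p$ as the foot of the perpendicular from $p$. There is no real obstacle here: this is a standard one-dimensional orthogonal projection, the only bookkeeping being the simplex constraint and the factor $\tfrac{1}{n-1}$ in $d$. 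The second route is the cleaner of the two, so I would present it as the proof and relegate the parametric minimization to a remark.
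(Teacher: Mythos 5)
Your proposal is correct and follows essentially the same route as the paper: the paper also parameterizes the line by $\tau$, sets the derivative of $\|p-v_n(\tau)\|_2^2$ to zero (which is exactly your orthogonality condition $\langle p-q(\tau),d\rangle=0$ with the same direction vector $m=[1,-\tfrac{1}{n-1},\ldots,-\tfrac{1}{n-1}]$), and solves element-wise to get $\tau^\ast=p_1$. Your direct verification of the residual is just a clean restatement of that same first-order condition, so there is nothing substantively different to flag.
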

To give an example, line $\ell c _{n=3, i=1}$ is visualized in Fig.~\ref{fig:simplex_repr}-(d) where $1^\text{st}$ location is $a$ . With this projection operation we show that $\ell_2$-norm distance between the projection and the actual point $d_s$ at sequence $s$ decreases quadratically with respect  to the posterior as $s$ increases. This statement is given in the following proposition:
\begin{prop}
\label{prop:getting_closer_mid}
Following Lemma \ref{lemma:projection_to_center}, given $p(\sigma)\in\Delta_n$;
\begin{equation*}
\begin{split}
    \| p(\sigma)  - \text{\emph{proj}}_{\ell c_{n,i=1}}p(\sigma)\|_2^2\propto (1-p_1)^2 \\
\end{split}
\end{equation*}
Following this, one defines the reduction between two sequences,
\begin{equation*}
\begin{split}
\exists \hat{s} \text{ \emph{s.t.} } \|p(\sigma\lvert\mathcal{H}_s) -  \text{\emph{proj}}_{\ell c_{n,i=1}}p(\sigma\lvert\mathcal{H}_s)\|_2 =d_s \\
\text{\emph{where }} \tilde{p}(\sigma\lvert\mathcal{H}_s+1) = p(\sigma\lvert\mathcal{H}_s) \oplus E_\varepsilon(p(\varepsilon\lvert \sigma, \phi)) \  \forall \phi \\
\|\tilde{p}(\sigma\lvert\mathcal{H}_{s+1})- \text{\emph{proj}}_{\ell_{c,n,i=1}}\tilde{p}(\sigma\lvert\mathcal{H}_s+1)\|_2=d_{s+1}\\
\text{Using Lemma \ref{lemma:collinearity_of_post}} \implies d_s > d_{s+1}\ \forall s\geq\hat{s}
\end{split}
\end{equation*}
\end{prop}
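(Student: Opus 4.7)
The plan is to split Proposition~\ref{prop:getting_closer_mid} into its two assertions and attack each with different tools. The first assertion, $\|p(\sigma)-\text{proj}_{\ell c_{n,i=1}} p(\sigma)\|_2^2 \propto (1-p_1)^2$, reduces to an algebraic identity. Substituting the closed form from Lemma~\ref{lemma:projection_to_center}, the residual vector has a zero first coordinate and entries $p_i - (1-p_1)/(n-1)$ for $i \ge 2$. Since $(1-p_1)/(n-1)$ is exactly the mean of $p_2,\ldots,p_n$, the squared norm equals $\sum_{i\ge 2} p_i^2 - (1-p_1)^2/(n-1)$. Using $\sum_{i\ge 2} p_i = 1-p_1$ together with a Cauchy--Schwarz-type bound on the face $\{p_{2:n}\ge 0,\ \sum p_{2:n}=1-p_1\}$ yields $0 \le \sum_{i\ge 2} p_i^2 \le (1-p_1)^2$, so the whole expression is $\kappa(p)\cdot (1-p_1)^2$ with $\kappa(p) \in [0,\ (n-2)/(n-1)]$. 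That establishes the proportionality, with equality attained at the two extremes (uniform losing mass versus all losing mass on a single runner-up).

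For the second assertion, the plan is to couple (1) with the expected posterior dynamics. By Lemma~\ref{lemma:collinearity_of_post}, each Bayesian update $p \oplus p(\varepsilon|\sigma,\phi)$ keeps the posterior on the chord joining $p$ to the corner indexed by $\phi$. Under the two-channel evidence model stated in Sec.~\ref{subsec:posterior_motion}, the expected likelihood $E_\varepsilon[p(\varepsilon|\sigma,\phi)]$ places more mass on the true-state corner than on any competitor, so on average each update moves the posterior monotonically toward vertex~$a$. Writing $p_{1,s}=p(a|\mathcal H_s)$, this produces a non-decreasing expected sequence $\{E[p_{1,s}]\}_s$, strict as long as the two channels are distinguishable and $p_{1,s}<1$. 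Inserting this into the proportionality from (1) formally gives $d_{s+1} < d_s$, provided the shape coefficient $\kappa$ does not compensate the decay of $(1-p_{1,s})^2$ from one sequence to the next.

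The delicate point, and the reason the proposition quantifies over $s\ge\hat s$ rather than all $s$, is that $\kappa(p_s)$ can transiently grow as losing mass consolidates on a single runner-up even while $p_{1,s}$ increases. I would therefore pick $\hat s$ as the first time at which $E[p_{1,s}] > 1/2$: beyond this threshold the multiplicative factor $\kappa$ is already trapped in $[0,(n-2)/(n-1)]$, the absolute residual is bounded above by $1/4$, and the positive drift in $p_{1,s}$, which Lemma~\ref{lemma:collinearity_of_post} identifies with expected chord motion toward corner~$a$, is strictly larger than the maximal admissible per-step growth of $\kappa$ along that same chord. Such a $\hat s$ exists because the per-step drift in $p_{1,s}$ is bounded below by a positive constant while the posterior is in the interior, so $E[p_{1,s}]\to 1$ in finite time. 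Chaining the two monotonicities yields $d_s > d_{s+1}$ for all $s\ge \hat s$.

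The main obstacle I anticipate is making the ``bounded growth of $\kappa$'' step fully rigorous without strengthening the hypotheses beyond what Sec.~\ref{subsec:posterior_motion} already assumes. The cleanest sufficient condition is a symmetry assumption on the negative channel, under which the expected update preserves the relative ordering of the losing coordinates and hence cannot increase $\kappa$. Failing that, one can phrase the conclusion as an almost-sure statement once $p_{1,s}$ is close enough to $1$, which is exactly the regime depicted in Fig.~\ref{fig:prob_motion} and therefore consistent with the Monte Carlo evidence the proposition is meant to formalize.
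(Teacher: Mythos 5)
Your proposal is correct and follows the same two-part skeleton as the paper: compute the residual to the projection from Lemma~\ref{lemma:projection_to_center}, bound it by a multiple of $(1-p_1)^2$, then invoke Lemma~\ref{lemma:collinearity_of_post} plus the ``posterior of the true class increases on average'' assumption to get $d_{s+1}<d_s$. Two points of comparison are worth recording. First, your variance identity $\sum_{i\ge 2}(p_i-\tfrac{1-p_1}{n-1})^2=\sum_{i\ge 2}p_i^2-\tfrac{(1-p_1)^2}{n-1}$ gives the tight range $\kappa(p)\in[0,(n-2)/(n-1)]$; the paper instead claims the sum of squares is bounded by the single term $\bigl(1-p_1-\tfrac{1-p_1}{n-1}\bigr)^2=(1-p_1)^2\tfrac{(n-2)^2}{(n-1)^2}$, which is actually \emph{not} an upper bound at the one-competitor extreme (the correct maximum is $(1-p_1)^2\tfrac{n-2}{n-1}$), so your version is the one that should be kept. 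Second, for the monotonicity step the paper offers only a one-sentence assertion, whereas you correctly isolate the real issue: $d_s^2=\kappa(p_s)(1-p_{1,s})^2$ with a state-dependent $\kappa$, so decay of $(1-p_{1,s})^2$ alone does not suffice. Your proposed resolution via symmetry of the negative channel is in fact already delivered by the paper's own evidence model: Lemma~\ref{lemma:collinearity_of_post} reduces the likelihood to $[k,1,\dots,1]$, so an update rescales all losing coordinates by the same factor, preserves the normalized losing distribution $q_i=p_i/(1-p_1)$, and hence leaves $\kappa$ invariant along the chord toward the target corner; then $d_s$ is strictly decreasing exactly when $p_{1,s}$ is strictly increasing, with no need for the threshold $\hat s$ to control $\kappa$ growth. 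Your more cautious formulation (choosing $\hat s$ and bounding $\kappa$'s drift) is what one would need if queries to incorrect classes perturb the losing coordinates non-uniformly, a case the paper's proof silently ignores, so your write-up is, if anything, more honest about the hypotheses than the published argument.
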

In recursive classification, on average, probability of the true state/class increases sequentially and following the proposition, the posterior probability gets closer to the line that passes through the respective corner and the center. We visualize examples of average trajectories in Fig.~\ref{fig:prob_motion}-(a).
{\color{rev_color} Additionally we presented in Fig.~\ref{fig:prob_motion}-(b) that unlike (M1) $\mathcal{S}_C$ curves (blue), (M3)-curves (black) allow additional $\mathcal{S_R}$ (shaded areas) that is concentrated closer to edges and they promote early stopping. Considering both Fig.~\ref{fig:prob_motion}-(a,b) together, if posterior follows a central path (M3) behaves as (M1) and (M3) cannot promote early stopping. Next, we propose a new stopping criterion perspective to extend the stopping region centrally towards the uniform distribution (e.g. red boundary in Fig.~\ref{fig:prob_motion}-(b)). This approach will address our observations in Sec.~\ref{subsec:frail_confidence} and Sec.~\ref{subsec:posterior_motion} that the new approach will be robust to number of categories while promoting early stopping with marginal accuracy loss.}

\section{Proposed Perspective}
\label{sec:Method}
In the previous sections we argued equi-entropy contours formed for (M3) are centered around $u_n$. Moreover (M3) is sensitive to number of categories and stagger in cases where some of the classes are already unfavorable. 
{\color{rev_color}Additionally we propose to bend (M1) from the center towards $u_n$ e.g. the red boundary in Fig.~\ref{fig:prob_motion}-(b).} 
Trivially, it is possible to form this by equi-distant points to the respective corners. However, by definition, edges and corners are $\notin\Delta_n$ (represents $\infty$ \cite{aitchison1982statistical}) which prevents measuring the distance with conventional information theoretic approaches.
To avoid this, exploiting $\Delta_n\subset \mathbb{R}^n$, one can use a distance measure $\delta$ defined over $\mathbb{R}^n$ (e.g. $\ell_p$ norms) and intersect the $\bar{\tau}$ ball around $p$ $B^{\bar{\tau}}_\delta(p)= \lbrace x | \delta(p,x)<\bar{\tau}$ with $\Delta_n$ to obtain $\tilde{B} = B\cap \Delta_n$ centered around a corner and $\in\Delta_n$ $\tilde{B}^{\bar{\tau}}_\delta(p)= \lbrace x | \delta(p,x)<\bar{\tau}, x\in\Delta_n \rbrace$. Let $c^k\in\Delta_n $ be the $k^\text{th}$ corner (e.g. $c^1 = [1,0,\cdots,0]$), decision region with $\delta$ and $\bar{\tau}$ is defined as;
\begin{equation}
    \label{eq:conv_decision_region}
    \mathcal{S}_R: \ p\in \bigcup_{k\in\lbrace 1,2,\cdots,n\rbrace} \tilde{B}^{\bar{\tau}}_\delta(c^k)
\end{equation}
We use a distance measure influenced by Kittler's work \cite{kittler2018delta}: the \textit{delta} divergence.
We use the definition of \textit{delta}- divergence and modify it to define our novel decision region.
This divergence determines an interest set (e.g. indices of the most probable elements in respective distributions) and compare the probability mass in these elements. We limit the set to two elements in our paper. The corresponding distance between $p,q\in\Delta_n$ is denoted as $\delta_{\textit{MP}}$ (MP indicating 'method proposed') defined as the following;
\begin{equation}
\label{eq:mod_delta_divergence}
    \begin{split}
    \delta_{\textit{MP}}(p,q) = \sum_{i\in I} | p_i - q_i |, \  I = \lbrace j_1,j_2,k_1,k_2 \rbrace \\
    j_1 = \arg\max_i p_i, j_2 = \arg\max_{i\neq j_1} p_i\\
    k_1 = \arg\max_i q_i, k_2 = \arg\max_{i\neq k_1} q_i
    \end{split}
\end{equation}
{Observe the measure satisfies non-negativity, identity of indiscernibles and symmetry.} Using this distance to obtain balls, we can state the following proposition about the stopping region and criterion, $\mathcal{S}_R$ and $\mathcal{S}_C$ respectively;
\begin{prop}[Proposed $\mathcal{S}_R$ (MP)]
\label{prop:delta_made_good}
$c^k$ being ${k}^\text{th}$ corner,$p\in\Delta_n, {\delta} = \delta_{\textit{MP}}, \bar{\tau}\in [1/n,1]$ then;
\begin{equation}
\begin{split}
    &\mathcal{S}_R:  p\in \bigcup_{k\in\lbrace 1,2,\cdots,n\rbrace} \tilde{B}^{\bar{\tau}}_\delta(c^k) \\
    &\equiv \mathcal{S}_C: p_{j_1} - p_{j_2} > 1 -\bar{\tau}\\
    \text{where }  &j_1 = \arg\max_i p_i \ \text{and } j_2  = \arg\max_{i\neq j_1} p_i
\end{split}
\end{equation}
\end{prop}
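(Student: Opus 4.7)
The plan is a direct case analysis that unfolds the definition of $\delta_{\textit{MP}}$ when one of its arguments is a corner $c^k$. First, I would fix a corner $c^k = [0,\dots,0,1,0,\dots,0]$ (with the $1$ in position $k$) and examine $\delta_{\textit{MP}}(c^k, p)$. Since $c^k$ has a single nonzero entry, the top index $k_1$ equals $k$, while the second-largest index $k_2$ is tied across all other coordinates. I would adopt the natural tie-breaking convention that $k_2 \in \{j_1, j_2\}$ whenever possible, so that the index set $I = \{j_1, j_2, k_1, k_2\}$ has minimal cardinality. This tie-breaking yields the smallest $\delta_{\textit{MP}}$ (hence the largest admissible ball) and matches the geometric intent of the definition as a measure of agreement between the top-2 masses.

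Next, I would split into two cases. If $k = j_1$, taking $k_2 = j_2$ gives $I = \{j_1, j_2\}$ and
$$\delta_{\textit{MP}}(c^{j_1}, p) = (1 - p_{j_1}) + p_{j_2},$$
so the ball condition $\delta_{\textit{MP}}(c^{j_1}, p) < \bar{\tau}$ rearranges immediately to $p_{j_1} - p_{j_2} > 1 - \bar{\tau}$, which is the claimed inequality. In the complementary case $k \neq j_1$, I would show $\delta_{\textit{MP}}(c^k, p) \geq 1$, so the ball cannot contain $p$ whenever $\bar{\tau} \leq 1$. This splits into two sub-cases. If $k = j_2$, choosing $k_2 = j_1$ still gives $I = \{j_1, j_2\}$ and the distance becomes $p_{j_1} + (1 - p_{j_2}) = 1 + (p_{j_1} - p_{j_2}) \geq 1$ since $j_1 = \arg\max_i p_i$. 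If $k \notin \{j_1, j_2\}$, choosing $k_2 = j_1$ gives $I = \{j_1, j_2, k\}$ and the distance is $p_{j_1} + p_{j_2} + (1 - p_k) \geq 1$, using $p_{j_1} \geq p_k$.

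Combining these cases, the only corner whose $\tilde{B}^{\bar{\tau}}_{\delta_{\textit{MP}}}$ can contain $p$ is $c^{j_1}$, so the union $\bigcup_{k} \tilde{B}^{\bar{\tau}}_{\delta_{\textit{MP}}}(c^k)$ collapses to $\tilde{B}^{\bar{\tau}}_{\delta_{\textit{MP}}}(c^{j_1})$, and membership in that single ball is exactly $p_{j_1} - p_{j_2} > 1 - \bar{\tau}$. The only subtle point is the tie-breaking in the choice of $k_2$ when one argument of $\delta_{\textit{MP}}$ is a corner; once this convention is fixed, the remainder reduces to direct algebraic manipulation. The main obstacle, and the only place that merits care, is verifying that the bounds in the excluded-corner cases are actually tight enough to rule out ball membership across the full range $\bar{\tau} \in [1/n, 1]$, and that strict versus non-strict inequalities are tracked consistently when $p$ has ties among its top entries.
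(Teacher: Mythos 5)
Your proposal is correct and takes essentially the same route as the paper: fix a corner, unfold $\delta_{\textit{MP}}$ with the tie-break $k_2=j_2$ (so $I=\{j_1,j_2\}$), and rearrange $\delta_{\textit{MP}}(c^{j_1},p)=(1-p_{j_1})+p_{j_2}<\bar{\tau}$ into $p_{j_1}-p_{j_2}>1-\bar{\tau}$. In fact your treatment is slightly more complete than the paper's, which disposes of the corners $c^k$ with $k\neq j_1$ by a ``WLOG'' on the point of interest, whereas you explicitly verify $\delta_{\textit{MP}}(c^k,p)\geq 1$ in those cases so that the union of balls genuinely collapses to the single ball around $c^{j_1}$.
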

We denote this method by (MP). We visualize an example decision boundary in Fig.~\ref{fig:prob_motion}-(b). As can be observed from the Fig.~\ref{fig:prob_motion}-(b), respective decision boundary is inline with the motion of the probability distribution. To have a decision boundary bending the boundary at $\tau$ for (M1) from its center the following condition is required;
\begin{obs}
\label{obs:taubar_reqs}
 \emph{Given $\bar{\tau} = 2 - 2\tau$ and WLOG for $p\in\Delta_n , \ \arg\max_i p_i = 1$. Define (MP) and (M1) decision boundaries;
 \begin{equation}
    \begin{split}
       & C_\tau = \lbrace p | p_1 = \tau
        \rbrace\\
        &B_{\bar{\tau}} = \lbrace p| p_1 - p_m=1-\bar{\tau}, m=\arg\max_{i\neq 1}p_i\rbrace
    \end{split}
\end{equation}
$\implies$  $C_\tau\cap B_{\bar{\tau}}$= $w_n(\tau)$ and $\max_{p\in B_{\bar{\tau}}}\max_i p_i = \tau$}
 \end{obs}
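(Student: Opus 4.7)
The plan is to prove the two assertions by direct substitution and a short algebraic bound, exploiting the affine nature of both $C_\tau$ and $B_{\bar\tau}$ and the simplex constraint $\sum_i p_i = 1$.

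First I would verify that $w_n(\tau) = [\tau, 1-\tau, 0, \ldots, 0]$ lies in both boundaries. Membership in $C_\tau$ is immediate since the first coordinate is $\tau$. For $B_{\bar\tau}$, the argmax among the remaining coordinates is $m=2$ with $p_m = 1-\tau$, and a one-line calculation using $\bar\tau = 2-2\tau$ gives $p_1 - p_m = \tau - (1-\tau) = 2\tau - 1 = 1-\bar\tau$, so $w_n(\tau) \in B_{\bar\tau}$. Hence $w_n(\tau) \in C_\tau \cap B_{\bar\tau}$.

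Next I would show the intersection contains no other points (up to the free choice of which coordinate $\neq 1$ plays the role of $m$, which only permutes the zero entries and still yields $w_n(\tau)$ in the sense of the statement). Suppose $p \in C_\tau \cap B_{\bar\tau}$, so $p_1 = \tau$. The membership in $B_{\bar\tau}$ forces $p_m = p_1 - (1-\bar\tau) = \tau - (2\tau-1) = 1-\tau$. Combined with the simplex constraint $\sum_i p_i = 1$, we obtain $\sum_{i \neq 1, m} p_i = 1 - \tau - (1-\tau) = 0$, and since each such $p_i \geq 0$, they must all vanish. This gives exactly $w_n(\tau)$.

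For the second claim, I would maximize $\max_i p_i = p_1$ over $p \in B_{\bar\tau}$. The key inequalities are (i) $p_1 - p_m = 1-\bar\tau = 2\tau-1$ and (ii) $p_1 + p_m \leq 1$, since $p_1 + p_m + \sum_{i\neq 1,m} p_i = 1$ with nonnegative terms. Substituting (i) into (ii) yields $2p_m + 2\tau - 1 \leq 1$, i.e., $p_m \leq 1-\tau$, hence $p_1 = p_m + 2\tau - 1 \leq \tau$. Equality is attained at $w_n(\tau)$ by the computation in the first step, so the supremum equals $\tau$ and is achieved.

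There is no substantive obstacle; the only subtlety is that $B_{\bar\tau}$ is defined via $m = \arg\max_{i\neq 1} p_i$, which is a $p$-dependent index. This is handled cleanly by noting that the argument above uses only the defining equation $p_1 - p_m = 1-\bar\tau$ and the two facts that $p_m \geq p_i$ for $i \neq 1$ and $p_i \geq 0$, both of which are intrinsic to the simplex and the argmax definition; no case analysis on $m$ is actually needed beyond observing that at the extremizer all other coordinates are driven to zero.
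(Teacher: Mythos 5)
Your proof is correct and follows essentially the same route as the paper's: both arguments combine the defining equation $p_1 - p_m = 1-\bar{\tau} = 2\tau-1$ with the simplex constraint and nonnegativity to force the remaining coordinates to zero, identifying $w_n(\tau)$ as both the intersection point and the maximizer of $p_1$ over $B_{\bar{\tau}}$. The only difference is ordering (the paper solves the maximization first and then reads off the intersection, while you verify the intersection first), and your uniqueness argument for the intersection is slightly more explicit than the paper's.
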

\begin{obs}
\label{obs:taubar_reqs2}
\emph{
\begin{equation}
\begin{split}
    &B_{\bar{\tau}} = \lbrace p| p_1 - p_m=1-\bar{\tau}, m=\arg\max_{i\neq 1}p_i\rbrace\\
    &\min_{p\in B_{\bar{\tau}}}\max_i p_i = \frac{1+(n-1)(1-\bar{\tau})}{n} = \psi, p =v_n(\psi)  
    \end{split}
\end{equation}}
\end{obs}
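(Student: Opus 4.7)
The plan is to show the lower bound on $\max_i p_i$ directly from the two structural constraints that define $B_{\bar\tau}$, and then exhibit an explicit feasible point that attains this bound. WLOG I take $\arg\max_i p_i = 1$, so $\max_i p_i = p_1$ and $p_m = \max_{i\neq 1} p_i$, with the defining equation $p_1 - p_m = 1-\bar\tau$.

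The key observation is that since $p_m$ is the maximum of the $n-1$ remaining nonnegative coordinates $p_2,\dots,p_n$, and these coordinates sum to $1-p_1$, we have the averaging bound $p_m \geq \tfrac{1-p_1}{n-1}$. Substituting the equality $p_m = p_1 - (1-\bar\tau)$ into this inequality gives
\begin{equation*}
p_1 - (1-\bar\tau) \;\geq\; \frac{1-p_1}{n-1},
\end{equation*}
and rearranging yields $n\,p_1 \geq 1 + (n-1)(1-\bar\tau)$, i.e.\ $p_1 \geq \psi$, which is the claimed lower bound on $\min_{p\in B_{\bar\tau}}\max_i p_i$.

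For attainment, equality in the averaging bound $p_m \geq \tfrac{1-p_1}{n-1}$ holds iff $p_2 = p_3 = \cdots = p_n = p_m$. Setting $p_1 = \psi$ and $p_2 = \cdots = p_n = \tfrac{1-\psi}{n-1}$, a direct check (using $1-\psi = (n-1)\bar\tau/n$) confirms $p_1 - p_m = 1-\bar\tau$, so this candidate lies in $B_{\bar\tau}$ and achieves $\max_i p_i = \psi$. By the definition of $v_n(\cdot)$ in \eqref{eq:special_probs_in_simplex}, the minimizer is precisely $v_n(\psi)$.

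I do not anticipate a genuinely hard step; the only subtlety is being careful that the constraint defining $B_{\bar\tau}$ implicitly requires $p_1 \geq p_i$ for all $i$ and $p_m \geq p_i$ for $i\neq 1$, but both are automatically satisfied at the extremal point since $p_1 = \psi \geq \tfrac{1-\psi}{n-1} = p_m$ whenever $\bar\tau \leq 1$. A brief remark to this effect should suffice to close the proof cleanly.
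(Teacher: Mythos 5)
Your proof is correct and follows essentially the same route as the paper's: both reduce the problem to minimizing $p_m$ subject to it being the largest of the $n-1$ remaining coordinates, which forces the uniform spread $p_m=(1-p_1)/(n-1)$ and hence $p=v_n(\psi)$ with $n p_1 = 1+(n-1)(1-\bar\tau)$. Your version is slightly more careful in stating the averaging bound $p_m\geq(1-p_1)/(n-1)$ as an explicit inequality and verifying attainment and feasibility ($p_1\geq p_m$ for $\bar\tau\leq 1$), which is a welcome tightening rather than a different approach.
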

The closest point of the decision boundary to $u_n$ is $v_n(1+\bar{\tau}({(1-n)}/{n})$ and hence on the line $\ell c_{n,i}$ for each respective corner $i$. This implies, unlike uncertainty methods, proposed boundary does not interfere with the inference, $(c)$ function. By definition \eqref{eq:mod_delta_divergence} is robust to number of categories and the cases where one of the classes is already unfavorable. We omit derivations and refer the reader to \cite{kittler2018delta}.
\begin{prop}
\label{prop:perf_guarantees}
Given $\tau$, $p = [p_1,p_2,\cdots,p_n]\in\Delta_n$ s.t. $2 =\arg\max_{i\neq 1}p_i$ and $p_2 \geq (1-p_1)\tau$ and evidence $\varepsilon = [\varepsilon_+,1,\cdots,1]$ where $\varepsilon_+ \sim \text{\emph{lognorm}}(\mu,c^2)$, we define the posterior at sequence $s$ $p^s = p^0 \oplus \varepsilon \oplus \cdots \oplus \varepsilon$. With $\bar{\tau} = 2- 2\tau$ and $\tilde{\tau}=((2\tau-1)(n-1)+1)/n$ define;
\begin{equation*}
\begin{split}
    &S_{r\text{\emph{(M1)}}} = \left\lbrace p^s | p_1^s \geq \tau \right\rbrace\\
     &{s}'_{R\text{\emph{(M1)}}} = \left\lbrace p | p_2 \geq \tau \right\rbrace\\
    &S_{R\text{\emph{(MP)}}} = \lbrace p^s | p_1^s-p^s_2 \geq 1-\bar{\tau}\rbrace\\
    &S'_{R\text{\emph{(MP)}}} = \lbrace p^s | p^s_2-p^s_j \geq 1-\bar{\tau}, j = \arg\max_{k\notin \lbrace 1 ,2\rbrace} p^s_k \rbrace
    \implies\\
    &p(p^s\in S_{R(\text{\emph{MP}})}(p_1,p_2,\bar{\tau},s)) \geq 
      p(p^s\in S_{R\text{\emph{(M1)}}}(p_1,\tau,s)) \  \text{\emph{\&}} \\
    &p(p^s\in S'_{R(\text{\emph{M1}})}(p_1,p_2,\tilde{\tau},s)) \geq 
      p(p^s\in S'_{R\text{\emph{(MP)}}}(p_1,p_2,\bar{\tau},s))\\
      &\hspace{0.5cm} \geq p(p^s\in S'_{R(\text{\emph{M1}})}(p_1,p_2,{\tau},s))
      \ \forall s\in \lbrace 1,2,\cdots \rbrace
\end{split}
\end{equation*}

\end{prop}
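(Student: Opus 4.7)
The plan is to reduce every stopping event to a threshold event on a single one-dimensional lognormal random variable, and then compare probabilities via monotonicity of its CDF. For the first inequality $P(S_{R(MP)}) \ge P(S_{R(M1)})$, I would argue set inclusion: since $p_2^s \le 1-p_1^s$ for any $p^s \in \Delta_n$, $p_1^s \ge \tau$ forces $p_1^s - p_2^s \ge 2\tau-1 = 1-\bar{\tau}$, giving $S_{R(M1)}(p_1,\tau,s) \subseteq S_{R(MP)}(p_1,p_2,\bar{\tau},s)$. I would also note that the hypothesis ``$2 = \arg\max_{i\ne 1} p_i$'' is preserved throughout the recursion, since, as shown next, the ratios $p_k^s/p_{k'}^s$ for $k,k'\ne 1$ are invariant in $s$.

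For the second chain I would first unpack the recursion. With $\varepsilon_{+,i}$ i.i.d.\ lognormal, set $Z_s = \prod_{i=1}^{s}\varepsilon_{+,i}$ (again lognormal) and $D_s = p_1^0 Z_s + (1-p_1^0)$; a direct computation with the $\oplus$ update gives
\begin{equation*}
p_1^s = \frac{p_1^0 Z_s}{D_s}, \qquad p_k^s = \frac{p_k^0}{D_s} \quad (k \ne 1),
\end{equation*}
so the ratios among classes $k \ne 1$ are constant in $s$ and $j=\arg\max_{k\notin\{1,2\}}p_k^s$ is fixed. Each of the three events then rewrites as a one-sided condition on $D_s$: $\{p_2^s \ge \tilde{\tau}\} = \{D_s \le p_2^0/\tilde{\tau}\}$, $\{p_2^s-p_j^s \ge 1-\bar{\tau}\} = \{D_s \le (p_2^0-p_j^0)/(2\tau-1)\}$, and $\{p_2^s \ge \tau\} = \{D_s \le p_2^0/\tau\}$. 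Since the CDF of $D_s$ is continuous and strictly increasing, the desired probabilistic chain is equivalent to the threshold chain $p_2^0/\tilde{\tau} \ge (p_2^0-p_j^0)/(2\tau-1) \ge p_2^0/\tau$.

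The right half is equivalent to $p_j^0 \le (1-\tau)p_2^0/\tau$ and follows from the hypothesis via $p_j^0 \le 1-p_1^0-p_2^0 \le (1-p_1^0)(1-\tau) \le p_2^0(1-\tau)/\tau$, using $p_2^0 \ge (1-p_1^0)\tau$ in the last step. The left half, after invoking the identity $\tilde{\tau} - (2\tau-1) = 2(1-\tau)/n$ (a short algebraic check from the definition of $\tilde{\tau}$), becomes $p_j^0 \ge 2(1-\tau)p_2^0/(n\tilde{\tau})$. This is the main obstacle: no such lower bound on $p_j^0$ is implied by $p_2^0 \ge (1-p_1^0)\tau$ alone and the inequality can fail when $p_j^0$ is very small (e.g.\ $p_j^0 = 0$). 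I would close the gap by invoking the natural side assumption that the residual mass is roughly uniformly shared among classes $3,\ldots,n$, so that $p_j^0 \ge (1-p_1^0-p_2^0)/(n-2)$, which together with the hypothesis and $\tau$ in the practically relevant regime (large $\tau$, moderate $n$) secures the bound. Once both threshold inequalities hold, monotonicity of the CDF of $D_s$ transports them to the asserted probabilistic chain uniformly in $s$.
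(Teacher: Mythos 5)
Your overall strategy coincides with the paper's: with $\varepsilon=[\varepsilon_+,1,\dots,1]$ the posterior depends on the data only through the lognormal product $Z_s=\prod_{i=1}^{s}\varepsilon_{+,i}$, every stopping event becomes a one-sided threshold condition on $Z_s$ (equivalently on your $D_s$), and the probabilistic inequalities reduce to ordering the thresholds via monotonicity of the lognormal CDF; this is precisely what the paper does through its auxiliary lemmas and the constants $k_1,k_2,k_1',k_2',\bar k_1'$. Your first inequality, obtained from the inclusion $p_1^s\ge\tau\Rightarrow p_1^s-p_2^s\ge 2\tau-1=1-\bar\tau$, is correct and is equivalent to the paper's computation $k_2\le k_1$.

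The genuine gap is in the second chain, and you have diagnosed it accurately but not resolved it legitimately. Reading $S'_{R(\mathrm{MP})}$ literally as a margin of class $2$ over the third-best class $j$, the left threshold inequality $p_2^0/\tilde\tau\ge(p_2^0-p_j^0)/(2\tau-1)$ requires $p_j^0\ge 2(1-\tau)p_2^0/(n\tilde\tau)$, which is not implied by $p_2\ge(1-p_1)\tau$ and fails outright when $p_j^0$ is near zero; importing a ``residual mass is roughly uniformly shared'' hypothesis is an assumption not present in the proposition, so the claim is left unproved as stated. The paper's own proof avoids this entirely by evaluating the MP false-alarm event as a margin of class $2$ over class $1$, namely $\frac{p_2}{D_s}-\frac{p_1Z_s}{D_s}\ge 1-\bar\tau$, which produces the middle threshold $k_2'=\bigl(p_2-(1-p_1)(2\tau-1)\bigr)/(2\tau p_1)$; then $k_2'-k_1'=(1-p_1-p_2)/(2\tau p_1)\ge 0$ gives the right-hand inequality with no condition on $p_j$, and the left-hand inequality follows by showing $\bar k_1'$ is nondecreasing in $n$ and coincides with $k_2'$ at $n=2$. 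You should either adopt that reading of $S'_{R(\mathrm{MP})}$ (the margin against the prior favorite, which is the operative runner-up along this trajectory since the ratios among classes $2,\dots,n$ are invariant) and redo the middle threshold accordingly, or explicitly flag the mismatch between the set as defined in the statement and what is provable; as written, your argument does not establish the left-hand inequality of the sandwich.
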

{Proposition 5 demonstrates that compared to a stopping criterion based on posterior distribution thresholding (M1), the proposed method (MP) always has higher probability of entering the stopping region with correct decision, while the probability of entering an incorrect region resulting in incorrect decision is constrained within certain probability values. 
Therefore, a system can be designed to achieve a desired true positive probability while limiting false alarm probability by using the proposed stopping criterion. 
Supporting numerical examples are in the appendix  Sec.\ref{sec:tp_fa_guarantee}.}

\section{Experiments and Results}
\label{sec:Experiments}

In this section we run experiments to support our findings. Throughout this section we denote the \textit{proposed method} introduced in Sec.~\ref{sec:Method} by (MP) and compare it with the methods presented in Table \ref{tab:competitor_methods_table}. Specifically, we designate a confidence level $\tau$ for (M1). We select respective $c$s for (M2:4) such that (M1:4) intersect at $v_n(\tau)$ {\color{rev_color}(See Observation~\ref{obs:intersection_entropy_conf})}. We select $\bar{\tau}$ for (MP) following Observation~\ref{obs:taubar_reqs}. For (M5) $c_{\delta_{\text{KL}}}$  is selected as $10^{-2}$ {\color{rev_color}(see Table~\ref{tab:competitor_methods_table} for competing method definitions)}. We demonstrate results for the two following cases: (i) Synthetic experiments (ii) A letter decision for electroencephalography (EEG)-based brain computer interface (BCI) typing system.

\subsection{Synthetic Experiments}
{
{\color{rev_color}In this section we aim to demonstrate the following items: 

\begin{enumerate}
    \itemsep-0.5em 
    \item In the presence of disfavored classes, where the recursive classification is happening in a lower dimension than the cardinality of the class space (e.g. at least one of the class probabilities $\approx 0$), uncertainty based methods (especially Renyi Entropy with orders $\alpha\leq 1$) suffer from immediate or rushed stopping (Sec.~\ref{subsec:frail_confidence}) whereas (MP) is robust to this effect and present similar performance as confidence based methods. (Details in Table~\ref{tab:disfavored_stuff}, Table~\ref{tab:disfav_2})
    \item In case of a central posterior trajectory, confidence and uncertainty based methods perform similarly whereas (MP) stops earlier with marginal accuracy loss (Sec.~\ref{subsec:posterior_motion}). It is possible to early stop by lowering the confidence level. However, doing so is penalized in accuracy for cases in 1. (MP) provides an early stopping as if confidence level is relaxed for 2 and provides high accuracy performance for 1. (Details in Table~\ref{tab:disfav_3})
\end{enumerate}
}

{\color{rev_color}In our experiments we present scenarios with pre-defined prior information; for Tables \ref{tab:disfavored_stuff},\ref{tab:disfav_2} the priors are set such that some of the classes have $\sim0$ probability mass therefore this prior is close to the simplex edge and the posterior is expected to move in a lower dimensional space (case 1). For Table \ref{tab:disfav_3} the prior is set such that the target probability is set to $.1$ where the rest of the class options share the remaining mass equally. This puts the prior close to the center of the opposing edge of the respective corner, therefore the posterior will follow a central path (case 2). In our experiments evidence (likelihood at each sequence) employs the vector form: $\varepsilon = [\varepsilon_+, \varepsilon_-, \cdots, \varepsilon_-]$ where  $1^\text{st}$ candidate is the true class WLOG. In each sequence,  In each sequence, following the assumptions of Fig.~\ref{fig:graphicalModel}, $\varepsilon_+$ and all $\varepsilon_-$s each are sampled from their respective lognormal distributions independently (specifications are listed under each Table \ref{tab:disfavored_stuff}, \ref{tab:disfav_2}, \ref{tab:disfav_3}). The likelihoods are further merged with the latest posterior with $\oplus$ \eqref{eq:simplex_add}.}
For our experiments, in addition to the methods presented in {\color{rev_color}Table~\ref{tab:competitor_methods_table}}, we also use a confidence lower bound with a confidence level $\tilde{\tau}$ that is derived in Observation \ref{obs:taubar_reqs2} (e.g., $\tilde{\tau}$ for which the minimum accuracy can be achieved as (MP)) and we represent this method with $\bar{\text{(M1)}}$. To be precise, for (M1) and (MP) presented in Fig.~\ref{fig:prob_motion}-(b), $\bar{\text{(M1)}}$ will be the confidence lines each intersect with (MP) at the respective peak points. By design, we expect to show that (MP) is as robust as (M1) and (MP) reaches the speed of $\bar{\text{(M1)}}$ with marginal accuracy decrease.  In these tables, we report two measures for each method given number of sequences such that $p_\text{stop}$ represents the probability of stopping, $p_{\text{true}|\text{stop}}$ represents the accuracy among all terminations (e.g. $p_{\text{true}|\text{stop}}=0.5, p_\text{stop}=0.6$ means the system correctly selected 300 out of 600 terminated in 1000 Monte Carlo (MC) simulations). 
{\color{rev_color}In our results, we highlight $p_\text{stop}\geq .5$ indicating a reasonable termination operational point of (MP). E.g. in Table \ref{tab:disfavored_stuff} (MP) at $5^\text{th}$ sequence terminated $59\%$ of all simulated classification tasks with $97\%$ accuracy. Observe that high probability of termination $p_\text{stop}$ is desired only if the accuracy $p_{\text{true}|\text{stop}}$ within the terminated recursions is high enough.}
{
\begin{table}[!t]
\begin{center}
\begin{minipage}{0.16\columnwidth}
		\centering
		\includegraphics[width=17mm]{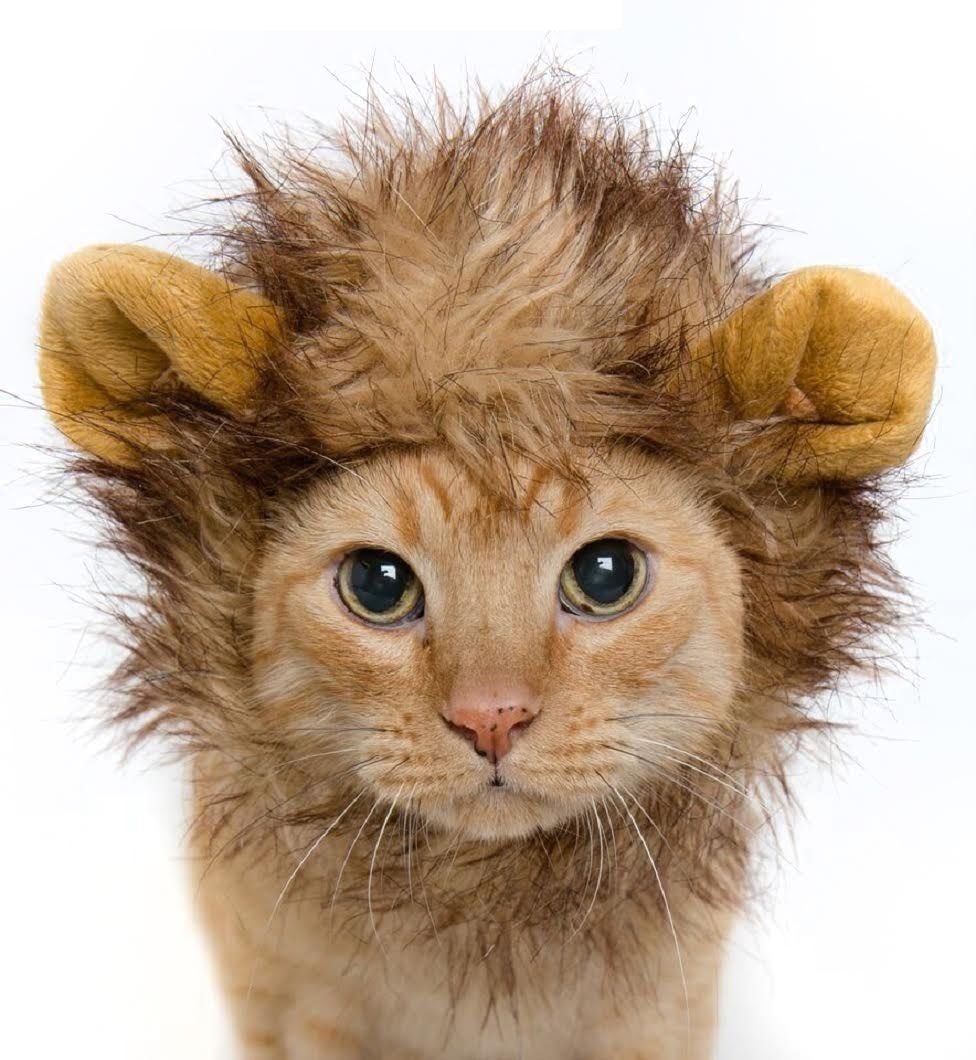}\\
		\tiny{\textsf{$p(\text{cat}) = 0.42$ \\ $p(\text{lion})= 0.55$}\\
		s.t. $|\mathcal{A}|= 3$ }
\end{minipage}
\begin{minipage}{0.83\linewidth}
\footnotesize
\centering
\scalebox{0.77}{
\begin{tabular}{c c | c c c c a c c }
& &\multicolumn{7}{c}{Number of Sequences} \\
Method & &1 &2 &3 &4 &5 &6 &7 \\
\hline
\multirow{2}{*}{MP} 
&$p_\text{stop}$  & 0.00 & 0.06 & 0.22 & 0.43 & \textbf{0.59} & 0.67 & 0.77\\
&$p_{\text{true}|\text{stop}}$  & 0.00 & 0.67 & 0.86 & 0.96 & \textbf{0.97} & 0.98 & 0.99\\
\hline
\multirow{2}{*}{M1} 
&$p_\text{stop}$  & 0.00 & 0.06 & 0.22 & 0.43 & \textbf{0.59} & 0.67 & 0.77\\
&$p_{\text{true}|\text{stop}}$  & 0.00 & 0.67 & 0.86 & 0.96 & \textbf{0.97} & 0.98 & 0.99\\
\hline
\multirow{2}{*}{M2} 
&$p_\text{stop}$ & 0.00 & 0.08 & 0.27 & 0.47 & \textbf{0.62} & 0.70 & 0.79\\
&$p_{\text{true}|\text{stop}}$  & 0.00 & 0.67 & 0.85 & {0.95} & \textbf{0.95} & 0.97 & 0.98\\
\hline
\multirow{2}{*}{M3} 
&$p_\text{stop}$  & 0.00 & {0.34} & \textbf{0.56} & 0.70 & 0.80 & 0.85 & 0.90\\
&$p_{\text{true}|\text{stop}}$ & 0.00 & {0.20} & \textbf{0.48} & 0.63 & 0.74 & 0.80 & 0.87\\
\hline
\multirow{2}{*}{M4} 
&$p_\text{stop}$  & \textbf{1.00} & {1.00} & 1.00 & 1.00 & 1.00 & 1.00 & 1.00\\
&$p_{\text{true}|\text{stop}}$  & \textbf{0.00} & {0.55} & 0.72 & 0.82 & 0.87 & 0.90 & 0.94\\
\hline
\multirow{2}{*}{M5} 
&$p_\text{stop}$  & 0.00 & 0.00 & 0.37 & 0.41 & 0.46 & \textbf{0.55} & 0.66\\
&$p_{\text{true}|\text{stop}}$  & 0.00 & 0.00 & 0.58 & 0.76 & 0.87 & \textbf{0.91} & 0.95\\
\hline
\multirow{2}{*}{$\bar{\text{M1}}$}
&$p_\text{stop}$  & 0.00 & 0.16 & 0.37 & \textbf{0.56} & 0.71 & 0.77 & 0.84\\
&$p_{\text{true}|\text{stop}}$  & 0.00 & 0.64 & 0.84 & \textbf{0.93} & 0.95 & 0.97 & 0.98\\
\end{tabular}
}
\end{minipage}
\end{center}
\caption{A toy example with 3 class recursive classification. The set contains "cat, lion, dog" whereas the prior information for the given image (left) is $p = [0.42, 0.55, 0.03]$ with $\tau = .8$. 
{\color{rev_color}Observer that the prior is located on the edge (See Sec.~\ref{subsec:frail_confidence}).}
We proceed in the recursive classification with evidence $\varepsilon = [\varepsilon_+, \varepsilon_-, \varepsilon_-]$ where $\varepsilon_+\sim \text{lognorm}(0.6,0.5^2)$ and $\varepsilon_- \sim \text{lognorm}(0,0.5^2)$. We perform 5000 MC simulations to report each number. On the table (right), we present probability of stopping with the given criteria and accuracy among times the system stopped. As expected entropy (M3) staggers by stopping $34\%$ and with only $20\%$ accuracy within second sequence. More importantly, as we reasoned before, Renyi entropies with order $\alpha <1$ (M4), $\alpha = 0.2$ in this case, create ambiguous decision regions which leads to stopping all the time as presented.{\tiny(imagesrc:https://www.amazon.com/Pet-Krewe-PK00101-Costume-Small/dp/B010E4TAKW)}
\vspace{-7mm}}
\label{tab:disfavored_stuff}
\end{table}}
{
\begin{table}[!t]
\begin{center}
\begin{minipage}{0.16\columnwidth}
		\centering
		\includegraphics[width=18mm]{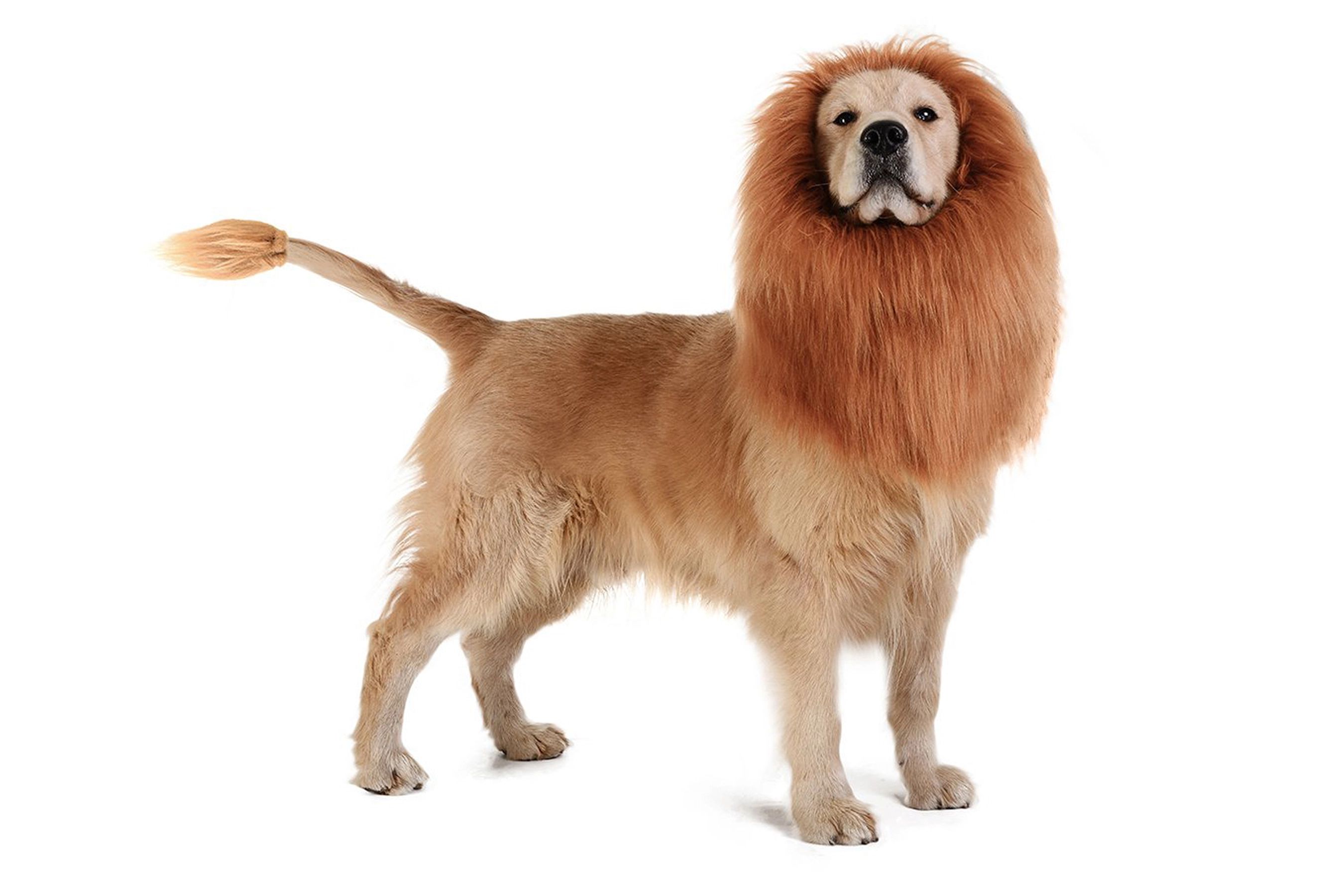}\\
		\tiny{\textsf{$p(\text{cat}) = 0.13$ \\ $p(\text{lion})= 0.52$ \\ $p(\text{dog})= 0.30$
		} \\ 
		s.t. $|\mathcal{A}|= 10$ }
\end{minipage}
\begin{minipage}{0.83\linewidth}
\footnotesize
\scalebox{0.66}{
\begin{tabular}{c c | c c c c a c c c c  }
& &\multicolumn{9}{c}{Number of Sequences} \\
Method & &1 &2 &3 &4 &5 &6 &7 &8 &9\\
\hline
\multirow{2}{*}{MP} 
&$p_\text{stop}$  & 0.00 & 0.08 & 0.26 & 0.42 & \textbf{0.61} & 0.70 & 0.78 & 0.82 & 0.88 \\
&$p_{\text{true}|\text{stop}}$ & 0.00 & 0.36 & 0.78 & 0.89 & \textbf{0.94} & 0.96 & 0.97 & 0.98 & 0.98 \\
\hline
\multirow{2}{*}{M1} 
&$p_\text{stop}$ & 0.00 & 0.03 & 0.18 & 0.35 & \textbf{0.54} & 0.66 & 0.76 & 0.81 & 0.86 \\
&$p_{\text{true}|\text{stop}}$ & 0.00 & 0.29 & 0.83 & 0.90 & \textbf{0.94} & 0.97 & 0.97 & 0.98 & 0.98 \\
\hline
\multirow{2}{*}{M2} 
&$p_\text{stop}$ & 0.00 & 0.05 & 0.23 & 0.41 & \textbf{0.61} & 0.72 & 0.81 & 0.85 & 0.88 \\
&$p_{\text{true}|\text{stop}}$  & 0.00 & 0.32 & 0.80 & 0.89 &\textbf{0.94} & 0.96 & 0.97 & 0.98 & 0.98 \\
\hline
\multirow{2}{*}{M3} 
&$p_\text{stop}$  & \textbf{1.00} & 1.00 & 1.00 & 1.00 & 1.00 & 1.00 & 1.00 & 1.00 & 1.00 \\
&$p_{\text{true}|\text{stop}}$ & \textbf{0.00} & 0.43 & 0.63 & 0.74 & 0.84 & 0.87 & 0.91 & 0.93 & 0.94 \\
\hline
\multirow{2}{*}{M4} 
&$p_\text{stop}$  & \textbf{1.00} & 1.00 & 1.00 & 1.00 & 1.00 & 1.00 & 1.00 & 1.00 & 1.00\\
&$p_{\text{true}|\text{stop}}$  & \textbf{0.00} & 0.43 & 0.63 & 0.74 & 0.84 & 0.87 & 0.91 & 0.93 & 0.94\\
\hline
\multirow{2}{*}{M5} 
&$p_\text{stop}$ & 0.00 & 0.00 & 0.35 & 0.40 & 0.47 & \textbf{0.55} & 0.62 & 0.69 & 0.76 \\
&$p_{\text{true}|\text{stop}}$ & 0.00 & 0.00 & 0.46 & 0.65 & 0.82 & \textbf{0.88} & 0.92 & 0.95 & 0.96 \\
\hline
\multirow{2}{*}{$\bar{\text{M1}}$}
&$p_\text{stop}$  & 0.00 & 0.48 & \textbf{0.67} & 0.79 & 0.88 & 0.92 & 0.94 & 0.95 & 0.97 \\
&$p_{\text{true}|\text{stop}}$  & 0.00 & 0.38 & \textbf{0.70} & 0.80 & 0.87 & 0.90 & 0.93 & 0.95 & 0.96 \\
\end{tabular}
}
\end{minipage}
\end{center}
\caption{Toy example employing the properties listed in Table\ref{tab:disfavored_stuff}. Here we are in a $10$ class recursive classification scenario. To highlight the difference we set $\tau = .75$ and $\varepsilon_+\sim \text{lognorm}(0.8,0.5^2)$ and $\varepsilon_- \sim \text{lognorm}(-0.3,0.5^2)$. We perform 5000 MC simulations to report each number. Here we start the recursive classification where 3 candidates share the probability weight and the remaining. Share the remainder almost uniformly. Observe that as class space expands (M3) and (M4) behave similarly. Most importantly (MP) is robust and behave similar to (M1) where the lower bound $\bar{\text{(M1)}}$ results a deficit in accuracy.{\tiny(imagesrc:https://people.com/pets/tomsenn-lion-mane-dog-costume-amazon/)}
\vspace{-7mm}}
\label{tab:disfav_2}
\end{table}}
{
\begin{table}[!t]
\begin{center}
\scalebox{0.77}{
\begin{tabular}{c c |  c c c c a c c c }
& &\multicolumn{7}{c}{Number of Sequences} \\
Method &  &10 &11 &12 &13 &14 &15 &16 &17\\
\hline
\multirow{2}{*}{MP} 
&$p_\text{stop}$   & 0.09 & 0.18 & 0.31 & 0.45 & \textbf{0.60} & 0.70 & 0.77 & 0.86 \\
&$p_{\text{true}|\text{stop}}$  & 0.95 & 0.98 & 0.98 & 0.99 & \textbf{0.99} & 0.99 & 1.00 & 1.00 \\
\hline
\multirow{2}{*}{M1} 
&$p_\text{stop}$  & 0.05 & 0.11 & 0.21 & 0.33 & 0.48 & \textbf{0.60} & 0.70 & 0.81 \\
&$p_{\text{true}|\text{stop}}$  & 1.00 & 0.99 & 0.99 & 1.00 & 1.00 & \textbf{1.00} & 1.00 & 1.00 \\
\hline
\multirow{2}{*}{M2} 
&$p_\text{stop}$  & 0.05 & 0.11 & 0.21 & 0.33 & 0.48 & \textbf{0.61} & 0.70 & 0.81 \\
&$p_{\text{true}|\text{stop}}$   & 1.00 & 0.99 & 0.99 & 1.00 & 1.00 & \textbf{1.00} & 1.00 & 1.00 \\
\hline
\multirow{2}{*}{M3} 
&$p_\text{stop}$ & 0.05 & 0.12 & 0.22 & 0.34 & 0.49 & \textbf{0.61} & 0.71 & 0.81 \\
&$p_{\text{true}|\text{stop}}$ & 1.00 & 0.99 & 0.99 & 1.00 & 1.00 & \textbf{1.00} & 1.00 & 1.00 \\
\hline
\multirow{2}{*}{M4} 
&$p_\text{stop}$ & 0.05 & 0.13 & 0.23 & 0.35 & {0.49} & \textbf{0.62} & 0.72 & 0.82\\
&$p_{\text{true}|\text{stop}}$ & 0.98 & 0.99 & {0.99} & 1.00 & {1.00} & \textbf{1.00} & 1.00 & 1.00 \\
\hline
\multirow{2}{*}{M5} 
&$p_\text{stop}$ & 0.00 & 0.00 & 0.00 & 0.00 & 0.00 & 0.01 & 0.03 & 0.08 \\
&$p_{\text{true}|\text{stop}}$ & 0.00 & 0.00 & 0.00 & 0.00 & 0.00 & 1.00 & 1.00 & 1.00 \\
\hline
\multirow{2}{*}{$\bar{\text{M1}}$}
&$p_\text{stop}$ & 0.10 & 0.20 & 0.33 & 0.47 & \textbf{0.61} & 0.72 & 0.79 & 0.88 \\
&$p_{\text{true}|\text{stop}}$ & 0.94 & 0.98 & 0.98 & 0.99 & \textbf{0.99} & 0.99 & 0.99 & 1.00 \\
\end{tabular}

}
\end{center}
\caption{Toy example employing the properties listed in Table\ref{tab:disfavored_stuff}. Prior information in MC simulations has a fixed prior probability for the true class that is set to $0.1$ and the remainder probability is distributed using a normalization to 0.9 sum value after random assignment at each simulation. 
{\color{rev_color}Therefore the prior is in a central position away from the corner of true class and will follow a central path (See Sec.\ref{subsec:posterior_motion}).}
We set $\tau = .85$ and $\varepsilon_+\sim \text{lognorm}(0.8,0.5^2)$ and $\varepsilon_- \sim \text{lognorm}(-0.3,0.5^2)$. Observe that (MP) provides early stopping achieving the speed of $\bar{\text{(M1)}}$ by losing marginal accuracy compared to (M1). As the posterior is following a central path, uncertainty based methods and confidence methods behave similarly as explained in Sec.\ref{subsec:posterior_motion}.
\vspace{-7mm}}
\label{tab:disfav_3}
\end{table}} 
{\color{rev_color}Table \ref{tab:disfavored_stuff} and Table \ref{tab:disfav_2} collectively explain items (1)-(2) such that, in these tables $1/3$ and $7/10$ classes are disfavored already in the beginning of estimation. Priors with disfavored classes lead to a trajectory that follows a path closer to sides of the simplex and hence result in rushed or immediate stopping yielding probability $p(\text{stop})\approx 1$ within few sequences. However, for both (M3)(M4) accuracy $p(\text{true}|\text{stop})$ is very low and as we increase dimensionality $3\rightarrow 10$ in Table \ref{tab:disfav_2} (M3) characteristics gets closer to (M4) (as also discussed in the manuscript Fig.~\ref{fig:effects_cardinality}-(a)). In such cases a method should collect evidence to increase its accuracy to a desired level and as promised, (MP) performs comparably to confidence based methods reaching $p(\text{true}|\text{stop})\geq 94$ in exchange for 5 sequences. Observe lowering confidence threshold with $\bar{\text{(M1)}}$ results in a significant $27\%$ accuracy loss (Table \ref{tab:disfav_2}) for which (MP) still achieves high accuracy.}
{\color{rev_color}Table \ref{tab:disfav_3} demonstrates (3) where the prior is designed such that the probability of the true class is set to $0.1$ where the remaining probabilities are almost equally distributed. This places the prior probability on a centric position and hence the posterior follows a more central path. Differently in this experiment all methods require $\geq 10$ sequences to terminate. Following Sec.~\ref{subsec:posterior_motion}, uncertainty based methods and confidence behave similar. In such scenarios (MP) achieves similar accuracy  $p(\text{true}|\text{stop})$ 1 sequence earlier at $14^\text{th}$ sequence with $p(\text{stop})\geq .5$ compared to other methods (i.e., (MP) is faster with the same accuracy compared to (M1-5)). Observe that, (MP) stops earlier at a lower confidence threshold similar to $\bar{\text{(M1)}}$ but this is without the accuracy expense in previous scenario where $\bar{\text{(M1)}}$ sacrifices $24\%$ accuracy (see Table~\ref{tab:disfav_2} $\bar{\text{(M1)}}$ column-3).}

To summarize the findings, we visualize time accuracy trade-off for each method for the cases presented in Tables \ref{tab:disfav_2} and \ref{tab:disfav_3}. These results are presented in Fig.~\ref{fig:Final_figure_performance}. {\color{rev_color}Top figure in Fig.~\ref{fig:Final_figure_performance}, we observe that (MP) allows us to select (M1)'s sub-curve with high accuracy unlike (M1L=$\bar{\text{M1}}$, M2,M3). From bottom illustration, where the posterior trajectory follows a central path, we observe that (MP) provides a better speed-accuracy operation curve compared to all other methods.}

%
\begin{figure}[t]
\centering
		\centering
	   \subfigure{\includegraphics[width=.85\columnwidth]{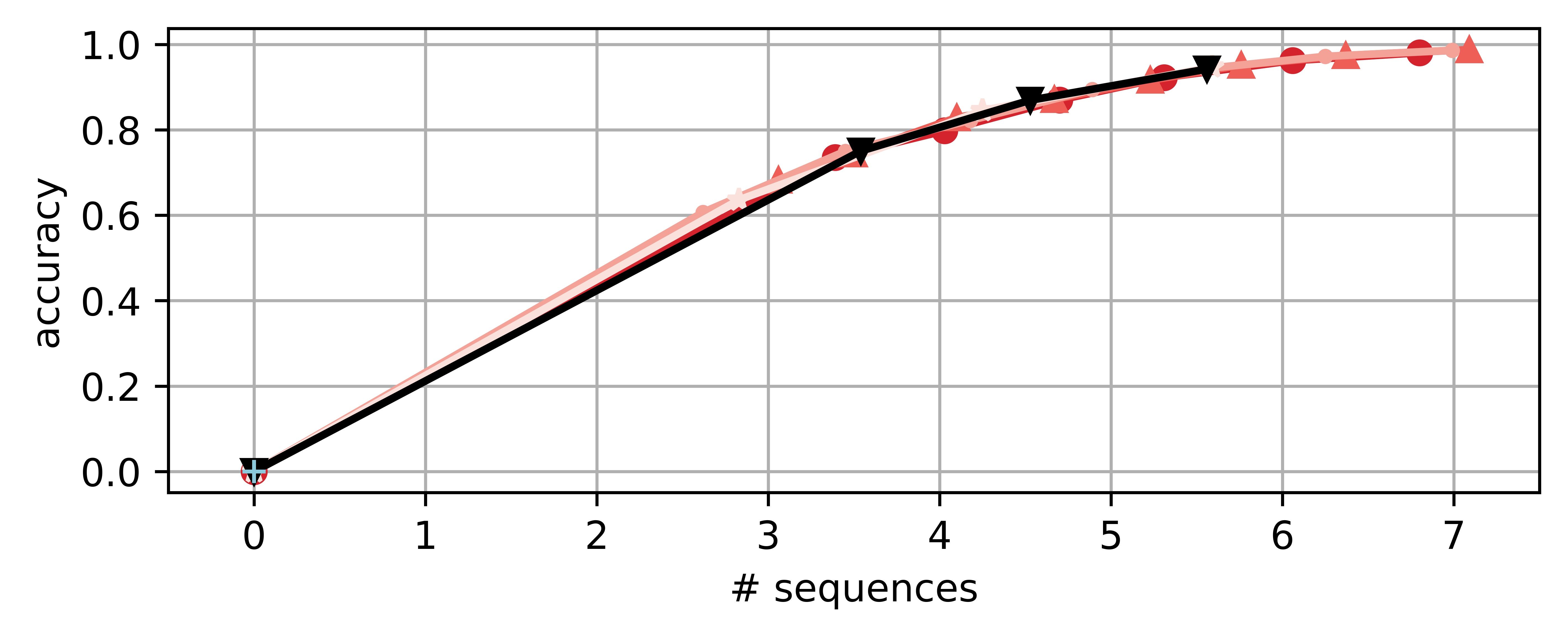}}
	   \subfigure{\includegraphics[width=.85\columnwidth]{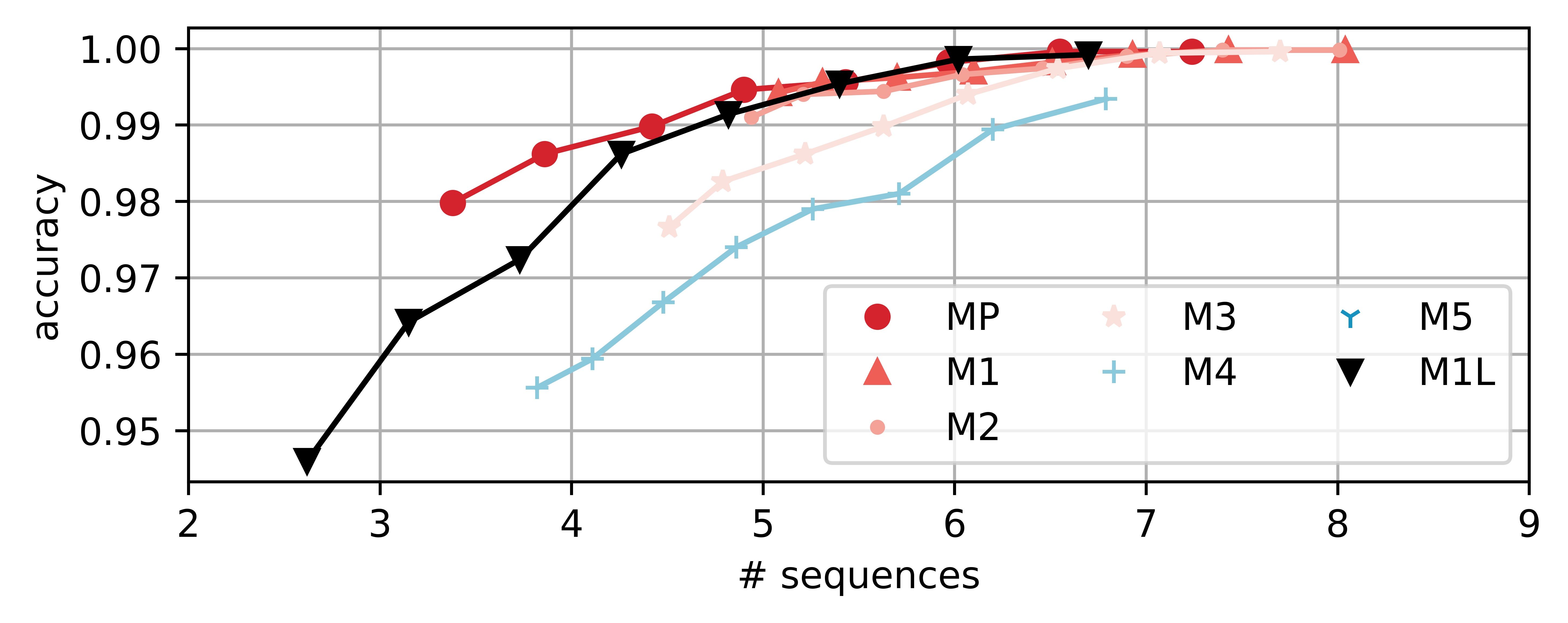}}\\
      \caption{Summary of operation characteristics of the methods using the starting conditions for recursive classification presented in Table\ref{tab:disfav_2} and Table\ref{tab:disfav_3} respectively on top and bottom. For our experiments we use $\varepsilon_+ \sim \text{lognorm}(0.6,0.5^2)$ and $\varepsilon_- \sim \text{lognorm}(0,0.5^2)$. We run 5000 recursive classification simulations and report average accuracies and average sequences for different $\tau$ values. Each line in the figure are drawn with a collection of (accuracy, sequence) points where each are computed for $\tau = [0.65,0.69,0.72,0.76,0.79,0.83,0.86,0.9]$ from left to right. We omit (M5) as it spends way more sequences. Observe that (M4) staggers, especially for the disfavored case (left). (MP) allows us to select an operation point that favors accuracy in the disfavored and gains speed by losing marginal accuracy where the posterior is following a central path (right).}
    \label{fig:Final_figure_performance}
    \vspace{-0.3cm}
\end{figure}
}

\subsection{Real Data Experiments}
\subsubsection{Experimental Details}
In our experiments we use a BCI typing system called RSVP Keyboard presented in Orhan's work \cite{orhan2012rsvp}
{and the implementation BCIPy (https://github.com/CAMBI-tech/BciPy) \cite{memmott2020bcipy}. The system is visualized for the stimulus screen and an actual healthy participant performing a task in Fig.~\ref{fig:RSVPKeyboardFig}.}
{\color{rev_color}System is tasked to correctly classify user's intended symbol in mind ($\sigma$). The system utilizes EEG as evidence ($\varepsilon$) and a language model that serves as a prior ($p(\sigma|\mathcal{H}_0)$). A sequence in this application refers to a series of rapid letter presentations to the user and collecting respective evidence. At the beginning of each sequence the system selects a predetermined number of most likely letters ($\Phi_s$) with respect to latest posterior distribution $p(\sigma|\mathcal{H}_{s-1})$. Due to noisy observations ($\varepsilon_s$) at each sequence, the system relies on recursive evidence collection. Collected EEG evidence and prior information is fused in a Bayesian fashion ($p(\sigma | \mathcal{H}_1) = p(\sigma\lvert\mathcal{H}_0) \oplus p(\beps_0\lvert \sigma, \Phi_0)$ see Appendix~\ref{sec:how_algebra_works}). In the signal p300 evoked potential presence corresponds to a positive response and absence corresponds to a negative response. Once intended symbol appears on the computer screen ($\sigma\in\Phi_s$), subject's recorded brain signal evokes a distinguishable response \cite{woodman2010brief}. The system terminates evidence collection once a stopping condition is met and in this section we show that (MP) provides early stopping with marginal accuracy penalty.}

{\textbf{Data Collection:} Ten healthy participants (six females), 20-35 years old were recruited under IRB-130107 protocol approved by Northeastern University. A DSI-24 Wearable Sensing EEG Headset was used for data acquisition, at a sampling rate of 300 Hz with active dry electrodes.All participants performed the calibration session containing 100 sequences; each sequence includes 5 trials; and one trial in each sequence is the target symbol which is displayed on the screen prior to each sequence (RSVP paradigm).}
A sequence contains randomly ordered ten symbols with a pre-defined target symbol. EEG is acquired from 16 channels using the International 10–20 configuration (Fp1, Fp2,F3, F4, Fz, Fc1, Fc2, Cz, P1, P2,C1, C2, Cp3, Cp4, P5, P6). 
Recorded EEG are used to learn class conditional EEG evidence distributions. 

\begin{figure}[t!]
    \centering
      \subfigure{\includegraphics[width=\linewidth]{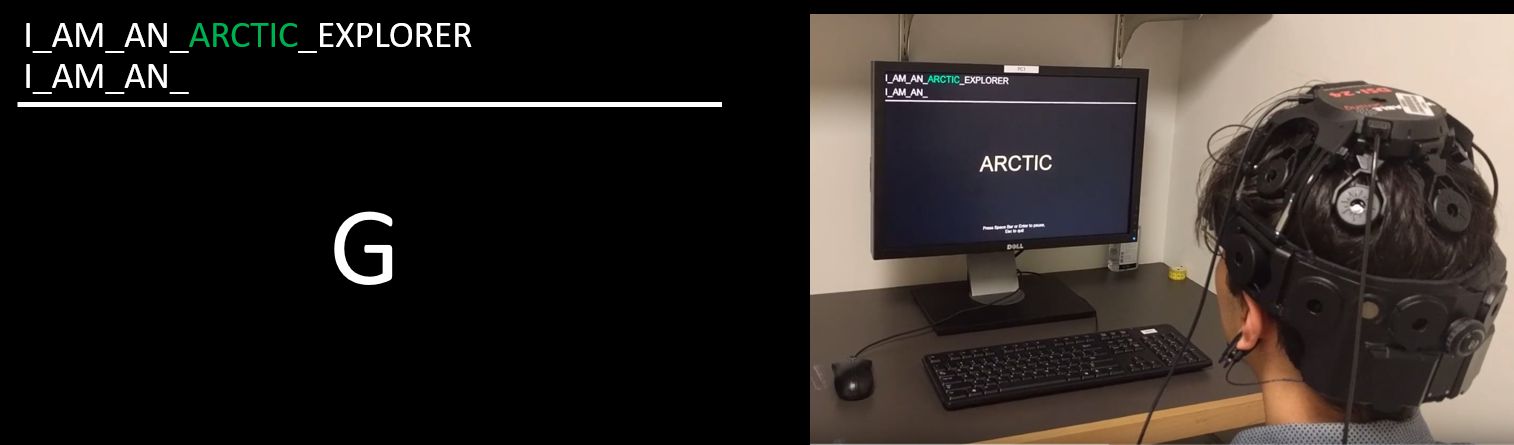}}
      \textsf{(a) \hspace{38mm} (b)}
      \caption{{EEG driven Rapid Serial Visual Presentation (RSVP) keyboard typing interface. (a) The stimuli is flashed in the middle of the screen while the user is informed with the text above. (b) The user is conducting copying the phrase task (multiple copy letter tasks). The user is informed about the required phrase. EEG is collected on top of the scalp non-invasively.}}
      \label{fig:RSVPKeyboardFig}
      \vspace{-0.2cm}
\end{figure}

{\textbf{Pre-Processing:} In EEG-BCIs the primary interest of filtering is to extract the signal of interest components~\cite{mur14,moghadam15}. We first filter the EEG signal to remove drifts and artifact-related high frequency components with a band pass filter for [1,50]Hz. After filtering, EEG is windowed to extract the respective evidence at each channel for stimuli presentations. Time-windowed data from different EEG channels is usually concatenated to obtain the EEG feature vector that has a high dimension because of using a multi-channel measurement. Therefore, dimensionality reduction using ICA or PCA is also needed~\cite{mur14}. The system relies on reducing EEG time series into one dimensional feature vector. Filtered multi-channel EEG data {time windows are} passed through channel-wise principal component analysis where the outputs are concatenated to an intermediate feature vector. We assume in each class, feature vectors are drawn from a multivariate Gaussian distribution and hence Regularized discriminant analysis (regularized quadratic discriminant analysis \cite{friedman1989regularized}) is a plausible choice that results in one dimensional representation of the signal. Each positive and negative sample in the calibration EEG data is reduced to a single dimensional feature and positive and negative feature distributions are learned accordingly.}

{\textbf{Experimental Task:}
For our experiments we use these distributions in Copy-Letter task such that each user's data is used to type a target letter within a pre-determined phrase for multiple phrases~\cite{orhan2016probabilistic}.
More specifically, for each letter typing scenario the user is tasked to respond to the system, and the decision is made when the cumulative evidence matches the correct letter after multiple recursions. Therefore, in such a setting, the decision chance level is 0.03\%. }
To make a decision the system recursively queries the user with multiple letter flashes. We designate the number of queries to be presented at each sequence to be  $N\in\lbrace15,10,5 \rbrace$. We present the results for two conditions. First one is a typing scenario with no language model (uniform prior information). And in the second scenario there exists a language model for the requested typing and the candidate letter is in top 16. 
{ The choice of two scenarios illustrated in the numerical results is to represent the following; (i) Class priors are uniform (in the 28-vertex simplex, RBC starts from the center of the simplex labeled as $u$ in Fig.~\ref{fig:simplex_repr}-(d). (ii) The prior probability for the correct class label (desired letter of the user) is selected to be significantly lower than in the first case (in the 28-vertex simplex, RBC starts further away from the target vertex). These cases represent typical situations with uninformative prior and adversarial prior,  additionally these are the challenging cases for RBC.
}
\subsubsection{Results}
%
\begin{figure}[!t]
      \subfigure{\includegraphics[width=.8\columnwidth]{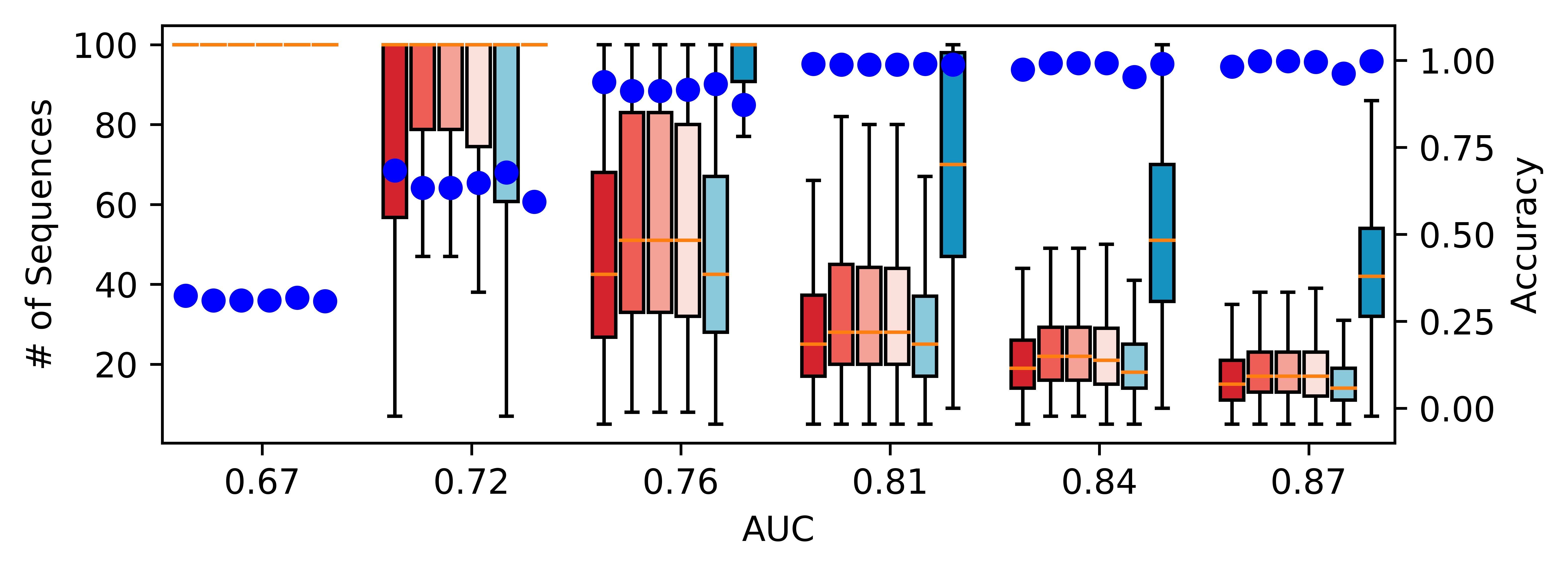}}
      \subfigure{\includegraphics[width=.8\columnwidth]{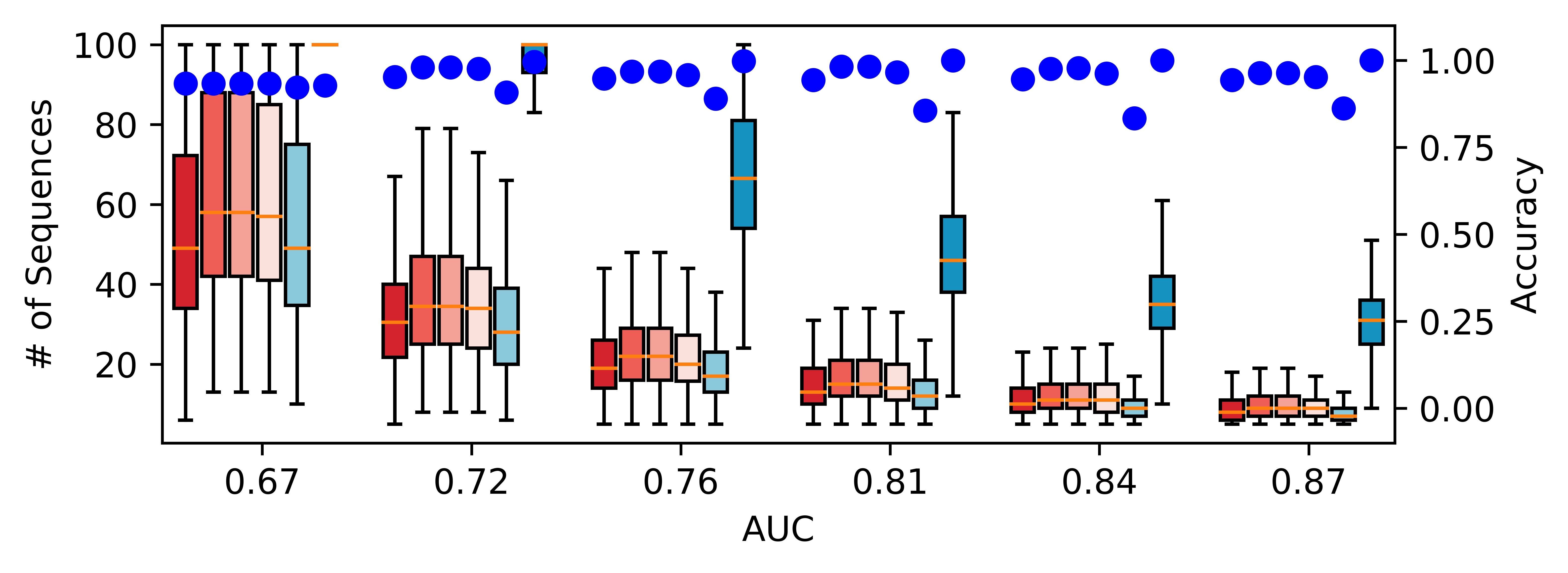}}
      \subfigure{\includegraphics[width=.80\columnwidth]{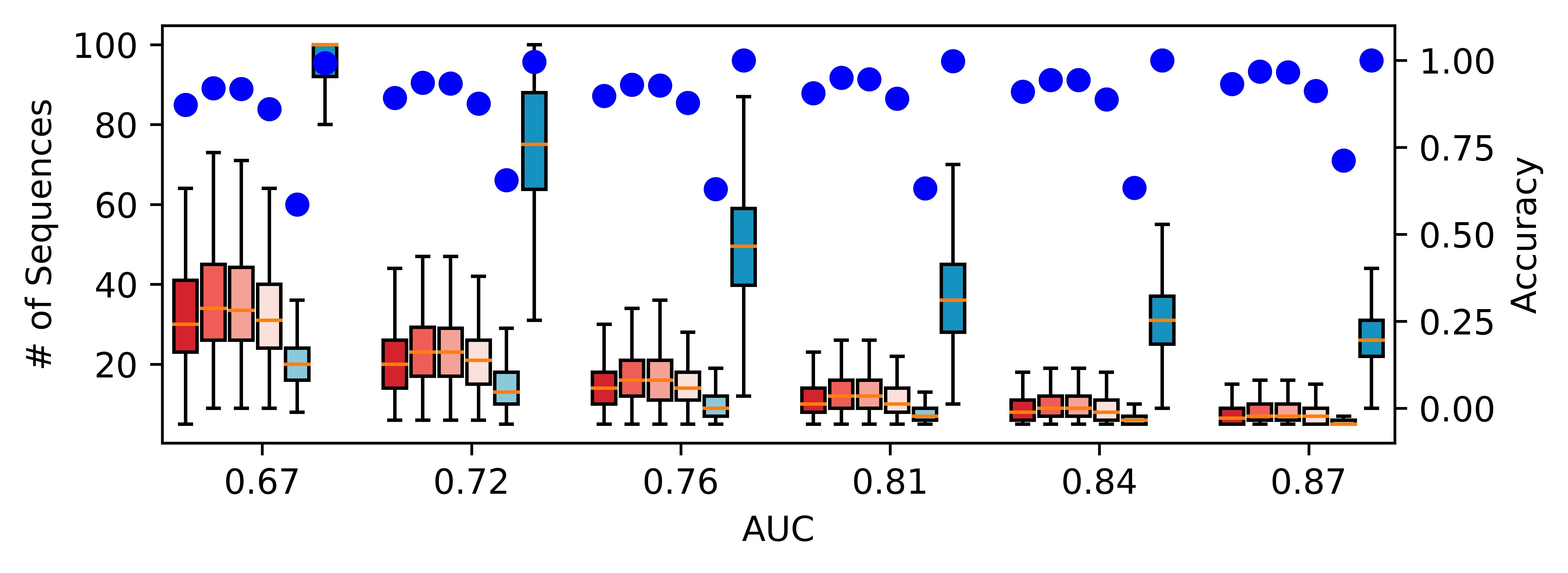}}
      \subfigure{\includegraphics[width=.15\columnwidth]{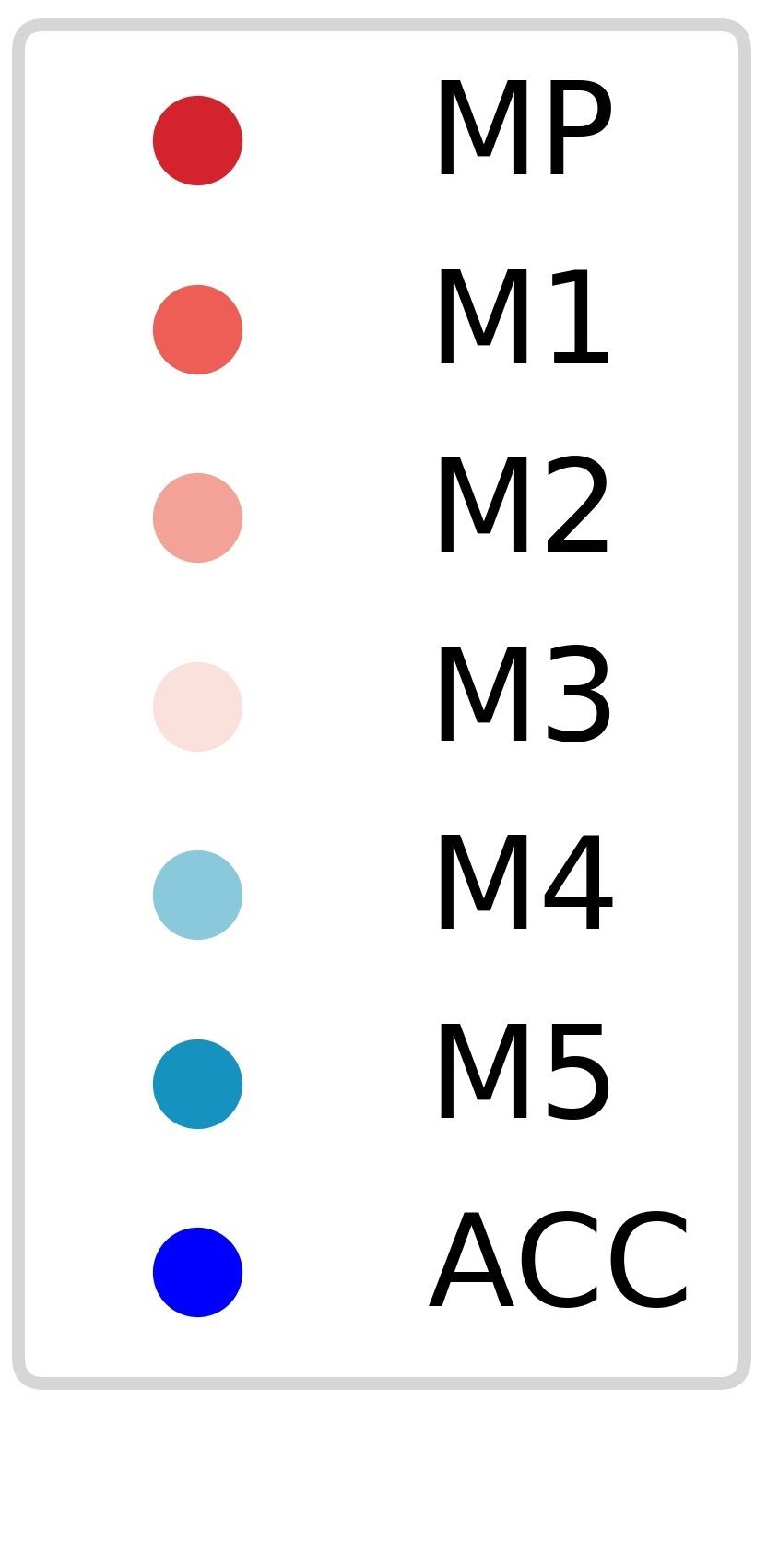}}
      \vspace{-0.4cm}
      \caption{Number of recursion spent and accuracy plots for recursive classification in BCI typing system. Each scenario is generated using human-in-the-loop calibration data trained generative models. In each figure results are presented in ascending order of performance measures (area under receiver operation characteristics curve (AUC)). Number of queries in each recursion from top to bottom, 5, 10, 15 respectively. Legend covers methods from left to right and dots on the figures represent respective accuracy values. The users tried to type "A" without any language model (uniform prior information). Top to bottom legend order is from left to right for each block.}
    \label{fig:factor_plots_synth_1}
    \vspace{-0.3cm}
\end{figure}
In our experiments we categorize the users based on their calibration performances. The measure of performance is the area under receiver operating characteristics curve (AUC) of classification based on the features extracted during calibration. We specifically selected user data with AUC performances $\lbrace0.67,0.72,0.76,0.81,0.84,0.87\rbrace$ to have a spectrum of ranging performances. 

We visualize our findings in Fig.~\ref{fig:factor_plots_synth_1} and \ref{fig:factor_plots_synth_2} respectively. {It is observed that (MP) switches  the operation point to a location such that faster results are obtained with the cost of small amount of decreases in accuracy. To show the significance of the (MP), we present here a scenario that includes correctly typing 100 letters on a computer screen. 
{We refer the reader to appendix Sec.\ref{sec:bci_supplementary} for the complete results that allowed us to generate visualizations.}

Comparing with the conventional method (M1), when uniform prior is used, (MP) outperforms (M1) in terms of speed, i.e., (MP) and (M1) complete the same task with 1735 and 1945 sequences respectively. Accordingly (MP) saves 210 sequences. This corresponds to saving $3(m)30(s)/32(m)$ lifetime during typing. Additionally, if a language model prior is used for the same task, (MP) still outperforms (M1), 1580 sequences vs 1728 sequences saving $2(m)24(s)/28(m)48(s)$ lifetime during typing. These reported amount of time are computed under the condition that at the end of the task 100 letters are completely correctly typed including corrections of the wrongly typed letters (i.e., email is completely correct).  Saving time is very crucial for practical BCI typing as these systems are designed for individuals with limited speech and physical abilities. Therefore, fatigue and discomfort caused by the BCI system are important factors that significantly affect the BCI typing performance, and limiting the time to complete the tasks accurately in practice. }

%
\begin{figure}[!t]
      \subfigure{\includegraphics[width=.8\columnwidth]{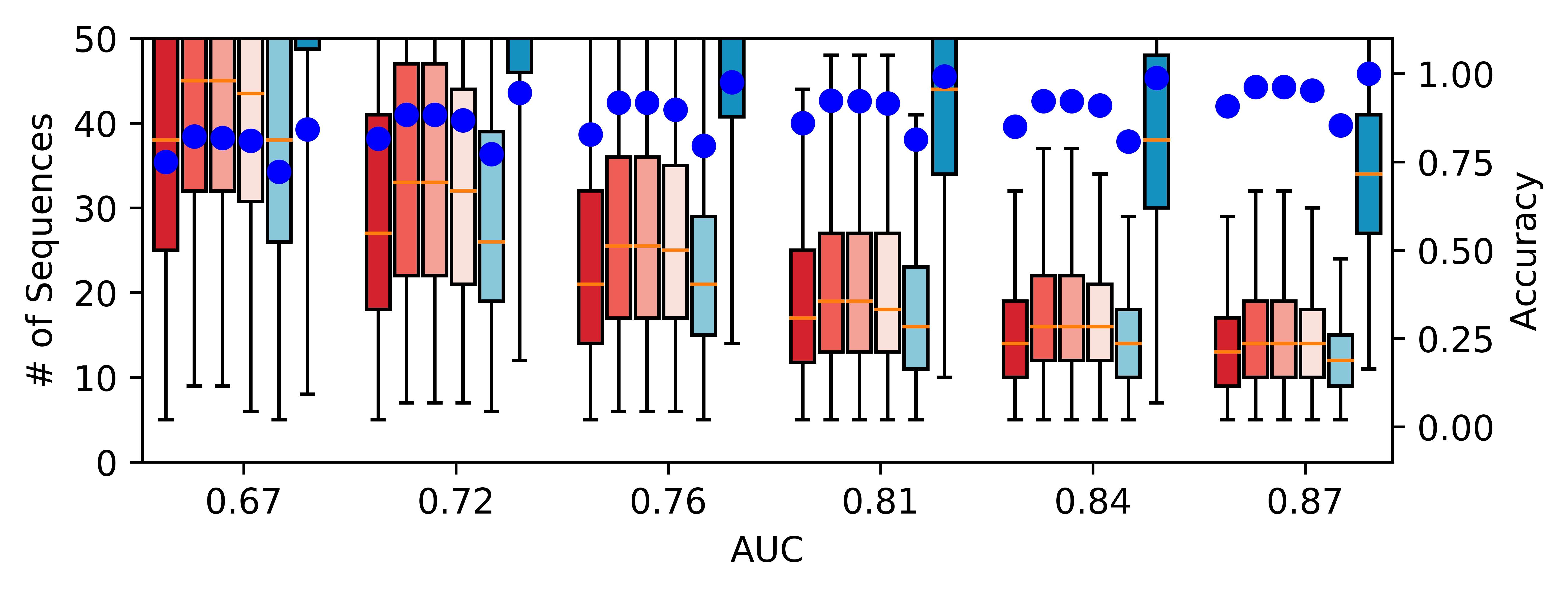}}
      \subfigure{\includegraphics[width=.8\columnwidth]{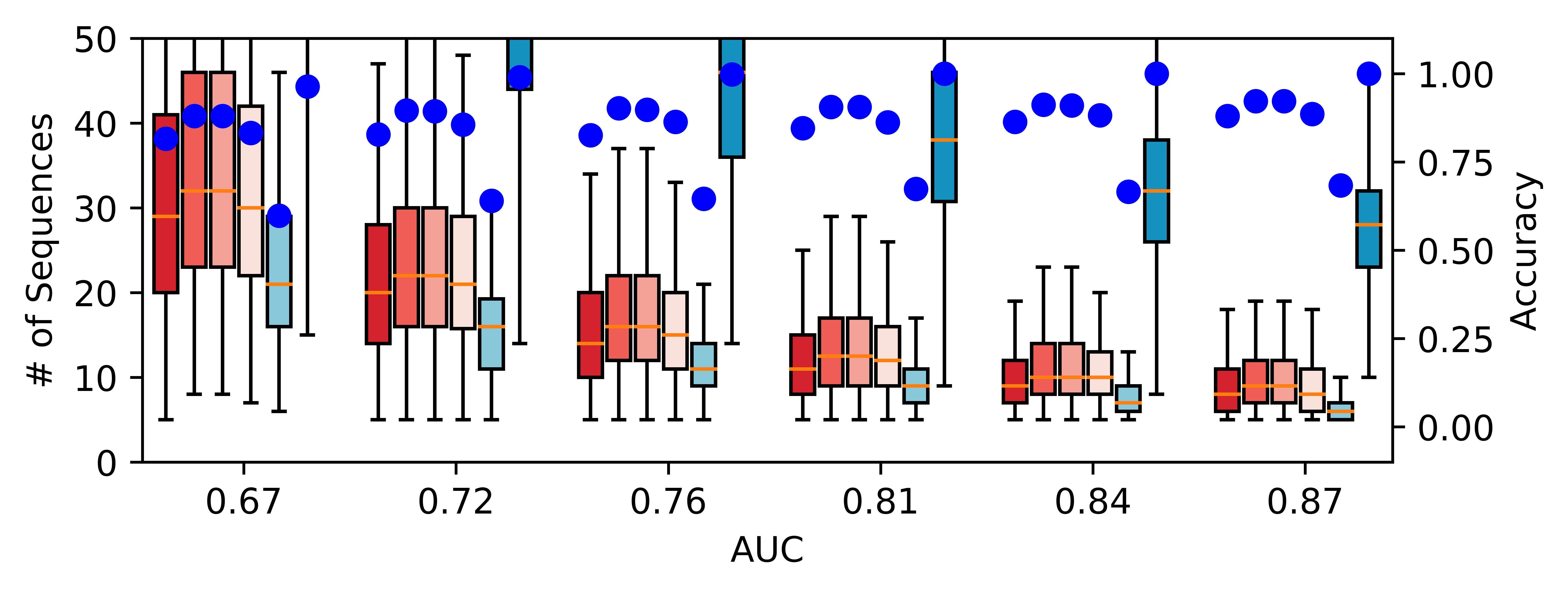}}
      \subfigure{\includegraphics[width=.8\columnwidth]{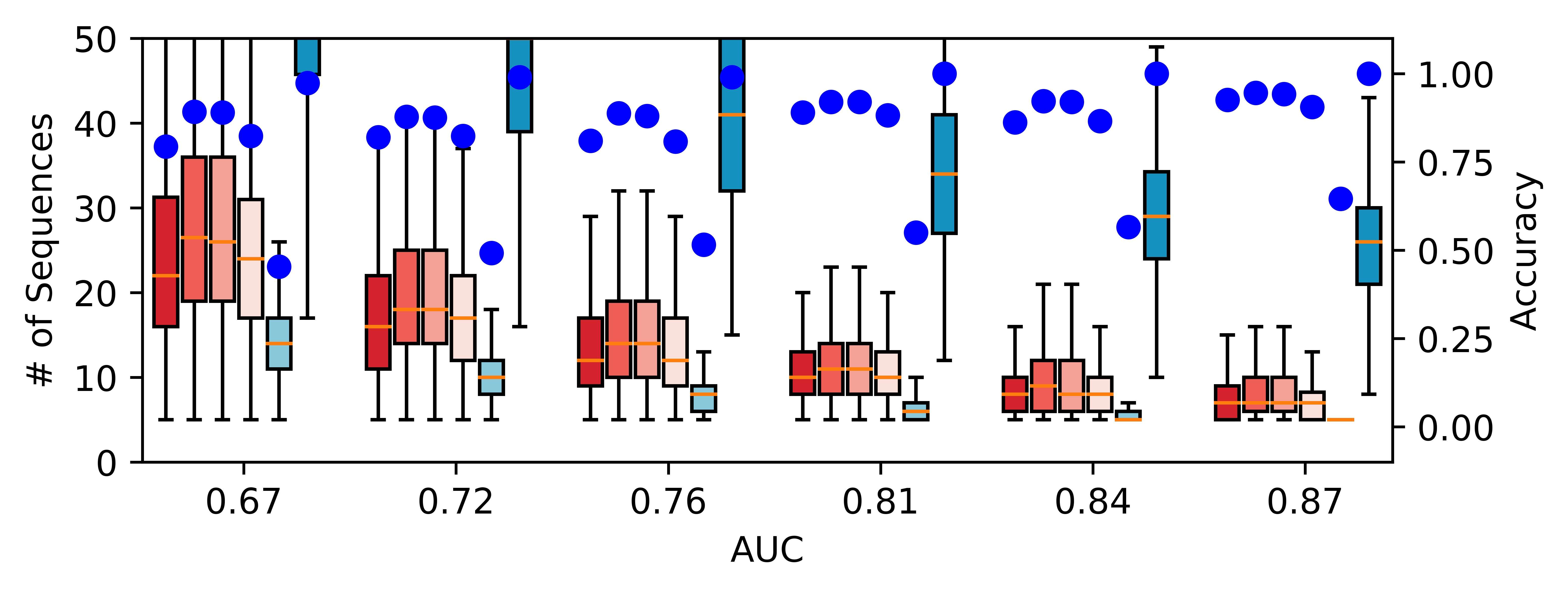}}
      \subfigure{\includegraphics[width=.15\columnwidth]{fig/factor_plots_synth/legend_.jpg}}
      \vspace{-0.4cm}
      \caption{The same implementation as described in Fig.\ref{fig:factor_plots_synth_1}. The difference here is the subjects were supported by a language model trying to type "IT\_O" given "IT\_" and hence even though "O" is not the top letter it is one of the likely letters. }
    \label{fig:factor_plots_synth_2}
    \vspace{-0.3cm}
\end{figure}

\section{Conclusion}
In this paper, our focus was on the analysis and design of stopping criterion for recursive Bayesian classification. 
{\color{rev_color}
Accordingly, we proposed a stopping condition design to overcome the sensitivity to number of classes of the existing uncertainty methods, while still providing an early stopping opportunity with marginal accuracy cost. Proposed method offers true positive probability rates above the true positive rates of confidence thresholding while limiting the false alarm rate which we have shown analytically. We experimentally validated that proposed approach sustains an accuracy value where uncertainty methods are suffering while still allow for early stopping where confidence based methods struggle. It is possible to stop earlier with a lower confidence threshold to stop as early as proposed method where posterior follows a central trajectory. Synthetic experiments show that, lowering the confidence threshold is penalized in accuracy for disfavored cases where the proposed approach still sustained a high accuracy level. This emphasizes the importance of utilizing RBC geometry not to experience unwanted accuracy loss in certain scenarios. Therefore, it is possible to say the proposed method is combining the strengths of conventional methods. We also validated the practical use case on a brain computer interfaced typing system. This paper focuses on proposing a perspective and the proposed piece wise linear approach in Sec.~\ref{sec:Method} is a simplistic yet powerful design. We aim to explore the limitations of the curve order in stopping criterion design and propose new objectives that result in better performance. Also in Sec.~\ref{sec:Experiments} we experiment with evidence models with varying performances. A future work can center around providing an analytical relationship between evidence model performance (e.g. variance, AUC etc.) and stopping region performance. 
}
This work can further be extended by considering true positive maximization and false negative minimization as a multi-objective optimization.

\nocite{barcelo2001mathematical}
\nocite{renyi1961measures}
{\tiny{\setstretch{0.8}
\bibliographystyle{IEEEtran}
\bibliography{ref}}
}
\begin{IEEEbiography}[{\includegraphics[width=1in,height=1.25in,clip,keepaspectratio]{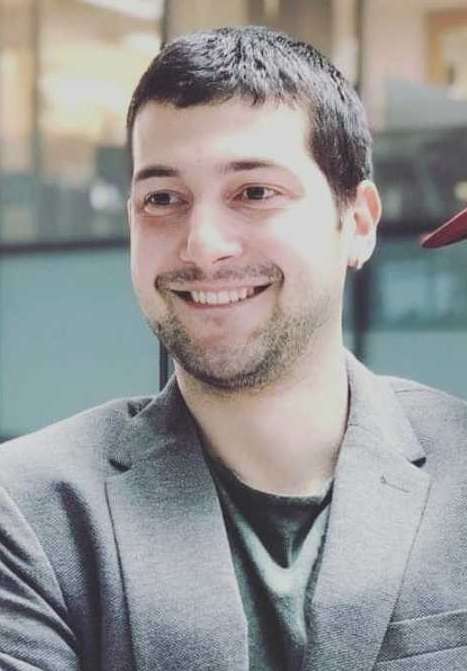}}]%
{Aziz Ko\c{c}anao\u{g}ullar\i}
received the B.S. degree in {\color{rev_color}electronics engineering and mathematics from Istanbul Technical University, Istanbul Turkey in 2014 and 2015 respectively. He received M.Sc. degree in telecommunications engineering in Istanbul Technical University in 2016. He received Ph.D. degree in electrical engineering in Northeastern University Boston, MA, USA in 2020. Since then he is a member of Computational Radiology Laboratory (CRL) in Boston Children's Hospital - Harvard Medical School Boston, MA, USA where he is currently a post doctoral research fellow. His main areas of research interest are active recursive inference in sequential decision making processes and active model learning.}
\vspace{-0.5cm}
\end{IEEEbiography}
\begin{IEEEbiography}[{\includegraphics[width=1in,height=1.25in,clip,keepaspectratio]{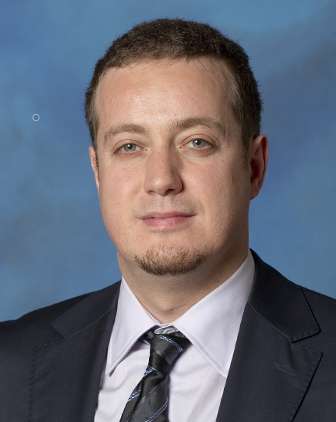}}]%
{Murat Akcakaya}
received the Ph.D. degree in electrical engineering from Washington University in St. Louis, MO, USA, in December 2010. He is an {\color{rev_color}Associate} Professor in the Electrical and Computer Engineering Department of the University of Pittsburgh. His research interests are in the areas of statistical signal processing and machine learning.
\vspace{-0.5cm}
\end{IEEEbiography}
\begin{IEEEbiography}[{\includegraphics[width=1in,height=1.25in,clip,keepaspectratio]{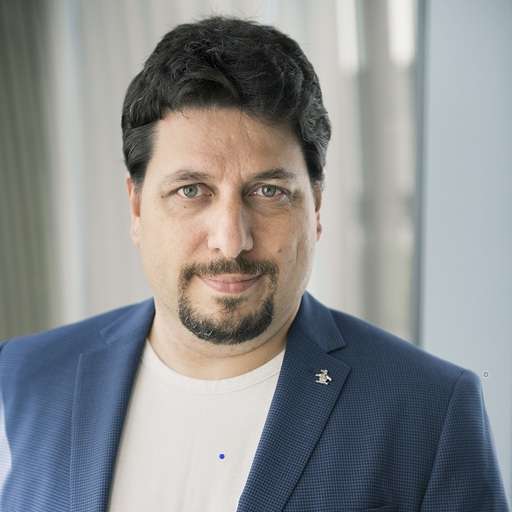}}]%
{Deniz Erdo\u{g}mu\c{s}}
received the B.S. degree in electrical engineering and mathematics, and the M.S. degree in electrical engineering from the Middle East Technical University, Ankara, Turkey, in 1997 and 1999, respectively, and the Ph.D. degree in electrical and computer engineering, in 2002, from the University of Florida, Gainesville, FL, USA, where he was a postdoc until 2004. He is currently a Research Professor at Northeastern University. His research focuses on statistical signal processing and machine learning with applications to biomedical signal/image processing and cyberhuman systems.
\vspace{-0.5cm}
\end{IEEEbiography}

\clearpage
 {%
  \section{Appendix}
\subsection{How algebra works in posterior updates}
\label{sec:how_algebra_works}
\textit{Addition:} Given $p,q\in\Delta_n$%
    \begin{equation*}
    \begin{split}
        p \oplus q = \frac{[p_1q_1,p_2q_2,\cdots,p_nq_n]}{\sum_i p_iq_i}
        \end{split}
    \end{equation*}
Given $p(\sigma)\in\Delta_{|\mathcal{A}|}$ where $\mathcal{A}$ is the state space and $p(\varepsilon | \sigma,\phi) = [p(\varepsilon| \sigma_1 ,\phi),p(\varepsilon| \sigma_2 ,\phi),\cdots, p(\varepsilon| \sigma_{|\mathcal{A}|} ,\phi)]$; $p(\varepsilon | \sigma_{|\mathcal{A}|},\phi)\notin \Delta_{|\mathcal{A}|}$ and hence one cannot rigorously define $p(\sigma) \oplus p(\varepsilon \lvert \sigma,\phi)$. To be able to do such $p(\varepsilon \lvert \sigma,\phi)$ should be unit $\ell_1$ norm wrt. $\sigma$ and hence we reqiore $p(\varepsilon \lvert \sigma,\phi)/(\sum_\sigma p(\varepsilon \lvert \sigma,\phi))$ which is not practical. However the closure operator (normalization as defined in \cite{aitchison1982statistical}) that maps an arbitrary point to $\Delta_n$ allows algebraically $p(\sigma) \oplus p(\varepsilon \lvert \sigma,\phi)$ as the following;
\begin{equation*}
    \begin{split}
        p(\sigma) \oplus \frac{p(\varepsilon \lvert \sigma,\phi)}{\sum_\sigma p(\varepsilon \lvert \sigma,\phi)} = \frac{[p(\sigma_i)p(\varepsilon \lvert \sigma_i,\phi)]_i / \sum_\sigma p(\varepsilon\lvert\sigma,\phi)}{ \sum_i( p(\sigma_i)p(\varepsilon \lvert \sigma_i,\phi)) / \sum_\sigma p(\varepsilon\lvert\sigma,\phi)}\\ = \frac{[p(\sigma_i)p(\varepsilon \lvert \sigma_i,\phi)]_i}{ \sum_i( p(\sigma_i)p(\varepsilon \lvert \sigma_i,\phi))} 
        \simeq p(\sigma) \oplus p(\varepsilon \lvert \sigma,\phi)
    \end{split}
\end{equation*}
Following this equation we represent a posterior with the following update; $p(\sigma | \varepsilon, \phi)=p(\sigma)\oplus p(\varepsilon| \sigma, \phi)$ which is analogous with posterior$=$prior$\oplus$likelihood.
\subsection{Uncertainty Decision Boundaries / The reason
behind analytically analyzing only (M3)}
\label{sec:why_entropy_is_enough}
\label{sec:reasonform3}
In this section we reason the decision on only analyzing Shannon's entropy in Section\ref{subsec:frail_confidence}. In information theory, Renyi entropy generalizes Shannon entropy \cite{renyi1961measures}. The definition for Renyi entropy, parameterized over $\alpha$ is the following;
\begin{equation}
    p\in\Delta_n, \ H_\alpha(p)= \frac{1}{1-\alpha} \log\sum_i p_i^\alpha
\end{equation}
Observe that the limit case $\lim_{\alpha\rightarrow 1} H_\alpha(p)= H(p)$ results in Shannon entropy measure. In this paper we only propose analytical derivations for Shannon entropy as a special case. However the findings can directly be applied to Renyi measures. The generalization can be analytically shown, but to have a neat presentation we omit the derivations here. However, in Fig.\ref{fig:renyi_shannon_case} we present decision boundaries for Renyi entropy and Shannon as a special case to demonstrate their similar decision geometry.
%
\begin{figure}[!h]
\begin{center}
      \subfigure{\includegraphics[width=.46\columnwidth]{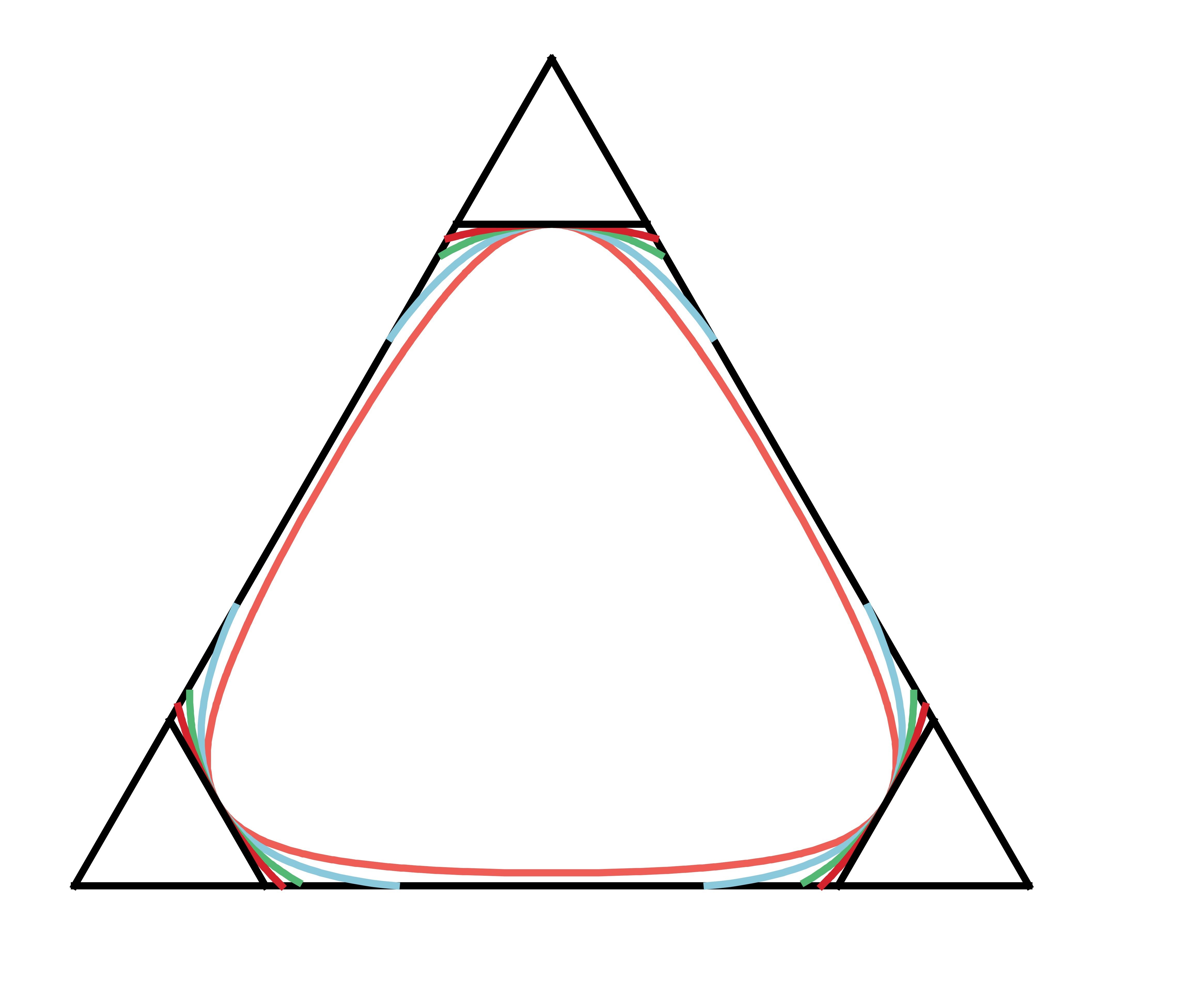}}
      \subfigure{\includegraphics[width=.46\columnwidth]{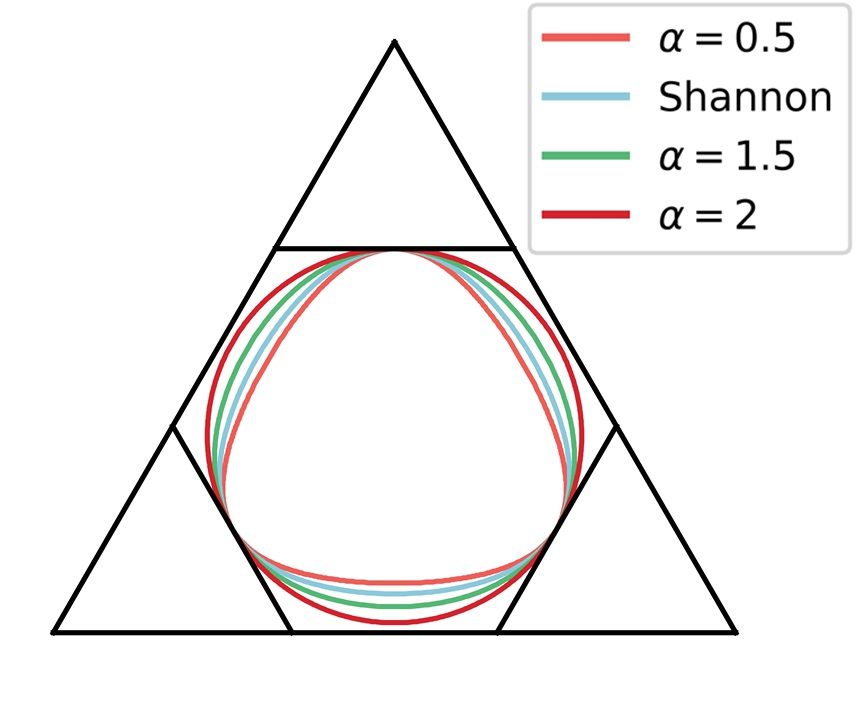}}
      \end{center}
    \label{fig:sim_risk_plot}
    \caption{Decision boundaries formed using $\mathcal{S}_O$ for Renyi using $\alpha\in\lbrace 2,1.5,0.5\rbrace$ and Shannon entropy ($2,1.5,Shannon,0.5$ in the outer to inner order in the figures). Confidence lines are plotted for reference. Corresponding values for confidence lines are $\tau = 0.8$ for left and $\tau = 0.65$ for right figure.
    \label{fig:renyi_shannon_case}}
\end{figure}

\subsection{Proposed Method Supplementary}
\label{sec:experiments_supplementary}

\subsubsection{True Positive - False Alarm Guarantee}
\label{sec:tp_fa_guarantee}
In this section we present true selection and error probabilities for simple examples that can be visualized on a three class simplex $\Delta_3$. We present these result to support our claims in Section \ref{sec:Method} of the manuscript. We pick several points on the simplex and with a predefined evidence distribution sampling from a lognormal, we visualize the bounds on error in Fig.\ref{fig:TP_FP_plots} below. We plot the analytical bounds derived in the proof of Proposition \ref{prop:perf_guarantees} using lognormal distribution assumptions. Lower bound is represented with (M1)(red) and the upper bound is represented with $\bar{\text{(M1)}}$ (black). Instead of plotting the analytic values of (MP) we plot the average probability values calculated over 5000 Monte Carlo simulations. We compute $p(p_s\in S_R)$ and $p(p_s\in S'_R)$ values for different starting points that are color coded in the Fig.\ref{fig:TP_FP_plots}. To generate the figures we use $\varepsilon = [\varepsilon_+, 1,\cdots,1]$ where $\varepsilon_+\sim\text{lognorm}(0.8,0.6^2)$ and the true class is $a$. Given this evidence model, all posteriors follow a straight path to corner $a$ (this behavior is discussed in Lemma\ref{lemma:collinearity_of_post} in the manuscript).

In this paper we do not provide a complete analysis on the effects of the evidence distribution. However, by nature, in recursive Bayesian classification tasks, the separation of the evidence from different classes increases both accuracy and speed. We pick the prior points demonstrated in Fig.\ref{fig:TP_FP_plots} and investigate the effects of the distribution on probability of reaching the correct region and probability of reaching an incorrect region. For our experiments, we assume the system only collects evidence from an assumed distribution $\varepsilon= [\varepsilon_+, 1,\cdots, 1]$ where $\varepsilon_+ \sim \text{lognorm}(\mu,c^2)$ for $s=5$ times. We visualize our findings for a range of parameters in Fig.\ref{fig:TP_FP_dist_effect}. As expected, as we increase the standard deviation ($c$) of the exponentiated Gaussian distribution, the system is more likely to make errors and less likely to decide correctly. On the other hand as mean ($\mu$) increases, the system yields more accurate decisions and less error.

Through the Fig.s \ref{fig:TP_FP_plots} and \ref{fig:TP_FP_dist_effect}, we summarize our claims that are presented in Section  \ref{sec:Method} especially about the bounds on error and correct selection probability given in Proposition \ref{prop:perf_guarantees}.

%
\begin{figure*}[!t]
\centering
      \subfigure{\includegraphics[width=.45\columnwidth]{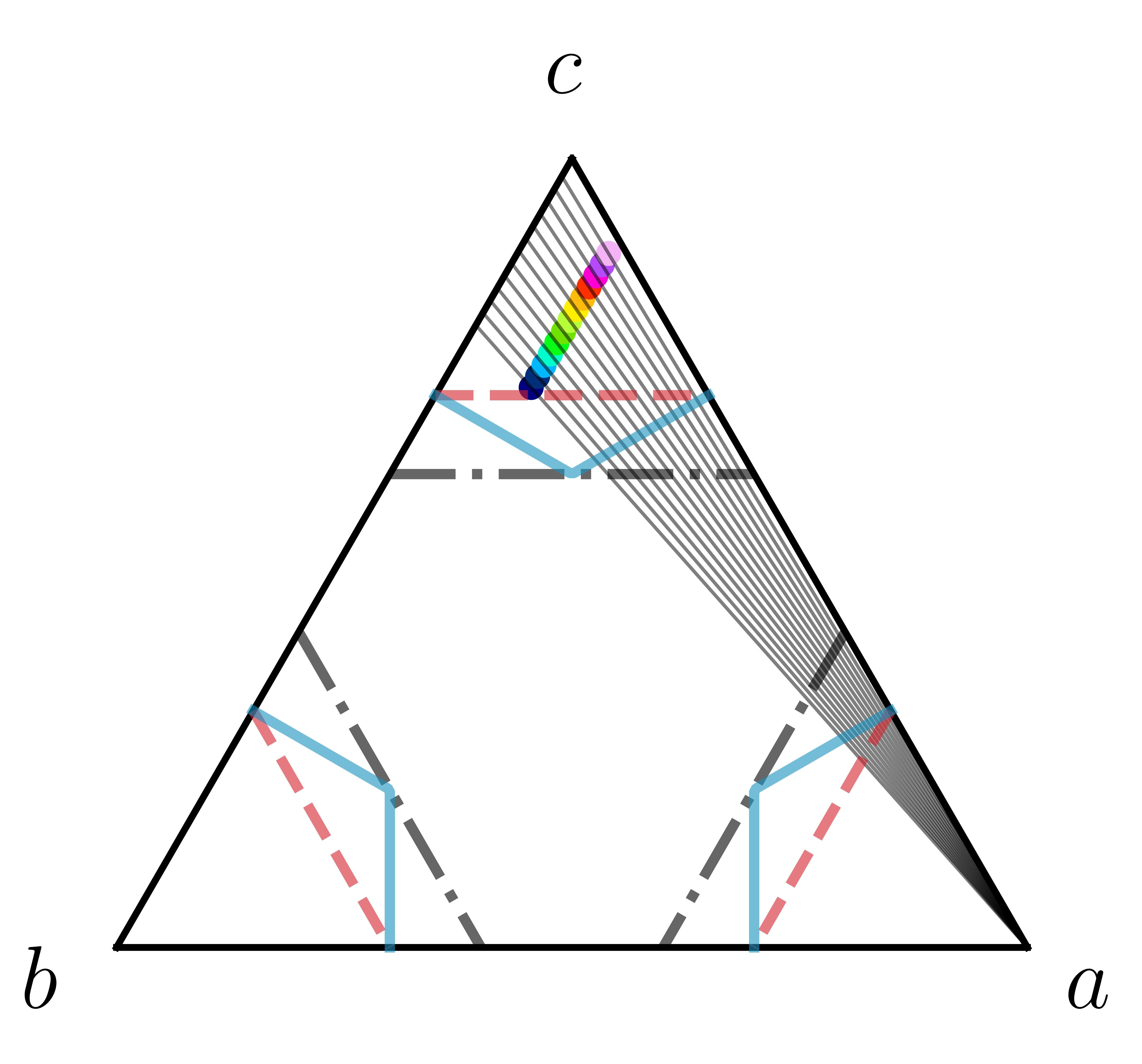}}
      \subfigure{\includegraphics[width=.45\columnwidth]{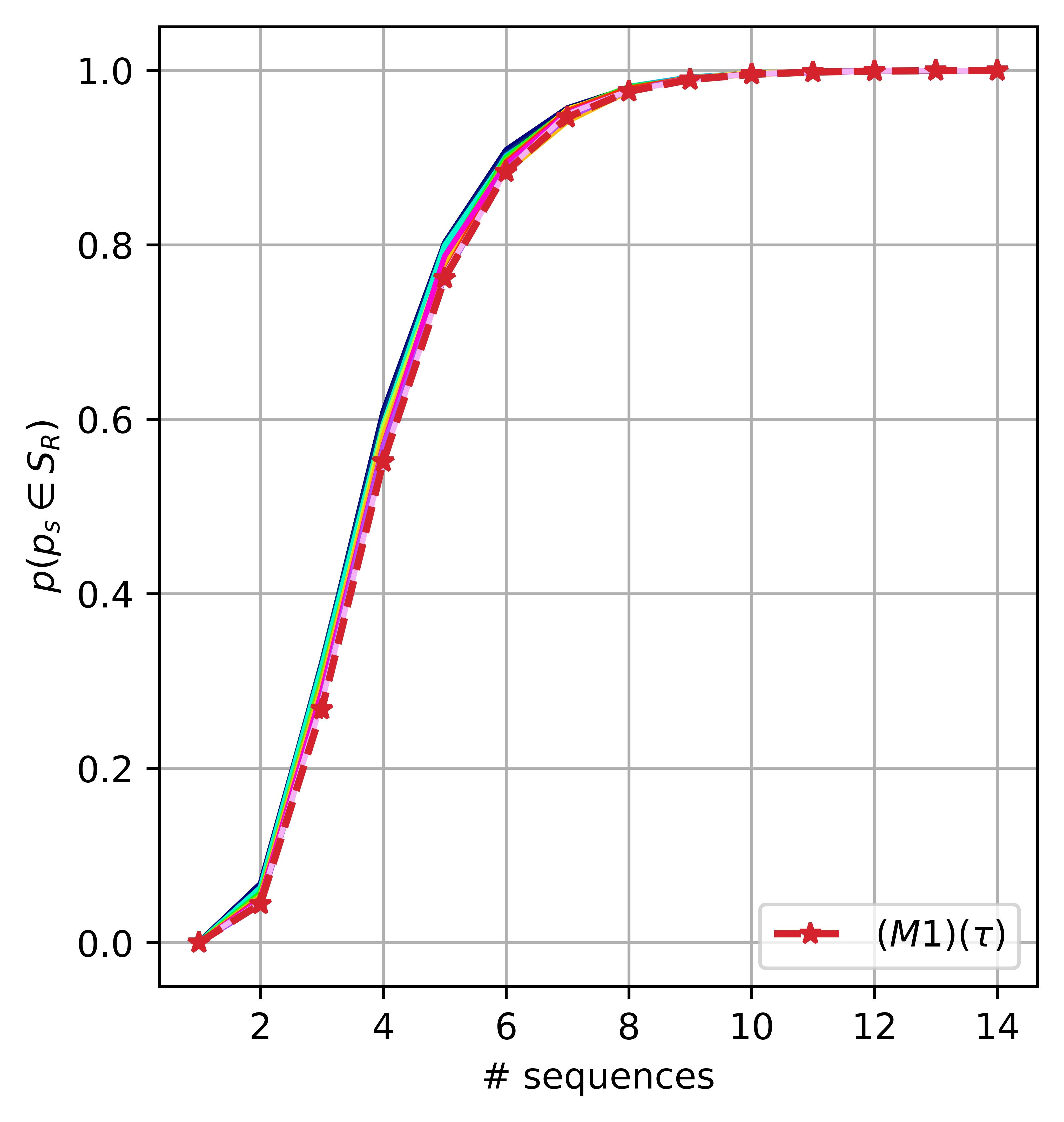}}
      \subfigure{\includegraphics[width=.45\columnwidth]{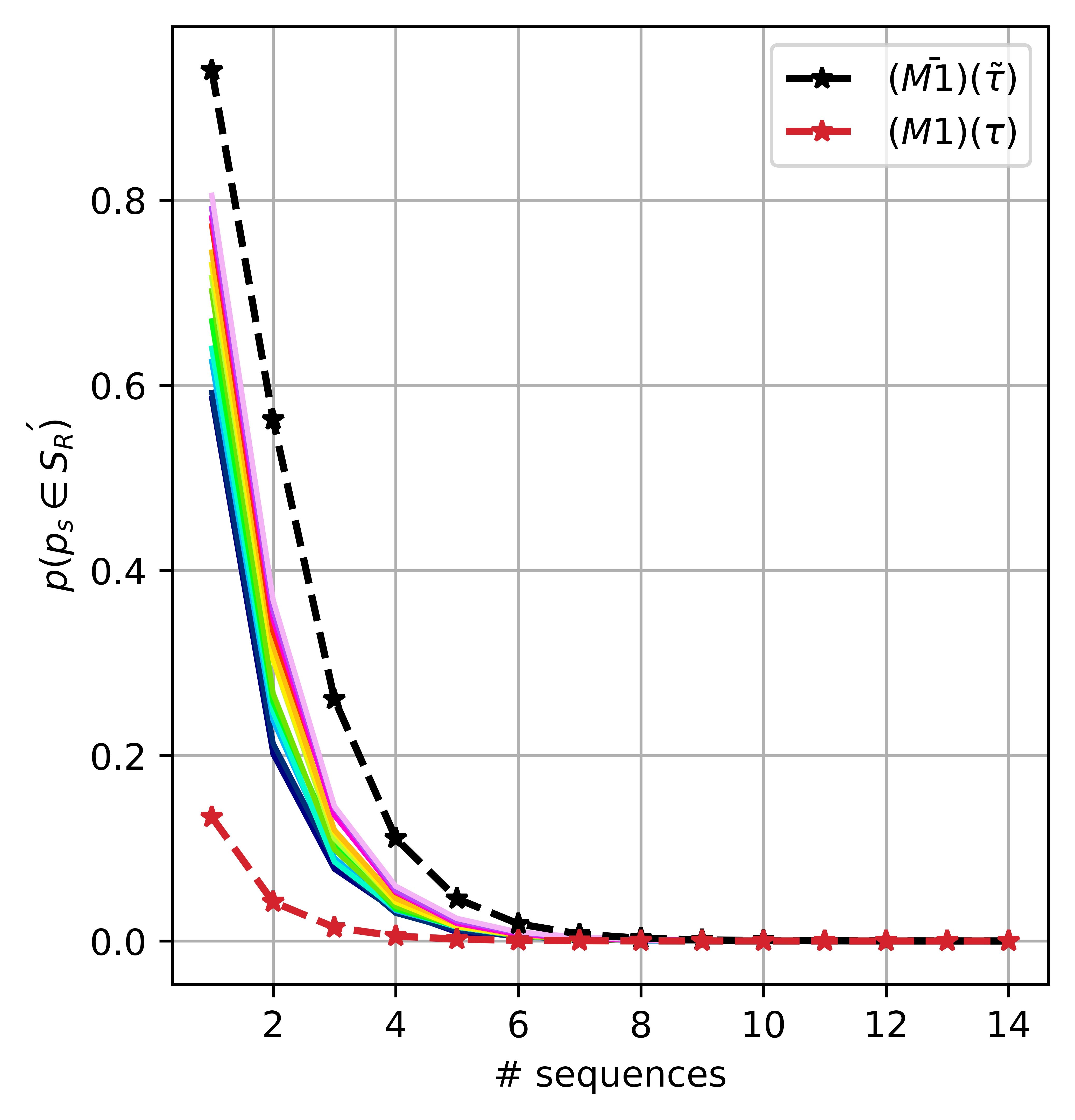}}\\
      \subfigure{\includegraphics[width=.45\columnwidth]{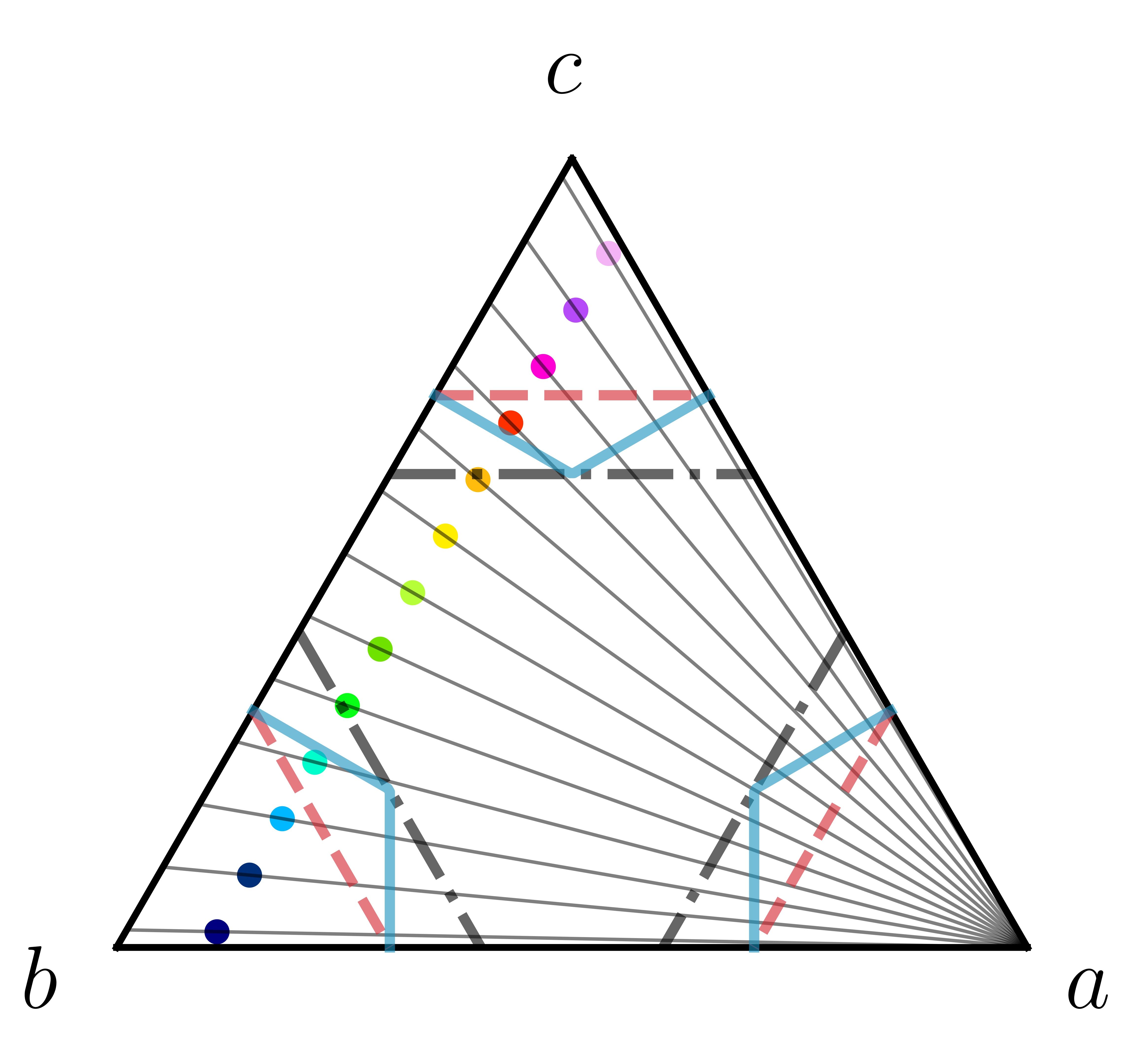}}
      \subfigure{\includegraphics[width=.45\columnwidth]{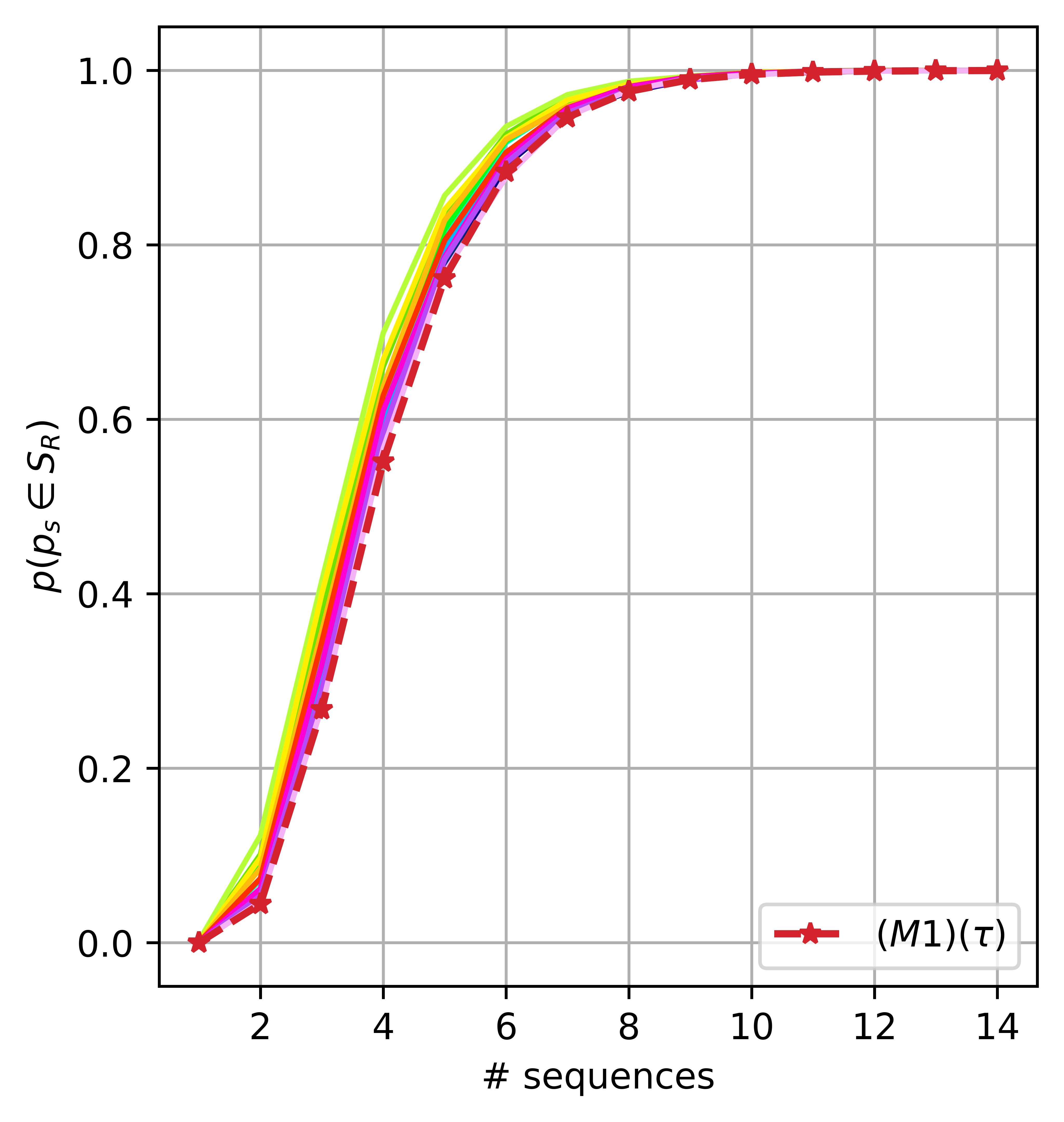}}
      \subfigure{\includegraphics[width=.45\columnwidth]{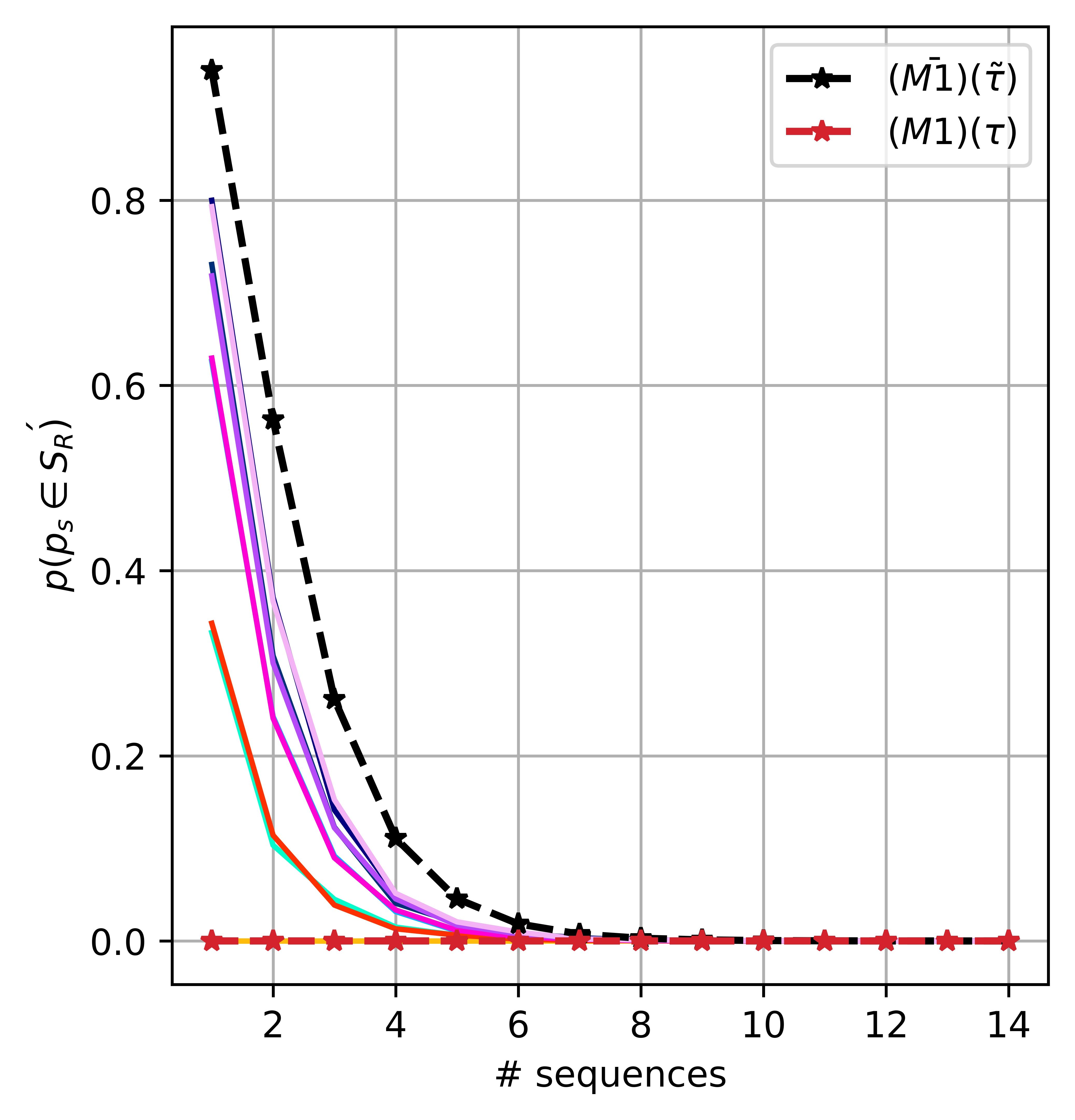}}\\
      \subfigure{\includegraphics[width=.45\columnwidth]{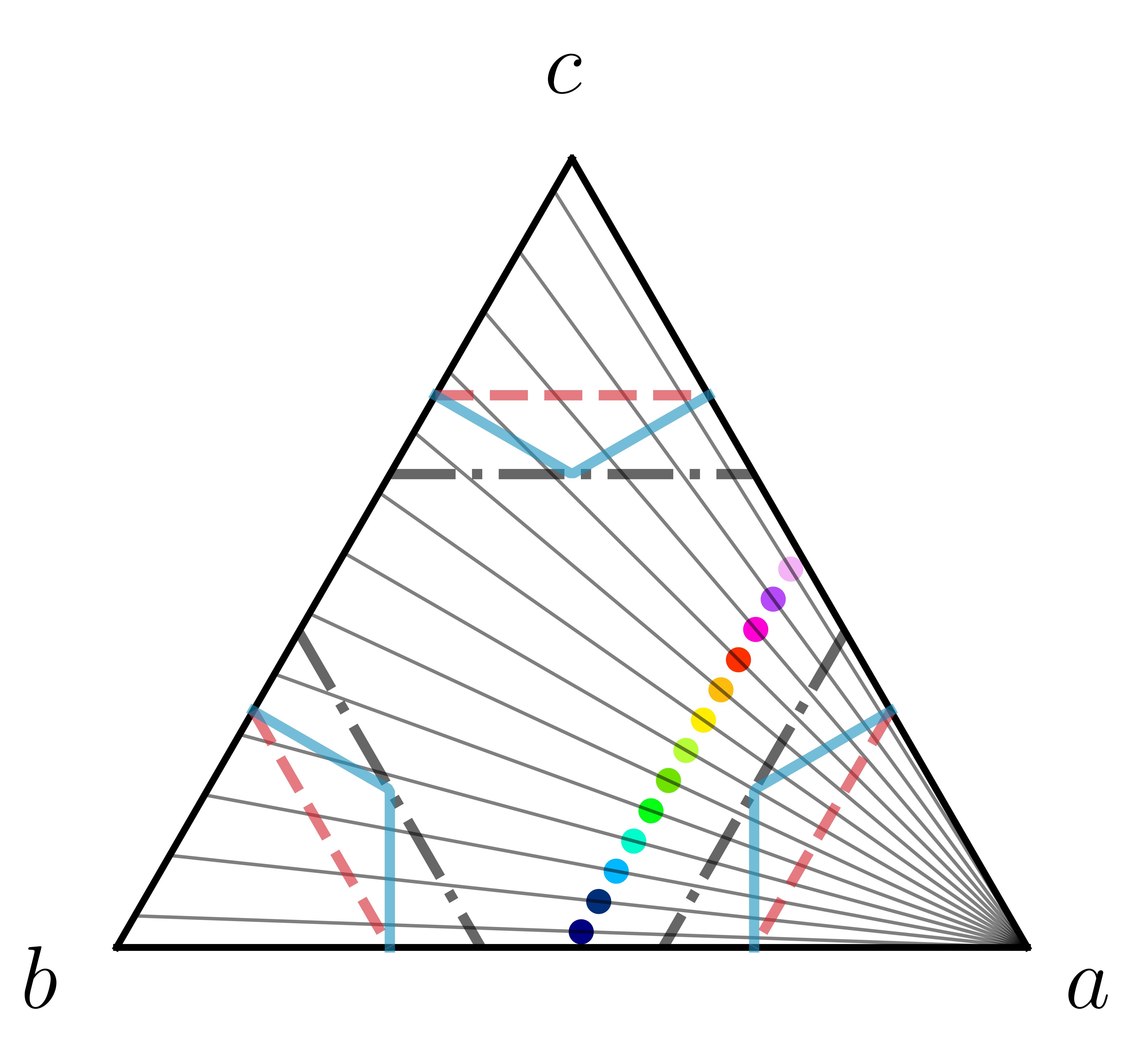}}
      \subfigure{\includegraphics[width=.45\columnwidth]{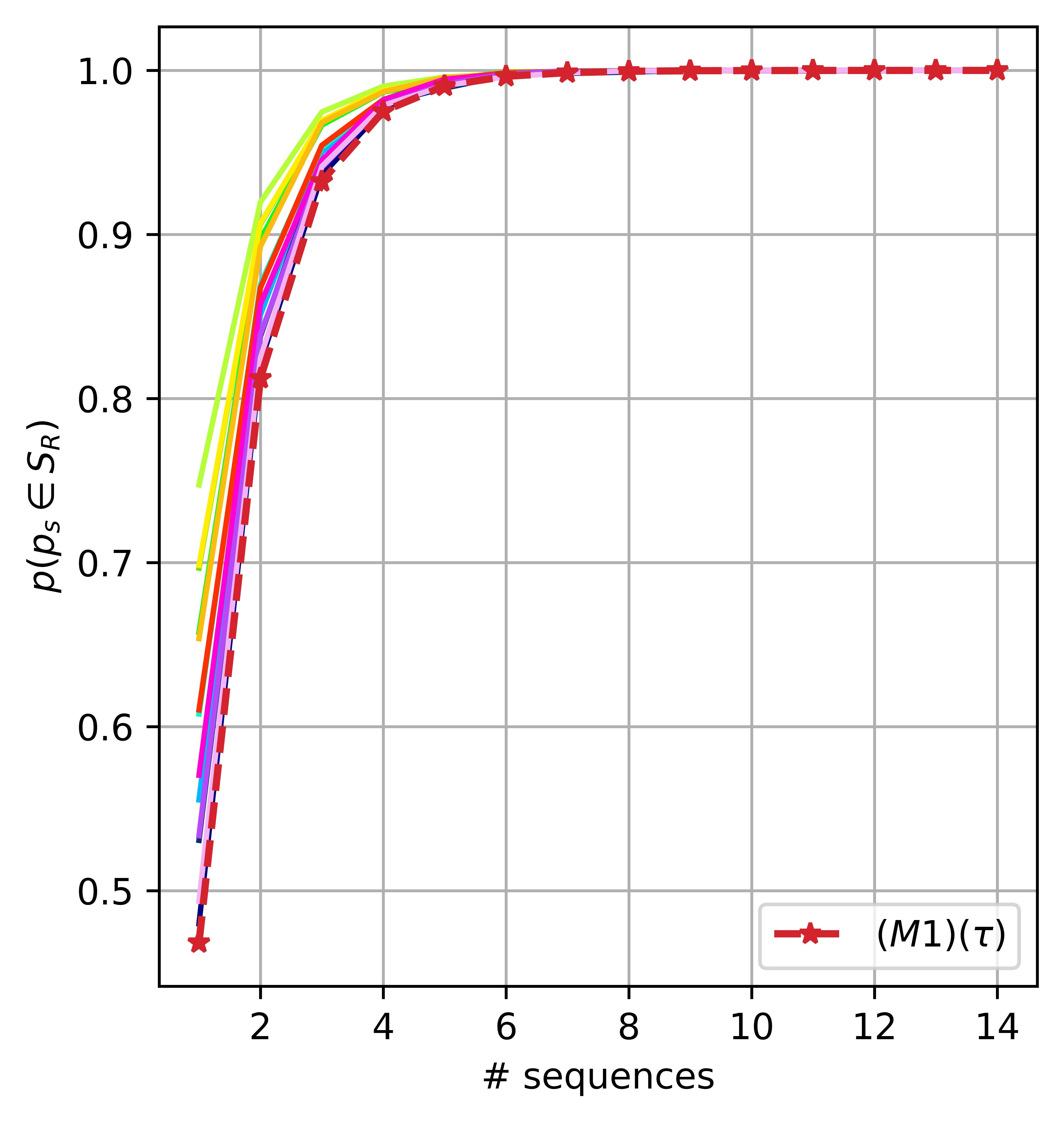}}
      \subfigure{\includegraphics[width=.45\columnwidth]{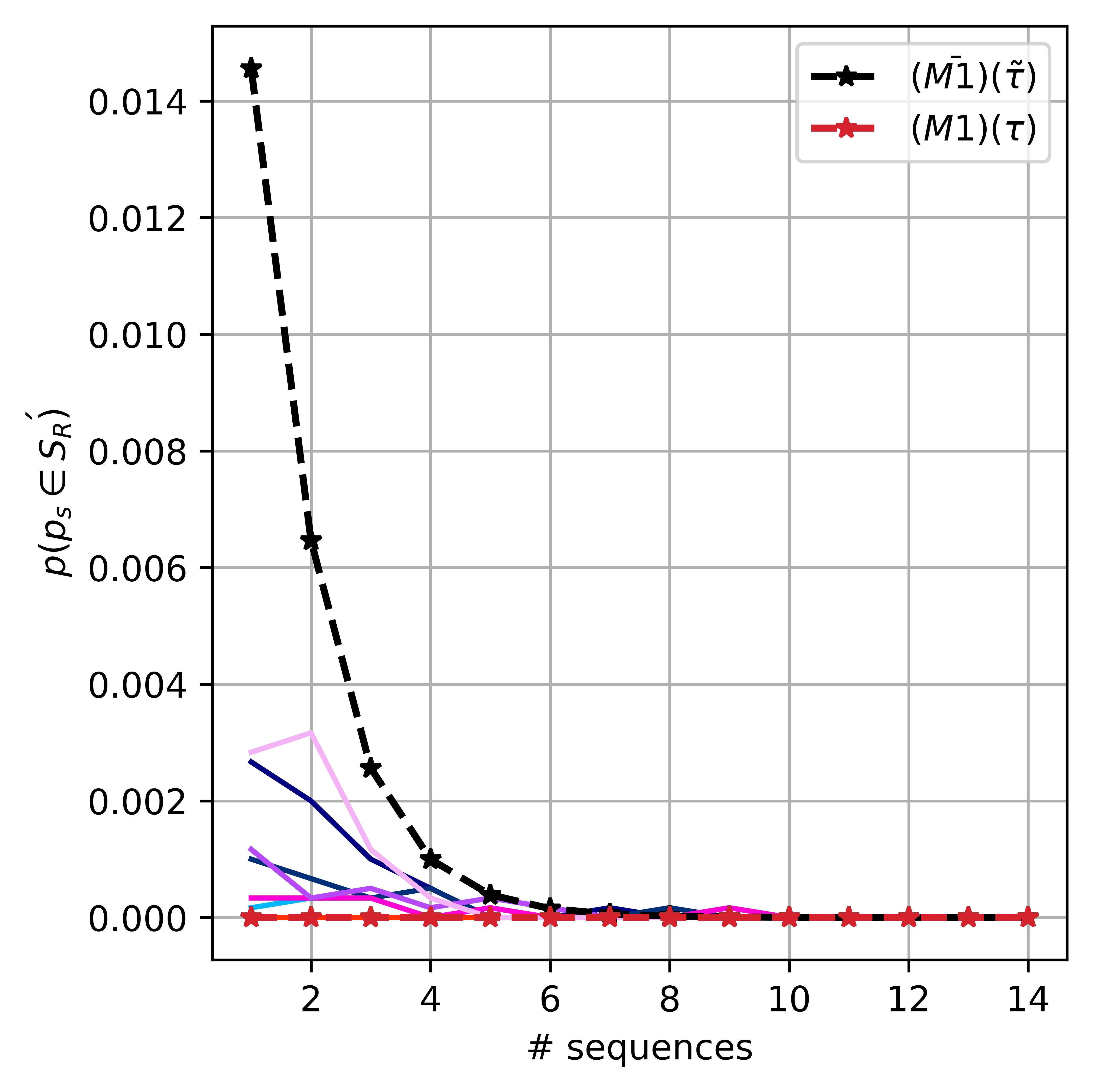}}\\
      \vspace{-0.4cm}
      \caption{In this figure we compare the probability of correct selection (reaching the correct stopping region) by \# of sequences and incorrect selection in $\Delta_3$ of (MP) (blue) with (M1) (red) and ($\bar{\text{M1}}$) (black). $\text{1}^\text{st}$ column represents the prior points of the recursive classification with different colors in $\Delta_3$. $\text{2}^\text{nd}$ column represents the probability of posterior at sequence $s$ lying on the correct region for stopping. $\text{3}^\text{rd}$ column represents the probability of incorrect stopping by sequences. Observe that as described in  Proposition \ref{prop:perf_guarantees} probability of correct decision is always above (M1) and probability of error is sandwiched with lower (M1) and upper ($\bar{\text{M1}}$) curves.}
    \label{fig:TP_FP_plots}
    \vspace{-0.3cm}
\end{figure*}
%
%
\begin{figure*}[t]
\centering
\begin{minipage}{0.7\columnwidth}
		\centering
		\includegraphics[width=\columnwidth]{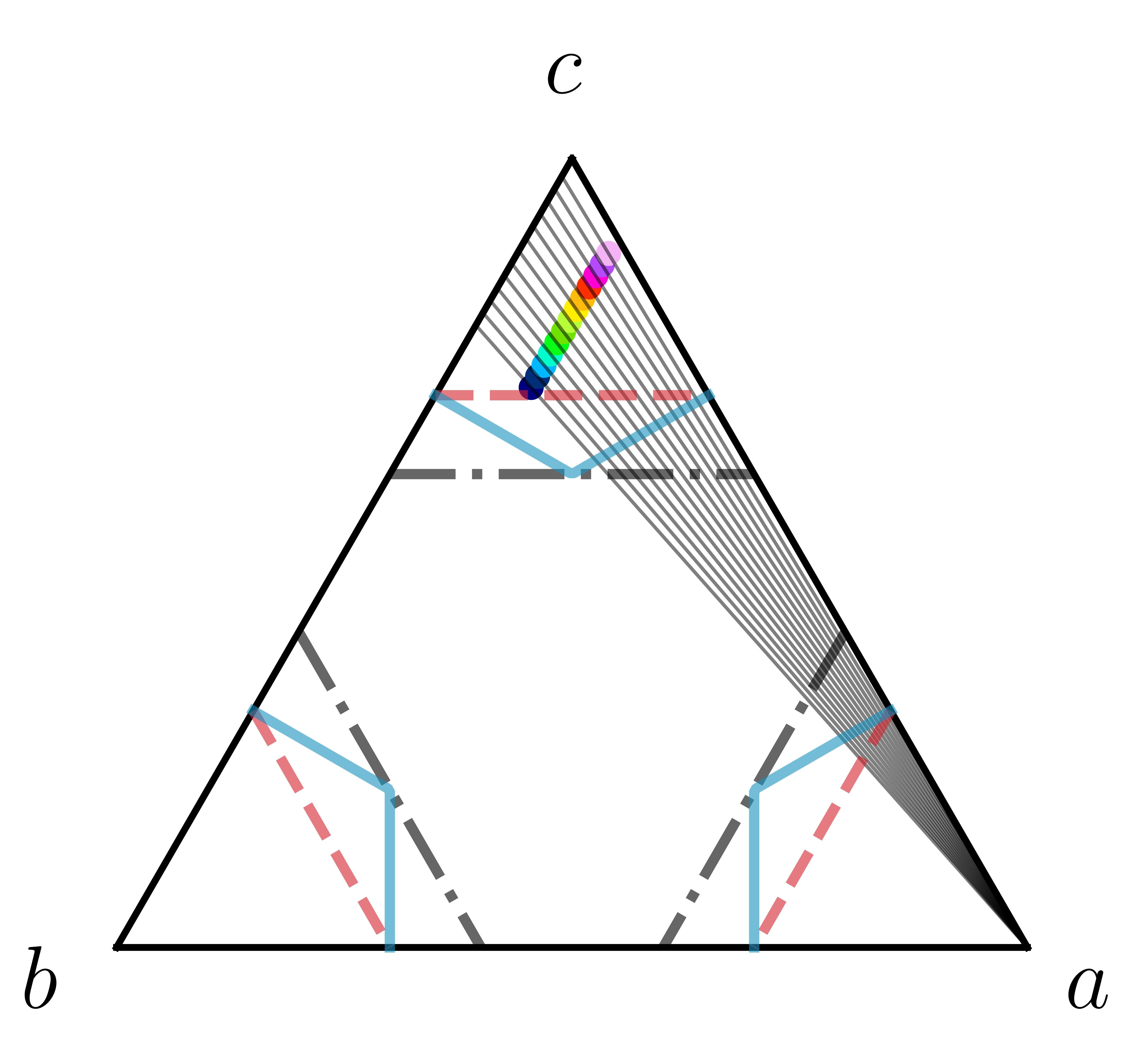}
\end{minipage}
\begin{minipage}{1.3\columnwidth}
		\centering
	   \subfigure{\includegraphics[width=.45\columnwidth]{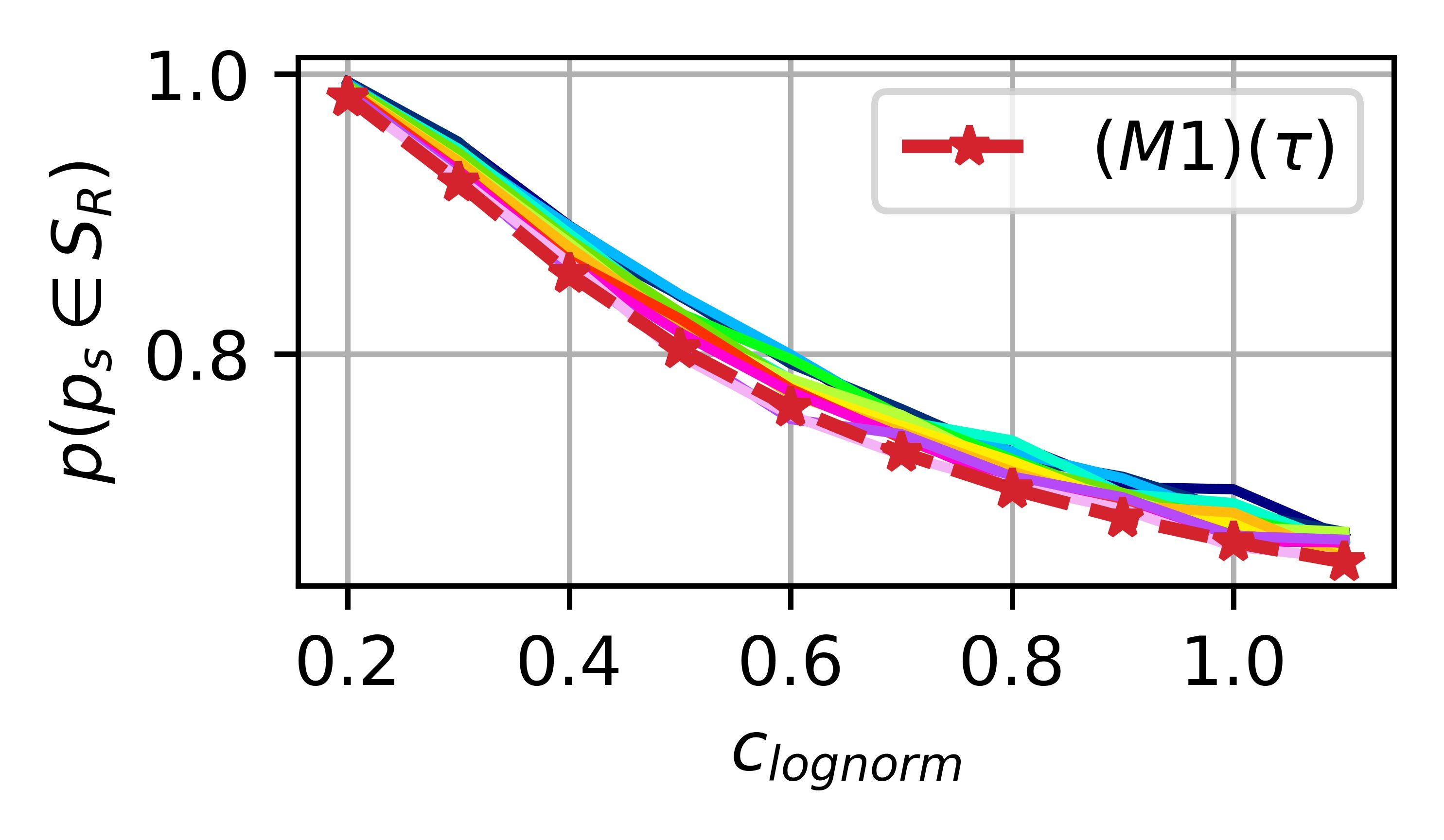}}
	   \subfigure{\includegraphics[width=.45\columnwidth]{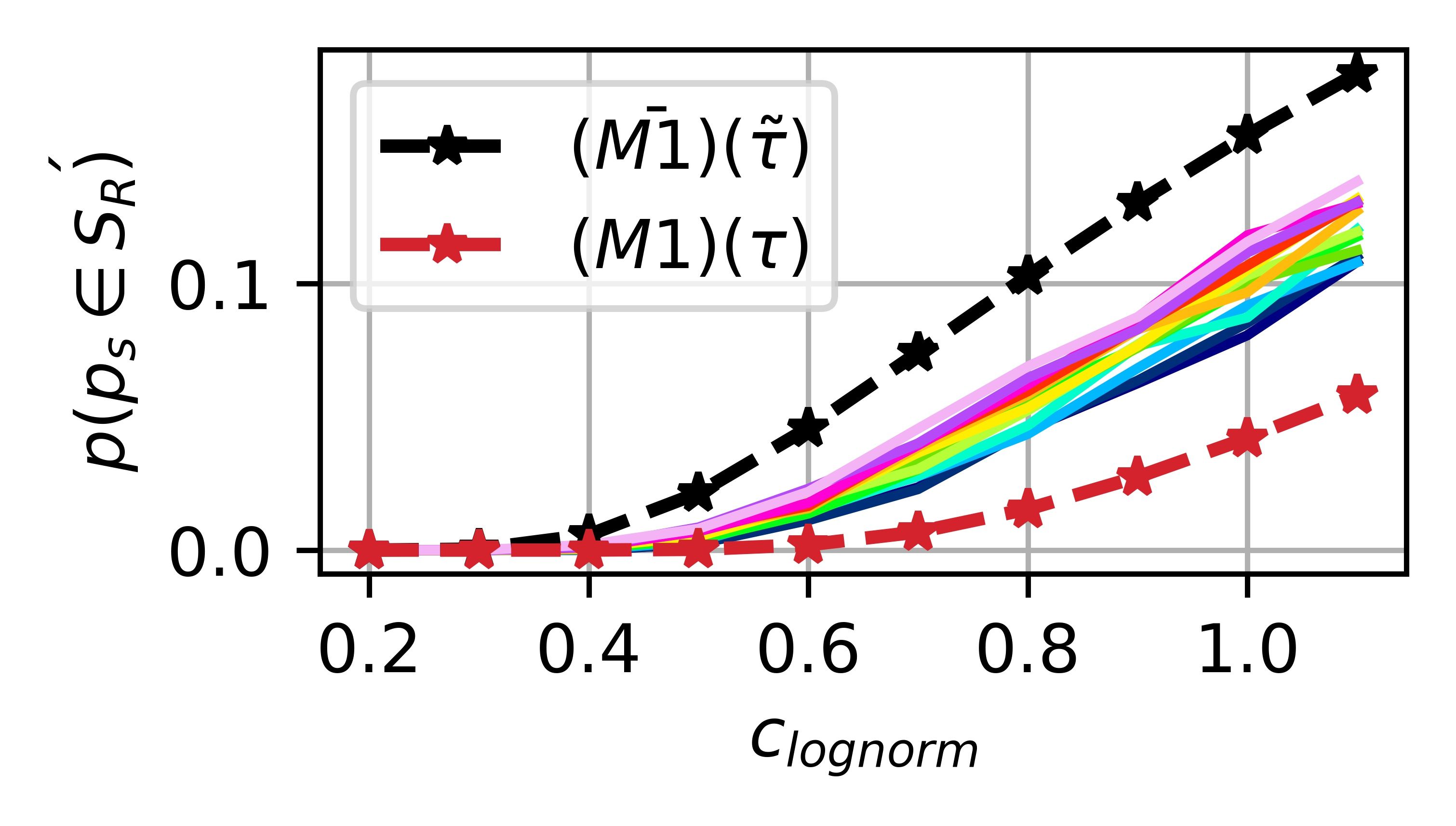}}\\
       \subfigure{\includegraphics[width=.45\columnwidth]{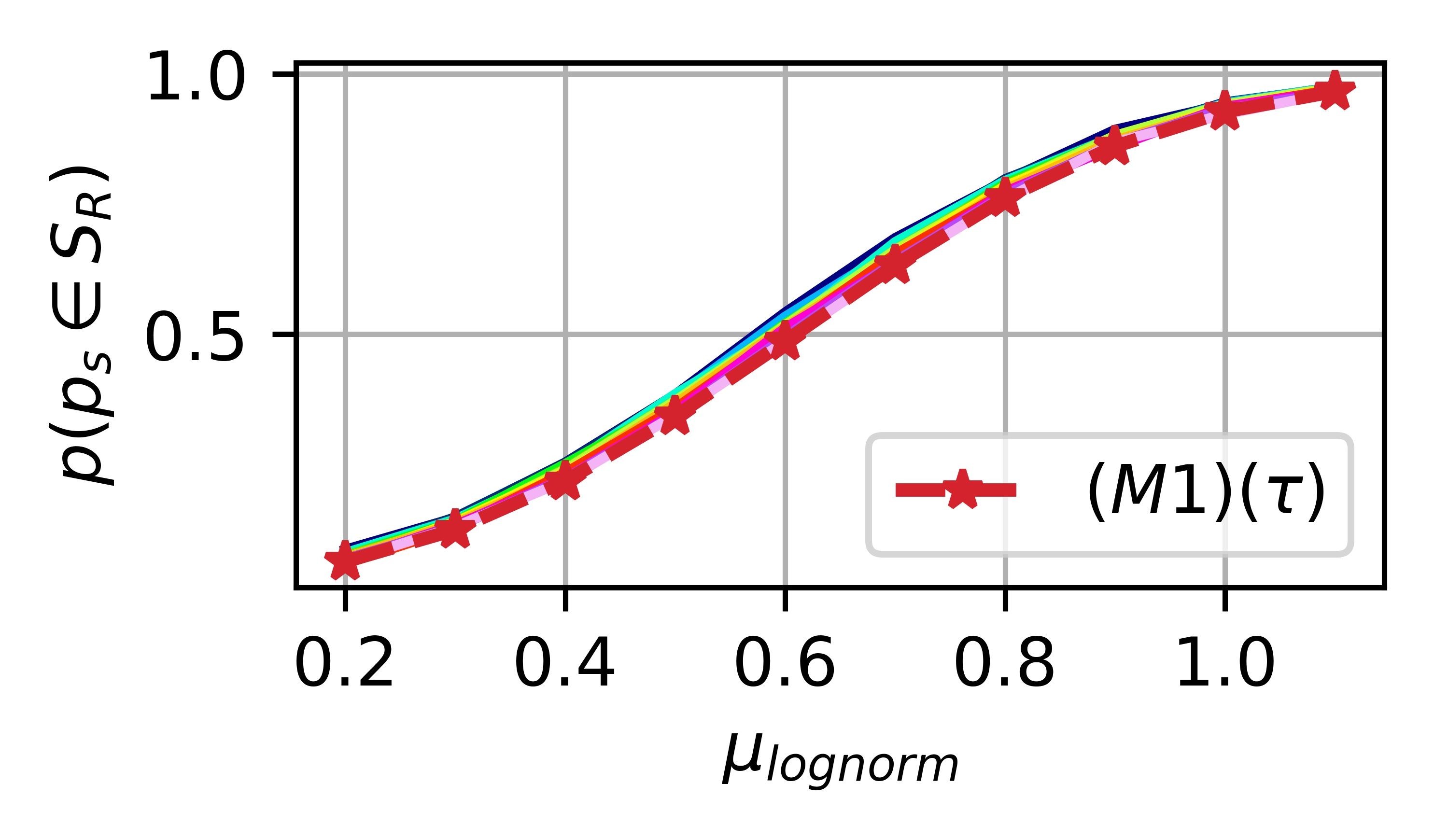}}
       \subfigure{\includegraphics[width=.45\columnwidth]{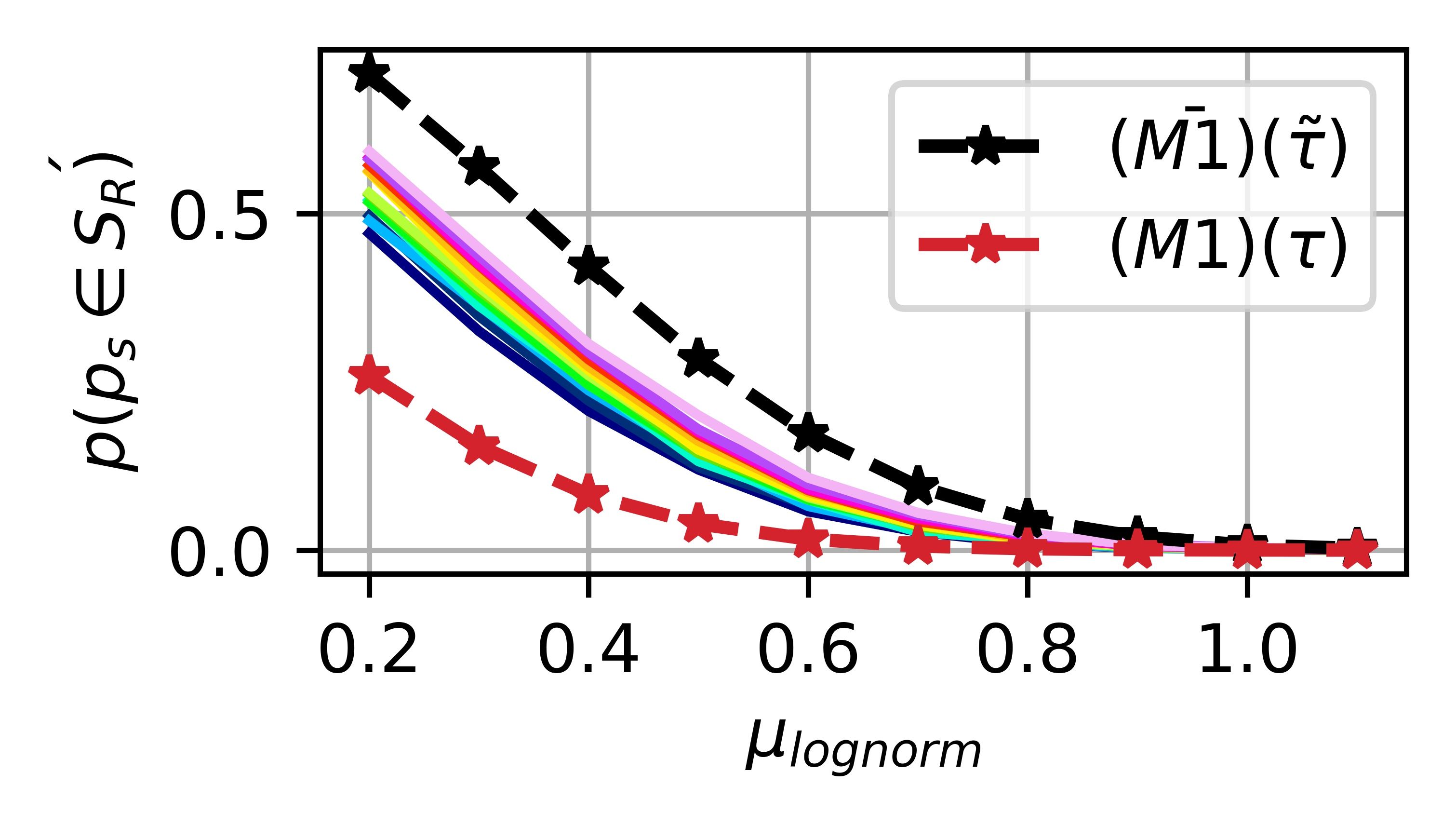}}\\
\end{minipage}
      \caption{In this figure we visualize the effects of the distribution variance and mean in recursive classification for the represented points in the figure. We consider $\varepsilon_+ \sim \text{lognorm}(\mu,c^2)$ and calculate the probabilities for a set sequence number $s=5$. We represent 4 figures to indicate the effects (on the right side of the $\Delta_3$ simplex with corners $a$, $b$ and $c$). Top row represents the effects of the standard deviation of the exponentiated Gaussian distribution on probability of correct selection and incorrect selection for left and right respectively where $\mu = 0.8$. Whereas bottom row visualizes the effects of the mean where $c = 0.6$. It is observed as mean increases (since the evidence gets higher values) probability of correct selection increases where error chance decreases. On the other hand results are the opposite as expected for the standard deviation increments.}
    \label{fig:TP_FP_dist_effect}
    \vspace{-0.3cm}
\end{figure*}
\clearpage

\clearpage

\newpage

\subsubsection{BCI Typing Supplementary}
\label{sec:bci_supplementary}
In experiments section (Sec.\ref{sec:Experiments}), we report time required for a subject to successfully complete a 100 letter typing scenario only for the conventional method (M1). We use the following computation to report required number of sequences to cpmplete the task;
\begin{equation*}
\begin{split}
    &\text{acc,seq from user data}; \ \text{rem} = 100, \ \text{\#seq} = 0 \\
    &\textbf{while} \text{ rem}>0 \textbf{ do:}\\ &\hspace{2mm}\text{rem}\gets\text{rem} - \text{ceil}(\text{rem}\times\text{acc}/100)\\
    &\hspace{2mm}\text{\#seq} \gets\text{\#seq} +\text{rem} \times \text{seq}\\
    &\textbf{end}\\
    &\textbf{return } \text{\#seq}
\end{split}
\end{equation*}
In Table\ref{tab:rsvp_time_table} we report the performance values for all the methods that are compared to (MP). We specifically report the number sequences required for each user to successfully complete the task in addition to the number of sequences averaged across all participants. Moreover, we report  average typing accuracy and sequence required to type 1 letter (correct or incorrect) for each method averaged across users. It is apparent (MP) allows faster typing as reported in the experiments by an accuracy loss of $5\%$ on average. (M4) seems to outperform (MP) in the uniform case. However, (M4) results in frequent incorrect decisions that require corrections which irritates the target population to use such BCI systems. Therefore, such methods similar to (M4) may overall perform worse due to fatigue effects. 

{
\begin{table}[!h]
\footnotesize
\begin{center}
\begin{tabular}{c c |c | c c c c c }
& &Seq &\multicolumn{5}{c}{Sequence Difference from \# (MP)}\\
LM &Perf &\# (MP) &(M1) &(M2) &(M3) &(M4) &(M5)\\
\hline
\multirow{6}{*}{\rotatebox[origin=c]{90}{Uniform}} 
&67 &3628  &+327   &+306   &+143  &-183  &+5687\\
&72 &2290  &+295   &+279   &+142  &-130  &+5200\\
&76 &1627  &+197   &+186   &+85   &-59   &+3391\\ 
&81 &1180  &+186   &+173   &+93   &+18   &+2524\\
&84 & 947  &+129   &+124   &+677  &+14   &+2173\\
&87 & 739  &+126   &+122   &+71   &+35   &+1921\\
\cline{2-8}
&\text{avg} &1735 &+210 &+198  &+101 &-50 &+3482 \\
\cline{2-8}
&\text{acc}  &90\% &94\%  &94\% &88\% &64\% &99\% \\
&\text{E(seq)}  &15.44 &18.07  &17.96 &16.05 &10.70 &51.95 \\
\hline
\multirow{6}{*}{\rotatebox[origin=c]{90}{Language Model}}
&67 &3012  &+194  &+202  &+11  &+110 &+3252\\
&72 &2053  &+233  &+224  &+78  &+21  &+3208\\
&76 &1570 &+147  &+141  &+62  &-43   &+2743\\
&81 &1171 &+119  &+115  &+43  &-22   &+2276\\
&84 &923 &+104  &+105  &+51  &+43  &+2018\\
&87 &754 &+95   &+89   &+49  &+58   &+1865\\
\cline{2-8}
&\text{avg} &1580 &+148 &+146  &+490 &+27 &+2560 \\
\cline{2-8}
&\text{acc}  &85\% &90\%  &90\% &85\% &56\% &98\% \\
&\text{E(seq)}  &13.08 &15.68  &15.57 &13.90 &8.46 &41.04 \\
\hline
\end{tabular}
\end{center}
\caption{Supplementary for BCI experiments Fig.\ref{fig:factor_plots_synth_1}, Fig.\ref{fig:factor_plots_synth_2}. For all experiments the confidence threshold is set to $\tau=.85$ for (M1) and the constants for the other objectives are calculated accordingly. We report number of sequences spent for (MP) for different performing users and corresponding differences with other methods. \textit{avg} represents the average sequence values for each method conditioned on LM respectively . \textit{acc} and \textit{\text{E}(seq)} represent accuracy in typing and average number of sequences spent for 1 letter respectively. For the language model case, based on prior location for \textit{"O"} after \textit{"IT\_"}, the difference between (M4) vanishes.}
\label{tab:rsvp_time_table}
\end{table}}

\subsection{Proofs}
\label{Sec:Proofs}
\begin{proof}[\textbf{Proof Proposition \ref{prop:SimplexVSpace}}]

\textit{Addition:} Given $p,q\in\Delta_n$ the addition operation is defined as the following;
    \begin{equation*}
    \begin{split}
        p \oplus q = \frac{[p_1q_1,p_2q_2,\cdots,p_nq_n]}{\sum_i p_iq_i}
        \end{split}
    \end{equation*}

\textit{Multiplication with scalar:} Given $p\in\Delta_n,\lambda\in\mathbb{R}$ the multiplication with scalar is defined as the following;
    \begin{equation*}
        \begin{split}
            p\otimes \lambda = \frac{[p_1^\lambda,p_2^\lambda,\cdots,p_n^\lambda]}{\sum_i p_i^\lambda}
        \end{split}
    \end{equation*}

Let $p,q,r \in \Delta_n$; $p\oplus q = [[p\oplus q]_k \lvert k\in\lbrace ,1 ,2,\cdots, n \rbrace] \in \mathbb{R}^n$ and $p\otimes q = [[p\otimes q]_k \lvert k\in \lbrace ,1 ,2,\cdots, n \rbrace] \in \mathbb{R}^n$

\textit{Additive Closure:} $p_k>0, q_k>0 \implies [p\oplus q]_k>0$, $\sum_k [p\oplus q]_k = \sum_k p_kq_k / \sum_i p_iq_i = 1 \implies p\oplus q \in \Delta_n$

\textit{Scalar Closure:} $p_k>0, \lambda\in\mathbb{R} \implies p_k^\lambda >0 \implies [p\oplus \lambda]_k>0$, $\sum_k [p\otimes q]_k = \sum_k p_k^\lambda / \sum_i p_i^\lambda = 1 \implies p\otimes \lambda \in \Delta_n$

\textit{Commutativity:} scalar multiplication is commutative $\implies p_kq_k = q_kp_k \implies p\oplus q = q\oplus p$

\textit{Additive Associativity:} scalar multiplication is associative $\implies (p_kq_k)r_k = p_k(q_kr_k)$. $[p\oplus q]_i = p_iq_i/\sum_kp_kq_k, [(p\oplus q)\oplus r]_i = p_iq_ir_i / \sum_k p_kq_kr_k \implies (p\oplus q)\oplus r = p \oplus(q \oplus r)$

\textit{Zero Element:} $u_n \in \Delta_n$ as shown above.

\textit{Multiplicative $1$:} $[p\otimes 1]_k = p_k / \sum_i p_k = p_k \implies p\otimes 1 = p$

The following properties can be shown using the definitions above easily and hence omitted; additive inverses, scalar multiplication associativity, distributivity across vector addition, distributivity across scalar addition.
\end{proof}
\hrule
\begin{proof}[\textbf{ Proof Proposition \ref{prop:max_uncertainty}}]

$S_\tau = \lbrace p|\max_i p_i = \tau \rbrace \implies \max_{p\in S_\tau} H(p)= H(v_n(\tau))$

Let $p\in S_\tau$. WLOG pick $p_1=\tau$ where $|p|=n$;
\begin{equation*}
    \begin{split}
        \max_p H(p) = \max_{p_i\in p}-p_1\log{\left(p_1\right)}-{\displaystyle{\sum_{i \in \lbrace 2,\cdots ,n \rbrace}} p_i\log{\left(p_i\right)}}\\ \text{ from principle of maximum entropy}\\
        = -\tau\log{\left(\tau\right)}-{\displaystyle{\sum_{i \in \lbrace 2,\cdots ,n \rbrace}} \displaystyle{\frac{1-\tau}{n-1}}\log{\left(\displaystyle{\frac{1-\tau}{n-1}}\right)}} \\      =-\tau\log{\left(\tau\right)}-(1-\tau) \log{\left(\displaystyle{\frac{1-\tau}{n-1}}\right)}\\ =H(v_n(\tau))
    \end{split}
\end{equation*}
$S_\tau = \lbrace p|H(p) = H(v_n(\tau)) \rbrace \implies \max_{i, p \in C_\tau} p_i=\tau$

Let $p\in C_\tau$. WLOG pick $i=1$,  $p\in C_\tau \implies p_1\log p_1 = -H(v_n(\tau)) - \sum_{i\in \lbrace 2,\cdots,n \rbrace} p_i\log p_i$;
\begin{equation*}
    \begin{split}
    \left.\frac{d \  xlog(x)}{d x}\right\lvert_{x\in(1/e,\infty)} > 0  \\
    \implies\max x = \arg\max_x xlog(x) \ \forall x\in(1/e,\infty) \\
    \implies \max_{p\in C_\tau} p_1 =\arg_{p_1}\max_{p\in C_\tau} p_1 \log p_1 \\
    = \arg_{p_1}\max_{p\in C_\tau} -H(v_n(\tau)) - \sum_{i\in \lbrace 2,\cdots,n \rbrace} p_i\log p_i \\ 
    =\arg_{p_1}\max_{p\in C_\tau} - \sum_{i\in \lbrace 2,\cdots,n \rbrace} p_i\log p_i \\
    \text{ where } \sum_{i\in \lbrace 2,\cdots,n \rbrace} p_i = 1-p_1
    \end{split}
\end{equation*}
Using the identity $\hat{x} = \arg\max_x -log(x)^T x = [1/e,\cdots, 1/e]^T$ then project to $(1-p_1)$-unit $\ell_1$ norm ball again from principle of maximum entropy we observe $p_{1:n} = [\frac{1-p_1}{n-1},\cdots, \frac{1-p_1}{n-1}]\implies p =[p_1, \frac{1-p_1}{n-1},\cdots, \frac{1-p_1}{n-1}]= v_n(p_1)$. Since $H(p)=H(v_n(p_1))=H(v_n(\tau))$, $p_1 = \tau$.
\end{proof}
\hrule
\begin{proof}[\textbf{Proof Observation\ref{obs:intersection_entropy_conf}}] Derivation is trivial given Proposition \ref{prop:max_uncertainty}. Omitted.
\end{proof}
\hrule
\begin{proof}[\textbf{Proof Observarion \ref{obs:increasing_feasible_set}}]

Let $p\in\Delta_n$, WLOG $p_1 = max_i p_i$;
\begin{equation}
\begin{split}
    H(p) =\\
    -\sum_i p_i\log(p_i) = -p_1\log(p_1) - \sum_{i\in\lbrace 2,\cdots,n\rbrace} p_i\log(p_i) \\
    \leq -p_1\log(p_1)-(1-p_1)\log{\left(\displaystyle{\frac{1-p_1}{n-1}}\right)} \\
    -H(p) \geq \ \ p_1\log(p_1)+(1-p_1)\log{\left(\displaystyle{\frac{1-p_1}{n-1}}\right)}
\end{split}
\end{equation}
Let $f(u,a)=u\log(u)+(1-u)\log{\left(\displaystyle{\frac{1-u}{a-1}}\right)}$ for $u\in[1/a,1]$ and $a \geq 1$; $f(u,a)$ is strictly convex and monotonically increasing for $\forall u\in[1/a,1]$ where $a \geq 1$. If $p_1 \geq \tau$ then $f(p_1,n) \geq f(\tau,n)$. This concludes that;
\begin{equation}
    -H(p) \geq \tau\log(\tau) + (1-\tau)\log{\left(\displaystyle{\frac{1-\tau}{n-1}}\right)} = -\tau'
\end{equation}
Therefore, $\forall p_1 \geq \tau\geq 1/n \implies H(p) \leq \tau'$. Hence $\forall p \in S_1 \implies p\in S_2$ which concludes $S_1 \subseteq S_2$.

\noindent
Moreover, given $n=5$ where $\tau' = 2.16$, therefore corresponding $\tau = 0.5$. Pick a distribution with probabilities $p = \lbrace .4,.2,.2,0,0\rbrace$. Obviously, $p\notin S_1$. However, $H(p) =1.5219< \tau'$ therefore $p\in S_2$ which shows $S_2\not\subseteq S_1$. Bu the counterexample  $S_1 \subseteq S_2$ reduces to $S_1 \subset S_2$.
\end{proof}
\hrule
\begin{proof}[\textbf{Proof Observarion \ref{obs:weakest_entropy_guarantee}}]
The edge case for equi-entropy contours intersecting with the edges of the probability domain is WLOG $p=[0.5,0.5,0,\cdots,0]$ hence $H(p)=1$. $H(v_n(\tau))<1\implies$ the equi entropy contour $\mathcal{S}_\tau$ intersects with the borders of the simplex. As explained over truncated distributions in \cite{ho2010interplay}, we use the confidence line definition $C_\tau = \lbrace p|\max_i p_i = \tau \rbrace$ and use the identity $\arg\max_{p\in C_\tau} H(p)= w_n(\tau)$. Equivalently, for $w_n(\tilde{\tau})$ to hold the entropy condition, one can write;
\begin{equation*}
    \begin{split}
    -\tilde{\tau}\log_2(\tilde{\tau}) -(1-\tilde{\tau})\log_2(1-\tilde{\tau}) = H(v_n(\tau))
    \end{split}
\end{equation*}
\end{proof}
\hrule
\begin{proof}[\textbf{Proof Lemma \ref{lemma:collinearity_of_post}}]

Pick $a_1,a_2,a_n \in \mathcal{A}$ being $1^\text{st},2^\text{nd}, n^\text{th}$ class respectively and $a_1$ being queried $\phi(a_1)$ with initial probability distribution over the entire state space $p(\sigma) = [p_{a_1}= p_1, \cdots,p_n]$ yielding the evidence $\varepsilon$. Observe the following relation between the posterior and prior using the label assignment for the queried candidate $\ell$ where for a query $\phi(a_c)$, and a candidate $a_{c'}, a_{c'} = a_c \implies \ell = 1 $;
\begin{equation*}
    \begin{split}
    p(\sigma\lvert \varepsilon,\phi(a_1)) = p(\sigma)\oplus p(\varepsilon\lvert \sigma, \phi(a_1)) \\
    = p(\sigma)\oplus [p(\varepsilon\lvert a_1,\phi(a_1)), p(\varepsilon\lvert a_2,\phi(a_1)), \cdots, p(\varepsilon\lvert a_n,\phi(a_1))] \\
    = [p(\varepsilon\lvert \ell=1), p(\varepsilon\lvert  \ell=0, \cdots,p(\varepsilon\lvert  \ell=0)]
    \end{split}
\end{equation*}
The following vectors map to the same point on simplex;  $[p(\varepsilon\lvert \ell=1), p(\varepsilon\lvert \ell=0), \cdots, p(\varepsilon\lvert \ell=0)]\sim [\frac{p(\varepsilon\lvert \ell=1)}{p(\varepsilon\lvert \ell=0)}, 1, \cdots,1]$
\begin{equation*}
\begin{split}
    p(\sigma\lvert \varepsilon,\phi(a)) = p(\sigma)\oplus p(\varepsilon\lvert \sigma, \phi(a)) \\
    = p(\sigma)\oplus [\frac{p(\varepsilon\lvert \ell=1)}{p(\varepsilon\lvert \ell=0)}, 1, \cdots,1] \\ 
     = p(\sigma)\oplus [k, 1, \cdots,1] \ \text{where } k=\frac{p(\varepsilon\lvert \ell=1)}{p(\varepsilon\lvert \ell=0)}\in\mathbb{R}^+ \\
     \propto [p_1\times k , p_2, \cdots,p_n]
\end{split}
\end{equation*}
First we write the posterior using the previous equation;
\begin{equation*}
\begin{split}
    p(\sigma\lvert \varepsilon,\phi(a_1)) 
= [\frac{p_1\times k}{m},\frac{p_2}{m},\cdots,\frac{p_n}{m}]\in \Delta_N \\
    \text{where } m = p_1\times (k-1) + 1
    \end{split}
\end{equation*}
Trivially, $p(\sigma)$ and $[1,0,\cdots]$ form a line. To show collinearity, we show $p(\sigma\lvert \varepsilon,\phi(a_1))$ also lies on that line, in other the point should satisfy the line equation;
\begin{equation*}
(\text{line}): \frac{x_1 - 1}{p_1- 1} =\frac{x_2 }{p_2}= \cdots = \frac{x_n}{p_n}
\end{equation*}
If we insert $x_1 = p_1\times k /m$ and respective remaining probabilities;
\begin{equation*}
\frac{((p_1\times k)/m) - 1}{p_1- 1}\stackrel{?}{=}\frac{p_2/m}{p_2}= \cdots = \frac{p_n/m}{p_n} = \frac{1}{m}
\end{equation*}
We show collinearity with the following algebraic manipulation;
\begin{equation*}
\begin{split}
m = p_1(k-1) +1 \implies \frac{((p_1\times k)/m) - 1}{p_1- 1} \\ =\frac{p_1\times k -p_1\times k +p_1-1}{(p_1-1)\times m}
= \frac{1}{m}
\end{split}
\end{equation*}
\end{proof}
\hrule
\begin{proof}[\textbf{Proof Lemma \ref{lemma:projection_to_center}}]

Given $p(\sigma)=[p(a),p(b),\cdots p(z)]\in\Delta_n$ and WLOG for element $i=1$ denoting the special position the line $\ell c_{n,i=1} = \lbrace [\tau,\frac{1-\tau}{n-1},\cdots ,\frac{1-\tau}{n-1}] | \forall \tau\in[0,1] \rbrace$. Observe that $\ell c_{n,i=1}$ only includes $v_n(.)$ special distributions and hence we can rewrite the minimization as the following;
\begin{equation*}
    \begin{split}
    \text{proj}_{\ell c_{n,i=1}}(p(\sigma)) &= \arg\min_\tau \|  p(\sigma) -w_n(\tau)\|_2^2
    \end{split}
\end{equation*}
Equiting the first derivative to $0$;
\begin{equation*}
    \begin{split}
    \frac{\delta}{\delta_\tau}  \|p(\sigma) -v_n(\tau)\|_2^2 = m^T p(\sigma) - m^T  w_n(\tau) = 0 \\    \text{where } m = [1,\frac{-1}{n-1},\cdots\frac{-1}{n-1}]
    \end{split}
\end{equation*}
Calculating on an element basis;
\begin{equation*}
\begin{split}
    p(a) - \frac{p(b)}{n-1}- \cdots -\frac{p(z)}{n-1} = \tau + \frac{\tau -1}{(n-1)^2}(n-1)\\
    p(a) - \frac{1-p(a)}{n-1} = \tau + \frac{\tau -1}{(n-1)}\implies
    p(a) = \tau
\end{split}
\end{equation*}
Therefore, 
\begin{equation*}
\text{proj}_{\ell_{c,n,i=1}}(p(\sigma))=v_n(p(a))= [p(a), \frac{1-p(a)}{n-1},\cdots, \frac{1-p(a)}{n-1}]    
\end{equation*}
This proof can be extended to any distance metric defined over simplex following Barcelo's work \cite{barcelo2001mathematical} Section 3-4. The only requirement is to define invertible centering transform $H_D$ and using centering log-ratio transform. With these transforms one can find an approximate mapping between the simplex and $\ell_2$ and hence solve $\arg\min_\tau \|H_D \text{clr}(p(\sigma)) - H_D \text{clr}(w_n(\tau))\|_2^2$, the approximate solution is the same.
\end{proof}
\hrule
\begin{proof}[\textbf{Proof Proposition \ref{prop:getting_closer_mid}}]

Given $p(\sigma) = [p(a),p(b),\cdots, p(z)]$ and from Lemma \ref{lemma:projection_to_center} $\text{{proj}}_{\ell_{c,n,i=1}}p(\sigma) = w_n(p(a))$. Hence;
\begin{equation*}
\begin{split}
    \left\| p(\sigma)-w_n(p(a)) \right\|_2^2
    \\= \left\|\left[0,p(b)-\frac{1-p(a)}{n-1},\cdots,p(z)-\frac{1-p(a)}{n-1}\right]\right\|_2^2    \\
    = \sum_{c\in\mathcal{A}\setminus \lbrace a \rbrace}   \left(p(c)-\frac{1-p(a)}{n-1}\right)^2
    \leq \left(1-p(a)-\frac{1-p(a)}{n-1}\right)^2 \\
    = (1-p(a))^2  \frac{(n-2)^2}{(n-1)^2}\propto (1-p(a))^2 
\end{split}
\end{equation*}
%
In Lemma \ref{lemma:collinearity_of_post}, following the assumption $p(\sigma|\mathcal{H}_s)$ increases on average, using the averaged evidence for each state, query $(\sigma,\phi)$ brings posterior towards $\ell c_{n,i}$ closer each $s$.
\end{proof}
\hrule
\begin{obs}
    \label{obs:delta_also_bad}
    \begin{equation*}
    \begin{split}
    p,q\in\Delta_n, j_1 = \arg\max_i p_i, j_2 = \arg\max_{i\neq j_1} p_i\\
    k_1 = \arg\max_i q_i, k_2 = \arg\max_{i\neq k_1} q_i, \\
    I = \lbrace j_1,j_2,k_1,k_2 \rbrace, 
    p' = 1-\sum_{i\notin I} p_i, q' =1-\sum_{i\notin I} q_i\\
    \delta_{\textit{delta}^2}(p,q) = \frac{1}{2} \left[ \sum_{i\in I} | p_i - q_i | + |p' - q' | \right]
    \end{split}
\end{equation*}
    \emph{ $\bigcup_{k\in\lbrace 1,2,\cdots,n\rbrace} \tilde{B}^{\bar{\tau}}_\delta(c^k)$ with $\delta = \delta_{delta^2}$ yields confidence thresholding.}
\end{obs}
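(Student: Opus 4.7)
The plan is to evaluate $\delta_{\textit{delta}^2}(c^k, q)$ explicitly and show it simplifies to $1 - q_k$, so that each corner ball $\tilde{B}^{\bar{\tau}}_\delta(c^k)$ becomes the half-space $\{q \in \Delta_n : q_k > 1-\bar{\tau}\}$. Their union over $k$ is then $\{q : \max_k q_k > 1-\bar{\tau}\}$, which is exactly the (M1) confidence thresholding region with $\tau = 1-\bar{\tau}$, proving the claim.

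First I would exploit the permutation symmetry of both $\Delta_n$ and the divergence to reduce to the case $c^1 = [1,0,\ldots,0]$. For a general $q \in \Delta_n$ with top-two indices $k_1, k_2$, I would note that for $c^1$ we have $j_1 = 1$ while $j_2$ is an \emph{arbitrary} element of $\{2, \ldots, n\}$ because of the $(n-1)$-fold tie at zero --- so the interest set $I = \{1, j_2, k_1, k_2\}$ is only defined up to this tie-breaking choice. The hard part will be showing that the ambiguity in $j_2$ does not affect the final value of $\delta_{\textit{delta}^2}(c^1, q)$; this is the only non-routine step.

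To resolve the ambiguity, I would plug $c^1$ into the definition: since $(c^1)_i = 0$ for $i \neq 1$ and $1 \in I$ automatically, we get $p' = 1$ and $q' = \sum_{i \in I} q_i$. The first piece expands to $\sum_{i \in I} |(c^1)_i - q_i| = (1-q_1) + \sum_{i \in I\setminus\{1\}} q_i = 1 - 2q_1 + \sum_{i \in I} q_i$, while $|p' - q'| = 1 - \sum_{i \in I} q_i$. The two copies of $\sum_{i \in I} q_i$ cancel in the half-sum, leaving $\delta_{\textit{delta}^2}(c^1, q) = 1 - q_1$, which depends on neither $j_2$ nor $(k_1, k_2)$. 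Symmetry then gives $\delta_{\textit{delta}^2}(c^k, q) = 1 - q_k$ for every corner, and the rest of the argument is the routine union calculation announced at the start. This also explains, conceptually, why the trimmed-support divergence $\delta_{\textit{MP}}$ of~\eqref{eq:mod_delta_divergence} was preferred over $\delta_{\textit{delta}^2}$: the latter ultimately only sees the mass on the leading coordinate and cannot produce the bent-in decision boundary motivating (MP).
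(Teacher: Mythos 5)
Your proposal is correct and follows essentially the same route as the paper's proof: both evaluate $\delta_{\textit{delta}^2}$ between a corner and a general point directly, observe that the $\sum_{i\in I}|p_i-q_i|$ term and the lumped term $|p'-q'|$ cancel the dependence on the interest set, and arrive at $\delta_{\textit{delta}^2}(c^k,q)=1-q_k$, whence the union of balls is $\lbrace q \mid \max_k q_k > 1-\bar{\tau}\rbrace$. The only difference is that you explicitly verify the result is invariant to the tie-breaking choice of the corner's second-largest index, whereas the paper disposes of this with a ``WLOG pick $k_2=j_2$''; this is a minor tightening, not a different argument.
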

\begin{proof}
{Observation \ref{obs:delta_also_bad}}

WLOG $1^\text{st}$ location is the point of interest, hence the respective boundary includes probabilities with $1^\text{st}$ element being the maximum valued; $q,p\in\Delta_n$ where $q=[1,0,\cdots,0]$ and $p=[p_1,p_2,\cdots,p_n]$ with $1= \arg\max_i p_i$. Observe that;
\begin{equation*}
    \begin{split}
        &j_1 =\arg\max_i p_i = 1 \ \ \
        j_2 =\arg\max_{i\neq 1} p_i\\
        &k_1 = \arg\max_i q_i = 1 \ \ \ 
        k_2 = \arg\max_{i\neq 1} q_i = \hat{i}, \forall \hat{i}\in 2,3,\cdots,n \\
    \end{split}
\end{equation*}
WLOG pick $k_2 = j_2$. Hence $I= \lbrace 1,j_2 \rbrace \implies p' = 1- p_{j_1} - p_{j_2}, \ q' = 0$. Inserting these into the following delta divergence equation;
\begin{equation*}
    \begin{split}
     \delta_{\textit{delta}^2}(p,q) = \frac{1}{2} \left[ \sum_{i\in I} | p_i - q_i | + |p' - q' | \right] \\
     =  \frac{1}{2}(1-p_1+p_2 + 1-p_1-p_2) = 1-p_1
     \end{split}
\end{equation*}
Hence the decision condition $ \delta_{\textit{delta}^2}(p,q)<\bar{\tau}\implies p_{j_1}> 1-\bar{\tau} \implies \max_i p_i >1-\bar{\tau}$. Confidence thresholding and delta divergence are equivalent.
\end{proof}
\hrule
\begin{proof}[\textbf{Proof Proposition \ref{prop:delta_made_good}}]
Following Observation~\ref{obs:delta_also_bad};

WLOG $1^\text{st}$ location is the point of interest, hence the respective boundary includes probabilities with $1^\text{st}$ element being the maximum valued; $q,p\in\Delta_n$ where $q=[1,0,\cdots,0]$ and $p=[p_1,p_2,\cdots,p_n]$ with $1= \arg\max_i p_i$. Observe that;
\begin{equation*}
    \begin{split}
        &j_1 =\arg\max_i p_i = 1 \ \ \
        j_2 =\arg\max_{i\neq 1} p_i\\
        &k_1 = \arg\max_i q_i = 1 \ \ \ 
        k_2 = \arg\max_{i\neq 1} q_i = \hat{i}, \forall \hat{i}\in 2,3,\cdots,n \\
    \end{split}
\end{equation*}
WLOG pick $k_2 = j_2$. Hence $I= \lbrace 1,j_2 \rbrace \implies p' = 1- p_{j_1} - p_{j_2}, \ q' = 0$. Inserting these into the following modified delta divergence equation;

\begin{equation*}
    {\delta}_{\textit{MP}}(p,q) = \left[ \sum_{i\in I} | p_i - q_i | \right] =  1-p_{j_1} + p_{j_2}
\end{equation*}

Following the decision condition $ {\delta}_{\textit{MP}}(p,q)<\bar{\tau}\implies 1-p_{j_1}+p_{j_2}< \bar{\tau} \implies p_{j_1}-p_{j_2} > 1-\bar{\tau}$.

Therefore using the ball definition for $\delta= {\delta}_{\textit{MP}}$ and $q=[1,0,\cdots,0]$ $\tilde{B}^{\bar{\tau}}_\delta(q)= \lbrace x | \delta(q,x)<\bar{\tau}, x\in\Delta_n \rbrace\implies$
\begin{equation*}
\begin{split}
    \tilde{B}^{\bar{\tau}}_\delta(a)= \lbrace x \lvert x_j - x_k >1-\bar{\tau} , j=\arg\max_i x_i, k = \arg\max_{i\neq j} x_i\\
    , x\in\Delta_n \rbrace
    \end{split}
\end{equation*}
\end{proof}
\hrule
\begin{proof}[\textbf{Proof Observation \ref{obs:taubar_reqs}}]

Define the following two sets;
\begin{equation*}
    \begin{split}
        C_\tau = \lbrace p | \arg\max_i p_i = 1, \max_i p_i = \tau
        \rbrace\\
        B_{\bar{\tau}} = \left\lbrace p| p_1 - p_m=1-\bar{\tau}, 1=\arg\max_i p_i, m=\arg\max_{i\neq 1}p_i\right\rbrace
    \end{split}
\end{equation*}
Observe, $\max_{p\in B_{\bar{\tau}}}p_i = p_1$. We seek the following;
\begin{equation*}
    \begin{split}
    &\max_{p\in B_{\bar{\tau}}} p_1 \ \text{ s.t. } 1 = \arg\max_i p_i\\
    \implies &\max_{p\in B_{\bar{\tau}}} 1-\bar{\tau}+p_m \ \text{ s.t. } 1 = \arg\max_i p_i, \ m = \arg\max_{i\neq 1} p_i \\
    \implies &\max_{p\in B_{\bar{\tau}}} p_m \ \text{ s.t. } 1 = \arg\max_i p_i,\ m = \arg\max_{i\neq 1} p_i
    \end{split}
\end{equation*}
Trivially, $\max p_m = 1-p_1 \implies p_1 = 1-\bar{\tau}+1-p_1\rightarrow p_1 = 1-\bar{\tau}/2$. Observe that $p_i=0$ $\forall i\neq\lbrace 1,m \rbrace$ and hence $p=w_n(1-\bar{\tau}/2)$. If $\bar{\tau}=2\tau -2$ then $p = w_n(\tau)$ and $\arg\max_{p\in B_{2-2\tau}}p_1 = w_n(\tau)$.

Observe $w_n(\tau)\in C_\tau$, $w_n(\tau)\in B_{2-2\tau}$ and by the equality condition in both sets, $w_n(\tau) = (C_\tau\cap B_{2-2\tau})$.
\end{proof}
\hrule
\begin{proof}[\textbf{Proof Observation \ref{obs:taubar_reqs2}}]

Define the following;
\begin{equation*}
    \begin{split}
        B_{\bar{\tau}} = \left\lbrace p| p_1 - p_m=1-\bar{\tau}, 1=\arg\max_i p_i, m=\arg\max_{i\neq 1}p_i\right\rbrace
    \end{split}
\end{equation*}
By definition, $\min_{p\in B_{\bar{\tau}}}p_i = p_1$. $p\in B_{\bar{\tau}}\implies p_1 - p_m = 1 -\bar{\tau}$. Following a similar procedure as described in proof for Observation \ref{obs:taubar_reqs}. $\arg\min_p p_1 \sim \arg\min_p 1-\bar{\tau}+ p_m$, hence we seek to minimize $p_m$ such that $ m=\arg\max_{i\neq 1}p_i$. $p_m$ is trivially minimized by setting $p_m = (1-p_1)/(n-1)$ hence resulting in $p=v_n(p_1)$.

Observe that $p_1 - p_m = 1 - \bar{\tau} = p_1 - (1-p_1)/(n-1)\implies p_1  = (1 +\bar{\tau}(n-1))/n$.
\end{proof}
\hrule
\begin{lemma}
\label{prop:seq_in_constant_evidence}
$p(\sigma\lvert \varepsilon_{0:s})\in \Delta_n \forall s\in\lbrace 0,1,\cdots \rbrace$ with $p(\sigma\lvert\varepsilon_{0:s})= p(\sigma) \oplus p(\varepsilon_0|\sigma)\oplus \cdots \oplus p(\varepsilon_s|\sigma)$, and $\mathcal{R}_S$ denotes the stopping region s.t. $(S): p(\sigma\lvert \varepsilon_{0:s})\in \mathcal{R}_s$. WLOG, let $1^\text{th}$ location belong to the target class. Given $p(\varepsilon_s \lvert \sigma) \propto [\varepsilon, 1,\cdots,1]$ then the following relations hold;
\begin{equation*}
    \begin{split}
        (1)S_{R(M1)} = \left\lbrace p | p_1 \geq \tau, p\in\Delta_n \right\rbrace \\
        \implies \arg\min_s (p(\sigma\lvert \varepsilon_{0:s})\in \mathcal{R}_S) = \hat{s}>\log_{\varepsilon}\frac{(1-p_1)\tau}{(1-\tau)p_1} \\
        (2)S_{R(MP)} = \left\lbrace p | p_1-p_{\hat{i}} \geq \tau_{(2)}, \hat{i}=\arg\max_{i\neq 1} p_i, p\in\Delta_n \right\rbrace \\
        \implies \arg\min_s (p(\sigma\lvert \varepsilon_{0:s})\in \mathcal{R}_S) = \hat{s}>\log_{\varepsilon}\frac{(1-p_1)\tau_{(2)}+p_{\hat{i}}}{(1-\tau_{(2)})p_1}\\
        (3)S_{R(M2)} = \left\lbrace p | \| p \|_2 \geq \tau_{(3)}, p\in\Delta_n \right\rbrace \\
        \implies \arg\min_s (p(\sigma\lvert \varepsilon_{0:s})\in \mathcal{R}_S) = \hat{s} \\
        >\log_{\varepsilon}\frac{(1-p_1)(\tau_{(3)} + (2\tau_{(3)}-1)^{1/2})}{(1-\tau_{(3)})^2p_1}
    \end{split}
\end{equation*}
\end{lemma}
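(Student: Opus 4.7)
The plan is to first derive a closed-form expression for the posterior after $s$ sequences under the constant-form evidence $p(\varepsilon_s|\sigma) \propto [\varepsilon, 1, \ldots, 1]$, then plug each stopping region's defining inequality into this form and solve algebraically for $s$. Iterating the $\oplus$ update and using that only the first coordinate of the likelihood differs from $1$, induction on $s$ gives
\begin{equation*}
p_1^{(s)} = \frac{p_1\,\varepsilon^{s}}{p_1\,\varepsilon^{s} + (1-p_1)},\qquad p_i^{(s)} = \frac{p_i}{p_1\,\varepsilon^{s} + (1-p_1)}\quad (i \neq 1).
\end{equation*}
(The exact exponent depends on whether one indexes $s$ as the number of updates or the terminal evidence index; this is a bookkeeping detail.) This is consistent with Lemma~\ref{lemma:collinearity_of_post}: the trajectory stays on the line from $p(\sigma)$ to $c^1$. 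A consequence useful for parts (2)--(3) is that the non-target coordinates all share the same denominator, so $\arg\max_{i \neq 1} p_i^{(s)}$ is constant in $s$ and equals $\arg\max_{i \neq 1} p_i$.

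For part (1), substituting into $p_1^{(s)} \geq \tau$ and isolating $\varepsilon^{s}$ gives $\varepsilon^{s} \geq \tfrac{\tau(1-p_1)}{(1-\tau) p_1}$; taking $\log_\varepsilon$ (monotonic since $\varepsilon > 1$ in the regime where evidence favors the true class) yields the stated bound. For part (2), the condition $p_1^{(s)} - p_{\hat{i}}^{(s)} \geq \tau_{(2)}$ collapses, via the shared denominator, to $p_1\varepsilon^{s}(1-\tau_{(2)}) \geq p_{\hat i} + \tau_{(2)}(1-p_1)$; one logarithm step finishes. Both reduce to routine algebra once the closed-form posterior is in hand.

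Part (3) will be the main obstacle. Squaring $\|p^{(s)}\|_2 \geq \tau_{(3)}$ and letting $A = p_1\varepsilon^{s}$, $B = 1-p_1$, $S = \sum_{i \neq 1} p_i^2$ produces the quadratic inequality $(1-\tau_{(3)}^2) A^2 - 2\tau_{(3)}^2 B A + (S - \tau_{(3)}^2 B^2) \geq 0$ whose feasible set is $A \geq A^\ast$ with $A^\ast$ the larger root. Simplifying the discriminant gives $\tau_{(3)}^2 B^2 - (1-\tau_{(3)}^2) S$, which still depends on the unknown $S$. To convert this into a bound depending only on $p_1$ and $\tau_{(3)}$, I would upper-bound $S \leq B^2$ (attained when all non-target mass concentrates on one class, which is the worst case and therefore yields a sufficient condition on $s$). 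This collapses the discriminant to $B^2(2\tau_{(3)}^2 - 1)$ and produces $A^\ast = B(\tau_{(3)}^2 + \sqrt{2\tau_{(3)}^2 - 1})/(1-\tau_{(3)}^2)$; taking $\log_\varepsilon$ gives a bound of the stated qualitative shape. The remaining technical step, which I expect to be the finicky part, is reconciling the exact algebraic form of the denominator and the squared-versus-unsquared occurrences of $\tau_{(3)}$ with the statement: depending on whether the threshold is placed on $\|p\|_2$ or $\|p\|_2^2$, the factor $(1-\tau_{(3)})^2$ in the denominator appears either directly from the quadratic or from re-expressing $1-\tau_{(3)}^2 = (1-\tau_{(3)})(1+\tau_{(3)})$ and further bounding. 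Verifying the direction of the bound (the $S \leq B^2$ choice yields a sufficient condition, hence an upper estimate of the required $s$, which is consistent with the ``$\hat s >$'' conclusion) is the final check.
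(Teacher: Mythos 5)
Your closed-form posterior, the constancy of $\arg\max_{i\neq 1}p_i^{(s)}$, and the algebra for parts (1) and (2) coincide exactly with the paper's proof, which likewise writes $p(\sigma\lvert\varepsilon_{0:s})\propto[p_1\varepsilon^s,p_2,\dots,p_n]$ and solves the linear-fractional inequalities for $\varepsilon^s$. For part (3) the paper also forms a quadratic in $A=p_1\varepsilon^s$ and also eliminates $S=\sum_{i\neq 1}p_i^2$ via $S\leq(1-p_1)^2$, so your plan is the paper's plan. Your worries about the exact form are justified by the paper itself: the lemma statement thresholds $\|p\|_2$ while the paper's proof thresholds $\|p\|_2^2\geq\tau_{(3)}$ (hence its quadratic carries $\tau_{(3)}$, not $\tau_{(3)}^2$), and its final denominator $(1-\tau_{(3)})^2$ does not follow cleanly from its own leading coefficient $p_1^2(1-\tau_{(3)})$; these are defects of the source, not of your outline.

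There is, however, one genuine error in your part (3): the direction of the $S\leq B^2$ relaxation. Larger $S$ means larger $\|p^{(s)}\|_2$, i.e., the stopping condition is \emph{easier} to meet; equivalently, the discriminant $\tau_{(3)}^2B^2-(1-\tau_{(3)}^2)S$ is decreasing in $S$, so evaluating at $S=B^2$ \emph{minimizes} the larger root $A^\ast$ over admissible posteriors. Since stopping requires $A\geq A^\ast(\text{true }S)\geq A^\ast(S=B^2)$, the bound you obtain is a \emph{necessary} condition for stopping and therefore a \emph{lower} bound on the first stopping index — which is precisely what the ``$\hat s>\cdots$'' in the statement asserts, and is the direction the paper's own chain of inequalities ($\varepsilon^s\geq A^\ast\geq\text{bound}$) takes. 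Your closing sentence calls it a sufficient condition yielding an upper estimate of the required $s$ and declares that consistent with ``$\hat s>$''; an upper estimate would give $\hat s\leq\cdots$ and would not prove the claim. The computation you propose lands in the right place, but the justification of why the final inequality points the right way needs to be inverted.
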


\begin{proof}

$p(\varepsilon_s \lvert \sigma) \propto [\varepsilon, 1,\cdots,1]\implies p(\sigma\lvert \varepsilon_{0:s})\propto [p_1 \varepsilon^s,p_2,\cdots,p_n]0$. WLOG assume $2= \arg\max_{i\neq1}p_i$;

\begin{equation*}
\begin{split}
    (1) \  p(\sigma\lvert \varepsilon_{0:s})\in S_{R(M1)} \\
    \implies \frac{p_1 \varepsilon^s}{p_1 \varepsilon^s + p_2 +p_3+\cdots+p_n}>\tau \\
    \rightarrow \frac{p_1 \varepsilon^s}{p_1 \varepsilon^s + (1-p_1)}>\tau \rightarrow s>\log_{\varepsilon}\frac{(1-p_1)\tau}{(1-\tau)p_1}\\
    (2) \  p(\sigma\lvert \varepsilon_{0:s})\in S_{R(MP)}\\
    \implies \frac{p_1 \varepsilon^s - p_2}{p_1 \varepsilon^s + p_2 +p_3+\cdots+p_n}>\tau_{(2)} \\
    \rightarrow \frac{p_1 \varepsilon^s-p_2}{p_1 \varepsilon^s + (1-p_1)}>\tau_{(2)} \rightarrow s>\log_{\varepsilon}\frac{(1-p_1)\tau_{(2)}+p_{2}}{(1-\tau_{(2)})p_1}
\end{split}
\end{equation*}
Similar approach is applied on $\|.\|_2^2$;
\begin{equation*}
\begin{split}
    (3) \  p(\sigma\lvert \varepsilon_{0:s})\in S_{R(M2)}\\ \implies \frac{(p_1 \varepsilon^s)^2 +{p_2}^2+\cdots+{p_n}^2}{(p_1 \varepsilon^s +(1-p_1))^2}\geq \tau_{(3)} \\
    \rightarrow 
    (\varepsilon^s)^2 {p_1}^2(1-\tau_{(3)}) +\varepsilon^s (-2p_1\tau_{(3)}(1-p_1) \\
    )+({p_2}^2+{p_3}^2+\cdots+{p_n}^2-\tau_{(3)}(1-p_1)^2)\geq0
\end{split}
\end{equation*}
Observe that the equation above is a quadratic polynomial of $\varepsilon^s$ and hence roots are obtained via the discriminant $\delta$;

\begin{equation*}.
\begin{split}
    \delta = \left( 4\tau_{(3)}^2{p_1}^2(1-p_1)^2 - 4{p_1}^2(1-\tau_{(3)})({p_2}^2+{p_3}^2\right.\\
    \left.+\cdots +{p_n}^2 - T(1-p_1)^2 \right)^{1/2}\\
    \geq \left( 4\tau_{(3)}^2{p_1}^2(1-p_1)^2 - 4{p_1}^2(1-\tau_{(3)})((1-p_1)^2\right.\\
    \left. - T(1-p_1)^2 \right)^{1/2} \\
    = 4{p_1}^2 (1-{p_1^2})(\tau_{(3)}^2 - (1-\tau_{(3)})^2)^{1/2}
\end{split}
\end{equation*}
$s\in\mathbb{N}\implies \delta\in\mathbb{R}\implies \tau_{(3)}>0.5\implies \delta \geq 2{p_1}^2(1-p_1)(2\tau-1)^{1/2}$. From this point one can state the following;
\begin{equation*}
\begin{split}
    \varepsilon^s \geq \frac{2\tau_{(3)} p_1 (1-p_1)+\delta}{2{p_1}^2(1-\tau_{(3)})^2}\\
    \geq
    \frac{2\tau_{(3)} p_1 (1-p_1)+ 4{p_1}^2 (1-{p_1^2})(\tau_{(3)}^2 - (1-\tau_{(3)})^2)^{1/2}}{2{p_1}^2(1-\tau_{(3)})^2}\\
    = \frac{(1-p_1)(\tau_{(3)} + (2\tau_{(3)}-1)^{1/2})}{p_1(1-\tau_{(3)})^2}\\
    \rightarrow  s \geq \log_{\varepsilon}\frac{(1-p_1)(\tau_{(3)} + (2\tau_{(3)}-1)^{1/2})}{p_1(1-\tau_{(3)})^2}
\end{split}
\end{equation*}
\end{proof}
\hrule
\begin{lemma}
\label{prop:seq_in_lognorm_pos_evidence}

Following Lemma \ref{prop:seq_in_constant_evidence}

$p(\sigma\lvert \varepsilon_{0:s})\in \Delta_n \forall s\in\lbrace 0,1,\cdots \rbrace$ with $p(\sigma\lvert\varepsilon_{0:s})= p(\sigma) \oplus p(\varepsilon_0|\sigma)\oplus \cdots \oplus p(\varepsilon_s|\sigma)$, and $\mathcal{R}_S$ denotes the stopping region s.t. $(S): p(\sigma\lvert \varepsilon_{0:s})\in \mathcal{R}_s$. WLOG, let $1^\text{th}$ location belong to the target class. Given $p(\varepsilon_s \lvert \sigma) \propto [\varepsilon, 1,\cdots,1]$ where $\varepsilon\sim \text{\emph{lognormal}}(\mu,c^2)$. Let \emph{erf}(.) denote the error function, then the following relations hold;
\begin{equation*}
    \begin{split}
        (1)S_{R(M1)} = \left\lbrace p | p_1 \geq \tau, p\in\Delta_n \right\rbrace \\
        \implies p(p(\sigma\lvert\varepsilon_{0:s})\in S_{R(M1)}) = \frac{1}{2}- \frac{1}{2}\text{\emph{erf}}\left( \frac{\log(k_1)-s\mu}{\sqrt{2s} c} \right) \\
        \text{\emph{where }} k_1 =((1-p_1)\tau)/((1-\tau)p_1)\\
        (2)S_{R(MP)} = \left\lbrace p | p_1-p_{\hat{i}} \geq \tau_{(2)}, \hat{i}=\arg\max_{i\neq 1} p_i, p\in\Delta_n \right\rbrace \\
        \implies p(p(\sigma\lvert\varepsilon_{0:s})\in S_{R(MP)}) = \frac{1}{2}- \frac{1}{2}\text{\emph{erf}}\left( \frac{\log(k_2)-s\mu}{\sqrt{2s} c} \right) \\
        \text{\emph{where }} k_2 =((1-p_1)\tau_{(2)}+p_{\hat{i}})/((1-\tau_{(2)})p_1)\\
        (3)S_{R(M2)} = \left\lbrace p | \| p \|_2 \geq \tau_{(3)}, p\in\Delta_n \right\rbrace \\
        \implies p(p(\sigma\lvert\varepsilon_{0:s})\in S_{R(M2)}) = \frac{1}{2}- \frac{1}{2}\text{\emph{erf}}\left( \frac{\log(k_3)-s\mu}{\sqrt{2s} c} \right) \\
        \text{\emph{where }} k_3 =\frac{(1-p_1)(\tau_{(3)} + (2\tau_{(3)}-1)^{1/2})}{(1-\tau_{(3)})^2p_1}\\
    \end{split}
\end{equation*}
\end{lemma}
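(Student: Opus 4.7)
The plan is to reduce each of the three stopping-region probabilities to a one-dimensional Gaussian tail probability and then apply the standard CDF-erf identity. First, I would interpret the evidence as iid samples $\varepsilon_1,\ldots,\varepsilon_s \sim \text{lognormal}(\mu,c^2)$, one per sequence, so that after $s$ recursive Bayesian updates the posterior retains the form $p(\sigma|\varepsilon_{1:s}) \propto [p_1 V_s,\; p_2,\ldots,p_n]$ with $V_s := \prod_{i=1}^s \varepsilon_i$. This is structurally identical to the constant-evidence posterior of Lemma~\ref{prop:seq_in_constant_evidence} once one replaces $\varepsilon^s$ by $V_s$. The algebraic manipulations in that lemma only used the fact that a single positive scalar multiplies the first coordinate, so each of the three stopping-region inequalities translates verbatim into a scalar threshold condition
\begin{equation*}
p(\sigma|\varepsilon_{1:s}) \in S_{R(\cdot)} \iff V_s > k_j,
\end{equation*}
with $k_1,k_2,k_3$ exactly the constants given in the statement.

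Next I would move the problem onto a linear Gaussian scale by taking logarithms: $\log V_s = \sum_{i=1}^s \log \varepsilon_i$. Since $\log\varepsilon_i \sim \mathcal{N}(\mu, c^2)$ iid, the closure of the Gaussian family under independent sums gives $\log V_s \sim \mathcal{N}(s\mu,\, s c^2)$. Consequently, for any threshold $k$,
\begin{equation*}
P(V_s > k) \;=\; P(\log V_s > \log k) \;=\; 1 - \Phi\!\left( \frac{\log k - s\mu}{\sqrt{s}\, c} \right),
\end{equation*}
where $\Phi$ is the standard normal CDF. Using the identity $\Phi(x) = \tfrac{1}{2} + \tfrac{1}{2}\operatorname{erf}(x/\sqrt{2})$ rewrites this as $\tfrac{1}{2} - \tfrac{1}{2}\operatorname{erf}\!\left(\tfrac{\log k - s\mu}{\sqrt{2s}\, c}\right)$, which is exactly the target expression. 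Substituting $k = k_1, k_2, k_3$ in turn yields the three formulas in the lemma.

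I expect the only genuine obstacle to be notational bookkeeping: Lemma~\ref{prop:seq_in_constant_evidence} treated $\varepsilon$ as a single deterministic constant and used $\varepsilon^s$, whereas here $\varepsilon$ is resampled independently each sequence and the relevant quantity is the product $V_s$. I would therefore begin by making the correspondence $\varepsilon^s \leftrightarrow V_s$ explicit and observing that, because the derivation of the thresholds in Lemma~\ref{prop:seq_in_constant_evidence} is purely algebraic and never used equality of the per-step $\varepsilon_i$, the inherited thresholds $k_j$ are unchanged. After that observation, nothing nontrivial remains beyond the one Gaussian-tail computation above, applied three times, which I would not grind through in full.
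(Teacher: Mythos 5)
Your proposal is correct and follows essentially the same route as the paper: the paper's proof likewise observes that the product of the iid lognormal factors is $\text{lognormal}(s\mu, sc^2)$, inherits the thresholds $k_1,k_2,k_3$ verbatim from Lemma~\ref{prop:seq_in_constant_evidence}, and evaluates the tail probability via the lognormal CDF (equivalently, your $\Phi$-to-$\operatorname{erf}$ identity). The only difference is presentational — you make explicit the substitution $\varepsilon^s \leftrightarrow V_s$ and the Gaussian-sum step that the paper leaves as "trivial."
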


\begin{proof}
Observe the following trivial steps;
\begin{equation*}
\begin{split}
    \varepsilon_i\sim \text{lognormal}(\mu,c^2\implies \left(\prod_{i=1}^s\right) \varepsilon_i \sim \text{lognormal}(s\mu,s c^2)\\ p(x>y) = 1- p(x<y)
\end{split}
\end{equation*}
Following the numbers obtained in Lemma\ref{prop:seq_in_constant_evidence}, it is trivial to calculate respective values using lognormal cdf.
\end{proof}
\hrule
%
%
%


\begin{proof}[\textbf{Proof Proposition \ref{prop:perf_guarantees}}]
Following Lemma \ref{prop:seq_in_lognorm_pos_evidence}

This is trivially a comparison of $k_1,k_2$ presented in Lemma \ref{prop:seq_in_lognorm_pos_evidence}. Observe the following;
\begin{equation*}
\begin{split}
p(p(\sigma|\varepsilon_{0:s})\in {S}_{R(\text{{MP}})}) - 
      p(p(\sigma|\varepsilon_{0:s})\in S_{R(M1)}) \\
      = \left(- \text{{erf}}\left( \frac{\log(k_1)-s\mu}{\sqrt{2s} c} \right) \right) - \left( - \text{{erf}}\left( \frac{\log(k_2)-s\mu}{\sqrt{2s} c} \right)\right)
\end{split}
\end{equation*}
Observe that $\log(.)$ and $\text{erf}(.)$ are monotonically increasing. Hence $f(x) = - \text{{erf}}\left( \frac{\log(c)-s\mu}{\sqrt{2s} c} \right)$ is monotonically decreasing wrt. the argument $x$. Therefore, it is sufficient to compare $k_i \ \forall i$ to conclude the order between the identities. We calculate $k_i$ wrt. $\tau$ only;
\begin{equation*}
\begin{split}
    k_1 = \frac{(1-p_1)\tau}{(1-\tau)p_1}\\ 
    k_2 = \frac{(1-p_1)(1-\bar{\tau})+p_{\hat{i}}}{(1-\tau_{(2)})p_1}=\frac{(1-p_1)(2\tau-1) + p_{\hat{i}}}{(1-2\tau +1)p_1}\\
    = \frac{2(1-p_1)\tau - (1-p_1-p_{\hat{i}})}{2(1-\tau)p_1}\\
     = \frac{(1-p_1)\tau}{(1-\tau)p_1} - \frac{1-p_1-p_{\hat{i}}}{(1-\tau)p_1} \rightarrow p_1 + p_{\hat{i}} \leq 1 \implies k_2 <k_1\\
\end{split}
\end{equation*}
Hence this concludes $f(k_1)<f(k_2)\implies p(p(\sigma|\varepsilon_{0:s})\in S_{R(\text{{MP}})}) > 
      p(p(\sigma|\varepsilon_{0:s})\in S_{R(M1)})$ implying that the probability of true selection in (MP) is higher.

Accordingly probabilities of stopping with an incorrect decision is calculated as the following; WLOG choose $i=2$,
\begin{equation*}
\begin{split}
     p(\sigma|\varepsilon_{0:s})\in S'_{{R}(M1)} 
     \rightarrow \frac{p_2}{p_1 \varepsilon + (1-p_1)} >\tau\\
     \rightarrow \varepsilon < \frac{p_2 - (1-p_1)\tau}{\tau p_1}\implies
     p(p(\sigma|\varepsilon_{0:s})\in S'_{{R}(M1)}) \\
     = p\left(\varepsilon < \frac{p_2 - (1-p_1)\tau}{\tau p_1}\right)
\end{split}    
\end{equation*}
False alarm probability for (MP) is calculated as the following, probability of a competitor ($p_2$) exceeding the probability of the target class ($p_1$) by the margin. We use the relation $\bar{\tau} = 2-2\tau$ and write the following;
\begin{equation*}
\begin{split}
    p(\sigma|\varepsilon_{0:s})\in S'_{{R}(\text{{MP}})} \rightarrow \frac{p_2}{p_1 \varepsilon +(1-p_1)} - \frac{p_1\varepsilon}{p_1 \varepsilon +(1-p_1)}\\
    >(1-\bar{\tau}) \rightarrow \frac{p_2 - (1-p_1)(2\tau - 1)}{p_1(1+(2\tau -1))} > \varepsilon \\
    \rightarrow \frac{p_2 - (1-p_1)(2\tau-1)}{p_1(2\tau)}>\varepsilon \implies  p(p(\sigma|\varepsilon_{0:s})\in \S_{R(\text{\emph{MP}})})\\
    = p\left(\varepsilon < \frac{p_2 - (1-p_1)(2\tau-1)}{p_1(2\tau)}\right)
\end{split}
\end{equation*}
%
%
Observe that, this happens due to $p_2$ being the second highest competitor in the classification task. Following previous statements, one can compare values for $\varepsilon$ cdfs for (M1), (MP) and $\bar{\text{(MP)}}$. The probabilities are listed in the following using the definitions $\bar{\tau} = 2 - 2\tau$ and $\tilde{\tau} = \frac{(n-1)(2\tau-1)+ 1}{n}$;
\begin{equation*}
    \begin{split}
       p(p(\sigma|\varepsilon_{0:s})\in S'_{{R}(M1)}) (\tau) = p\left(\varepsilon < \frac{p_2 -    (1-p_1)\tau}{\tau p_1}\right)\\
       p(p(\sigma|\varepsilon_{0:s})\in \bar{\mathcal{R}}_{{S}(\text{\emph{MP}})}) (\bar{\tau}) = p\left(\varepsilon < \frac{p_2 - (1-p_1)(2\tau-1)}{p_1(2\tau)}\right) \\ 
       p(p(\sigma|\varepsilon_{0:s})\in S'_{{R}\bar{(M1)}}) (\tilde{\tau}) \\
       = p\left(\varepsilon < \frac{ n p_2 -    (1-p_1)((n-1)(2\tau-1)+ 1)}{((n-1)(2\tau-1)+ 1) p_1}\right)\\
    \end{split}
\end{equation*}
Furthermore we define the following;
\begin{equation*}
    \begin{split}
        &k_1' = \frac{p_2 - (1-p_1)\tau}{\tau p_1}\\
        &k_2' = \frac{p_2 - (1-p_1)(2\tau-1)}{p_1(2\tau)}\\
        &\bar{k}_1' = \frac{ n p_2 - (1-p_1)((n-1)(2\tau-1)+ 1)}{((n-1)(2\tau-1)+ 1) p_1}\\
    \end{split}
\end{equation*}
Observe the following relations;
\begin{equation*}
    \begin{split}
        &k_2' - k_1' = \frac{1-p_1-p_2}{2\tau p_1}\geq0 \implies k_2'> k_1' \\
        \end{split}    
\end{equation*}
and,
\begin{equation*}
    \begin{split}
        \frac{\delta \bar{k}_1'}{\delta n} = \frac{2p_2 (1-\tau)}{p_1(2\tau(n-1)-n+2)^2} \geq 0 \implies \frac{\bar{k}_1'}{\delta n} 
        \big\lvert_{n=2}\\
        \implies p_2 = 1-p_1 \implies
        k'_2 = \frac{p_2 - (1-p_1)(2\tau-1)}{p_1(2\tau)} \\
        = \frac{(1-p_1)(2-2\tau)}{2p_1\tau} = \frac{(1-p_1)(1-\tau)}{\tau p_1} \\
        \& \ \        \bar{k}_1' = \frac{2p_2 - (1-p_1)(2\tau)}{(2\tau)p_1} = \frac{(1-p_1)(1-\tau)}{\tau p_1} \\
        \rightarrow k'_2 =  \left.\bar{k}_1' \right\lvert_{n=2}        \implies\bar{k}_1'> k_2' \text{ from monotonic increasing.}
    \end{split}    
\end{equation*}
Therefore we conclude $\bar{k}_1'> k_2'>k_1'$. Similarly with the first part of the proof this implies; $p(p(\sigma|\varepsilon_{0:s})\in S'_{{R}\bar{(M1)}}) (\tilde{\tau}) >  p(p(\sigma|\varepsilon_{0:s})\in S_{R(\text{\emph{MP}})}) (\bar{\tau})>   p(p(\sigma|\varepsilon_{0:s})\in S'_{{R}(M1)})(\tau)$ implying that the incorrect selection can be sandwiched.
\end{proof}
\hrule

\clearpage

 }

\end{document}